\def\BState{\State\hskip-\ALG@thistlm}
\newcommand{\stcomp}[1]{\overline{#1}}
\newtheorem{theorem}{Theorem}
\newtheorem{proposition}{Proposition}
\newtheorem{assumption}{Assumption}
\newtheorem{corollary}{Corollary}
\newtheorem{definition}{Definition}
\newtheorem{remark}{Remark}
\DeclareMathOperator*{\argmin}{arg\,min}
\DeclareMathOperator*{\argmax}{arg\,max}
\begin{document}
%
\title{Multi-Armed Bandits on Partially Revealed \\Unit Interval Graphs}
%
%
%
%

\author{Xiao Xu,~\IEEEmembership{Student Member,~IEEE,}
		Sattar Vakili, 
        Qing~Zhao,~\IEEEmembership{Fellow,~IEEE,}
        and~Ananthram~Swami,~\IEEEmembership{Fellow,~IEEE}
\IEEEcompsocitemizethanks{\IEEEcompsocthanksitem Xiao Xu and Qing Zhao are with the School
of Electrical and Computer Engineering, Cornell University, Ithaca,
NY, 14853, USA. E-mail: \{xx243;qz16\}@cornell.edu.
\IEEEcompsocthanksitem Sattar Vakili is with Prowler.io, Cambridge, UK. \protect\\E-mail: sv388@cornell.edu.
\IEEEcompsocthanksitem Ananthram Swami is with the CCDC Army Research Laboratory, Adelphi, MD, 20783, USA. E-mail: a.swami@ieee.org.}
\thanks{This work was supported in part by the Army Research Laboratory Network Science CTA under Cooperative Agreement W911NF-09-2-0053. The work of Qing Zhao was supported in part by the European Union’s Horizon 2020 research and innovation programme under the Marie Skłodowska-Curie grant agreement No 754412, during her visit at Chalmers University, Sweden.}
\thanks{Parts of the work have been presented at the 36th IEEE Military Communication Conference (MILCOM), October, 2017 and the 52nd Asilomar Conference on Signals, Systems and Computers, October, 2018.}}

\IEEEtitleabstractindextext{%
\begin{abstract}
A stochastic multi-armed bandit problem with side information on the similarity and dissimilarity across different arms is considered. The action space of the problem can be represented by a unit interval graph (UIG) where each node represents an arm and the presence (absence) of an edge between two nodes indicates similarity (dissimilarity) between their mean rewards. Two settings of complete and partial side information based on whether the UIG is fully revealed are studied and a general two-step learning structure consisting of an offline reduction of the action space and online aggregation of reward observations from similar arms is proposed to fully exploit the topological structure of the side information. In both cases, the computation efficiency and the order optimality of the proposed learning policies in terms of both the size of the action space and the time length are established.
\end{abstract}

\begin{IEEEkeywords}
Multi-armed bandits, unit interval graph, side information.
\end{IEEEkeywords}}

\maketitle

\IEEEdisplaynontitleabstractindextext

%
\IEEEpeerreviewmaketitle

\IEEEraisesectionheading{\section{Introduction}\label{sec:introduction}}
\IEEEPARstart{A}{}number of emerging applications involve large-scale online learning in which the objective is to learn, in real time, the most rewarding actions among a large number of options. Example applications include various socio-economic applications (e.g. ad display in search engines, product/news recommendation systems, targeted marketing and political campaigns) and networking issues (e.g. dynamic channel access and route selection) in large-scale communication systems such as Internet of things. For such problems, a linear scaling of the learning cost with the problem size resulting from exploring every option to identify the optimal is undesirable, if not infeasible. The key to achieving a sublinear scaling~with~the~problem~size~is~to~exploit~the~inherent~structure~of~the action space, i.e., various relations among the vast number of options.

A classic framework for online learning and sequential decision-making under unknown models is the multi-armed bandit (MAB) formulation. In the classic setting, a player chooses one arm (or more generally, a fixed number of arms) from a set of $K$ arms (representing all possible options) at each time and obtains a reward drawn i.i.d. over time from an unknown distribution specific to the chosen arm. The design objective is a sequential arm selection policy that maximizes the total expected reward over a time horizon of length $T$ by striking a balance between learning the unknown reward models of all arms (exploration) and capitalizing on this information to maximize the instantaneous gain (exploitation). The performance of an arm selection policy is measured by regret, defined as the expected cumulative reward loss against an omniscient player who knows the reward models and always plays the best arm.

A traditionally adopted assumption in MAB is that arms are independent and that there is no structure in the set of reward distributions. In this case, reward observations from one arm provide no information on other arms, resulting in a linear regret order in $K$. The main focus of the classic MAB problems has been on the regret order in $T$, which measures the learning efficiency over time. The seminal work by Lai and Robins showed that the minimum regret has a logarithmic order in $T$ \cite{lai1985asymptotically}. A number of learning policies have since been developed that offer the optimal regret order in $T$ (see \cite{auer2002finite, garivier2011kl,vakili2013deterministic} and references therein). Developed under the assumption of independent arms and relying on exploring every arm sufficiently often, however, these learning policies are not suitable for applications involving a massive number of arms, especially in the regime of~$K > T$.
\vspace{-.5cm}
\subsection{Main Results}

In addressing the challenge of massive number of arms, there has been a growing body of studies aiming at exploiting certain side information on the relations among the large number of arms. Among various formulations of the side information (see a more detailed discussion in Sec. \ref{relatedwork}), one notable example is the statistical similarity and dissimilarity among arms. For instance, in recommendation systems and information retrieval, products, ads, and documents in the same category (more generally, close in some feature space) have similar expected rewards. At the same time, it may also be known \emph{a priori} that some arms have considerably different mean rewards, e.g., news with drastically different opinions, products with opposite usage, documents associated with key words belonging to distant categories in the taxonomy. Such side information opens the possibility of efficient solutions~that~scale~well~with~the~large~action~space.

In this paper, we study a bandit problem with side information on similarity and dissimilarity relations across actions. We first show that the similarity-dissimilarity structure of the action space can be represented by a unit interval graph (UIG) where the presence (absence) of an edge between two arms indicates that the difference of their mean rewards is within (beyond) a given threshold. Based on whether the UIG is fully revealed to the player, we consider two cases of complete and partial side information. For both cases, we propose a general two-step learning structure---LSDT (Learning from Similarity-Dissimilarity Topology)---to achieve a full exploitation of the topological structure of the side information. The first step is an offline reduction of the action space to a candidate set, which consists of arms that can assume the largest mean rewards under certain assignments of reward distributions without violating the side information. Arms outside the candidate set are sub-optimal and eliminated from online exploration. The second step carries out an online learning algorithm that further exploits the similarity structure through collective exploration by aggregating reward observations from similar arms.

In the case of complete side information, we show that the candidate set is given by the set of left anchors of the UIG, which can be identified by a Breadth-First-Search (BFS) based algorithm in polynomial time. By defining an equivalence relation between arms through the neighbor sets in the UIG, we obtain an equivalence class partition of arms. We show that the candidate set consists of at most two equivalence classes if the UIG is connected. We exploit this topological structure by maintaining two UCB (upper confidence bound) indices, one at the class level aggregating observations from arms within the same class, the other at the arm level. At each time, the arm with the largest arm index within the class with the largest class index is played. We establish the order optimality of the proposed policy in terms of both $K$ and $T$ by deriving an upper bound on regret and a matching lower bound feasible among uniformly good policies.

In the case of partial side information, we represent the partially revealed UIG by a multigraph with two types of edges indicating the presence and the absence of the corresponding UIG edges. We show the NP-completeness of finding the candidate set and propose a polynomial time approximation algorithm to reduce the action space. We show that under certain probabilistic assumptions on the partial side information, the size of the reduced actions space is comparable to that of the ground truth candidate set as determined by the underlying UIG. In the second step of online learning, the key to a full exploitation of the similarity relation is to determine the frequency of exploring an arm based on its exploration value, which measures the topological significance of the node in a similarity graph. By sequentially eliminating arms less likely to be optimal through a UCB index aggregating observations from similar arms, only arms close to the optimal one remain after a sufficient number of plays. We provide performance guarantee for the proposed policy and establish its order optimality under certain probabilistic assumptions on the side information.

It should be noted that the main issue and main contribution of this paper are on how to succinctly model and fully exploit the side information  on the similarity and dissimilarity relations across arms. The solution to the former is the UIG representation of the actions space, and to the latter is the two-step learning structure LSDT, which is independent of the specific arm selection rule adopted at the online learning step. In particular, different arm selection techniques developed for the original bandit problems may be incorporated into the second step of LSDT, except based on aggregated observations. In Sec. 5.3, we discuss the use of Thompson Sampling (TS), one of the most well-known learning techniques in bandit problems (see \cite{thompson1933likelihood,chapelle2011empirical,agrawal2012analysis,agrawal2013further} and references therein), with LSDT to fully exploit the side information.

In summary, we develop a UIG formulation of side information on arm similarity and dissimilarity in MAB problems and consider two cases of complete and partial side information in this paper. We propose a general and computationally efficient two-step learning structure achieving full exploitation of side information and establish the order optimality of the proposed learning policies through theoretical upper bounds on regret as well as matching lower bounds in both cases.

\vspace{-.25cm}
	
\subsection{Related Work}\label{relatedwork}

Existing studies on MAB with structured reward models can be categorized based on the types of arm relations adopted in the MAB models. The first type is realization-based relation that assumes a certain known probabilistic dependency across arms. Examples include combinatorial bandits \cite{liu2012adaptive,gai2012combinatorial,chen2013combinatorial,kveton2015tight}, linearly parameterized bandits \cite{dani2008stochastic,rusmevichientong2010linearly,abbasi2011improved}, and spectral bandits for smooth graph functions \cite{valko2014spectral,hanawal2015efficient}. The second type of arm relation can be termed as observation-based relation \cite{caron2012leveraging,buccapatnam2014stochastic,alon2014nonstochastic}. Specifically, playing an arm provides additional side observations about its neighboring arms. See \cite{valko2016bandits} for a survey on various bandit models with structured action spaces.

The problem studied in this paper considers another type of relation among arms: ensemble-based relation that aims to capture the relations on ensemble behaviors (i.e., mean rewards) across arms, rather than probabilistic dependencies in their realizations\footnote{The mean rewards of different arms exhibit certain relations (e.g., closeness or in certain orders) but the realized random rewards of arms being played do not need to exhibit any probabilistic dependency.}. Related work includes Lipschitz bandits \cite{agrawal1995continuum,kleinberg2008multi,magureanu2014lipschitz}, taxonomy bandits \cite{slivkins2011multi} and unimodal bandits \cite{combes2014unimodal}. Specifically, in Lipschitz bandits, the mean reward is assumed to be a Lipschitz function of the arm parameter. Taxonomy bandits have a tree-structured action space where arms in the same subtree are close in their mean rewards. In unimodal bandits, the action space is represented by a graph where from every sub-optimal arm, there exists a path to the optimal arm along which the mean reward increases. Different from these existing studies, the bandit model studied in this paper considers an action space represented by a UIG indicating not only similarity but also dissimilarity relations across actions. Besides, the structure of the proposed learning policy consists of a two-level exploitation of the UIG structure, which is fundamentally different from the existing ones. Recently, a general formulation of structured bandits was proposed in \cite{combes2017minimal}, which includes a variety of known bandit models (e.g., Lipschitz bandits, unimodal bandits, linear bandits, etc.) as well as the bandit model studied in this work as special cases. The learning policy developed in \cite{combes2017minimal}, however, was given only implicitly in the form of a linear program (LP) that needs to be solved at every time step. For the problem studied in this paper, the LP does not admit polynomial-time solutions (unless P=NP).

Side information has also been used to refer to context information in the so-called contextual bandits (see \cite{langford2008epoch,chapelle2011empirical,li2010contextual} and references therein). Under this formulation, context information is revealed at each time, which affects the arm reward distributions. A contextual bandit problem can thus be viewed as multiple simple bandits, one for each context, that are interleaved in time according to the context stream. The complexity of the problem comes from the coupling of these simple bandits by assuming various models on how context affects the arm reward distributions. The problem is fundamentally~different~from~the~one~studied~here.

\section{Multi-Armed Bandits on Unit Interval Graphs}\label{sec:MABonUIG}

\subsection{Problem Formulation}\label{subsec:formulation}
Consider a stochastic $K$-armed bandit problem. At each time $t$, a player chooses one arm to play. Playing an arm $i$ yields a reward $X_i(t)$ drawn i.i.d. from an unknown distribution $f_i$ with mean $\mu_i$. We assume that $f_i$ belongs to the family of sub-Gaussian distributions\footnote{A random variable $Y$ with mean $\mu$ is sub-Gaussian with parameter $\sigma$ (or $\sigma$ sub-Gaussian) if $\mathbb{E}[e^{\lambda(Y-\mu)}]\le e^{\sigma^2\lambda^2/2}$, for all $\lambda\in\mathbb{R}$ \cite{buldygin2000metric}.} for all $i$. Extensions to other distribution types are discussed in Sec. \ref{sec:discussion}.

Across $K$ arms, the similarity and dissimilarity relations are defined through a parameter $\epsilon>0$: two arms are similar (dissimilar) if the difference between their mean rewards is below (above) $\epsilon$. The similarity-dissimilarity structure of the action space can be represented by an undirected graph $\mathcal{G}_{\epsilon}^*=(\mathcal{V},\mathcal{E}_{\epsilon}^*)$. In the graph representation, every node $i\in\mathcal{V}$ represents an arm with reward distribution $f_i$ and the presence (absence) of an edge $(i,j)$ corresponds to a similar (dissimilar) arm pair. Throughout the paper, $1\le i\le K$ is used to refer to an arm or a node, exchangeably. We first show that $\mathcal{G}_{\epsilon}^{*}$ is a UIG. 

\begin{definition}[\textbf{Unit interval graph and unit interval model}]
A graph $\mathcal{G}=(\mathcal{V,E})$ is a unit interval graph if there exists a set of unit length intervals\footnote{If a UIG is finite (with a finite number of nodes), there is no difference between taking open intervals or closed intervals to represent nodes \cite{frankl1987open}. Without loss of generality, we assume that $I_i=(l_i,r_i)$ where $l_i,r_i$ are the left and right coordinates of interval $I_i$.} $\{I_{i}\}_{i\in\mathcal{V}}$ on the real line such that each interval $I_i$ corresponds to a node $i\in\mathcal{V}$ and there exists an edge $(i,j)\in\mathcal{E}$ if and only if $I_i\cap I_j\neq\emptyset$. The set of intervals $\{I_i\}_{i\in\mathcal{V}}$ is a unit interval model (UIM) for the UIG.
\end{definition} 
Through a mapping from every node $i\in\mathcal{V}$ to an $\epsilon$-length interval $I_i=(\mu_i,\mu_i+\epsilon)$, it is not difficult to see that 
\begin{align}
	|\mu_i-\mu_j|<\epsilon\Leftrightarrow I_i\cap I_j\neq\emptyset,
\end{align}
which indicates that $\mathcal{G}_{\epsilon}^*$ is a UIG (see an example in Fig. \ref{fig:uig}). Without loss of generality, we assume that $\mathcal{G}_{\epsilon}^*$ is connected. Extensions to the disconnected case are discussed in Sec. \ref{sec:discussion}.
\begin{figure}[t!]
	\begin{center}	
	\vspace{-.8cm}
		\includegraphics[width=0.45\textwidth]{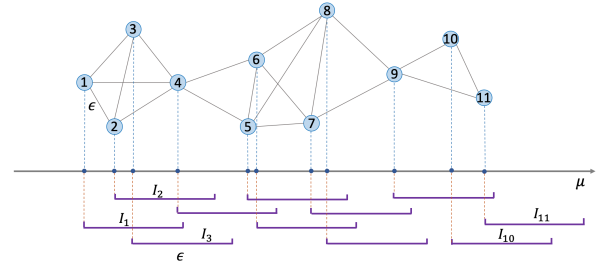}
		\vspace{-.8cm}
		\caption{\small{action space as a UIG: each node (arm) $i$ is associated with an $\epsilon$-length interval $I_i$}.}
		\label{fig:uig}
		\vspace{-.5cm}
	\end{center}
\end{figure}

We define $\mathcal{E}_{\epsilon}^S$, $\mathcal{E}_{\epsilon}^D$ as the side information on arm similarity and dissimilarity. Based on whether $\mathcal{E}_{\epsilon}^S,\mathcal{E}_{\epsilon}^D$ fully reveal the UIG $\mathcal{G}_{\epsilon}^*$, we consider the following two cases separately. In the case of complete side information, $\mathcal{E}_{\epsilon}^S,\mathcal{E}_{\epsilon}^D$ are identical to the edge set and the complement edge set of $\mathcal{G}_{\epsilon}^*$, i.e., $\mathcal{E}_{\epsilon}^S=\mathcal{E}_{\epsilon}^*,\mathcal{E}_{\epsilon}^D=\stcomp{\mathcal{E}_{\epsilon}^*}$. In the case of partial side information, they are subsets of the latter, i.e., $\mathcal{E}_{\epsilon}^{S}\subseteq{\mathcal{E}_{\epsilon}^*}$, $\mathcal{E}_{\epsilon}^{D}\subseteq\stcomp{\mathcal{E}_{\epsilon}^*}$.

The objective is an online learning policy $\pi$ that specifies a sequential arm selection rule at each time $t$ based on both past observations of selected arms and the side information $\mathcal{E}_{\epsilon}^{S},\mathcal{E}_{\epsilon}^{D}$. The performance of policy $\pi$ is measured by regret $R_{\pi}(T;\mathcal{E}_{\epsilon}^S,\mathcal{E}_{\epsilon}^D)$ defined as the expected reward loss against a player who knows the reward model and always plays the best arm $i_{\max}$ (chosen arbitrarily in the case of multiple optimal arms), i.e.,
\begin{equation}\label{regret}
	R_{\pi}(T;\mathcal{E}_{\epsilon}^S,\mathcal{E}_{\epsilon}^{D})=\mathbb{E}_{\pi}\left[\sum_{t=1}^{T}\mu_{i_{\max}}(t)-\sum_{t=1}^{T}X_{\pi_t}(t)\right],
\end{equation}
where $\mu_{i_{\max}}$ is the largest mean reward and $\pi_t$ is the arm selected by policy $\pi$ at time $t$. The dependency of regret on the unknown reward distributions ${\bm{f}}=(f_1,...,f_K)$ is omitted in the notation. When there is no ambiguity, the notation is simplified to $R(T)$.

Let $\tau_i(T)$ denote the number of times that arm $i$ has been selected up to time $T$. We rewrite the regret as:
\begin{equation}\label{regret2}
	R(T)=\mu_{i_{\max}}T-\sum_{i=1}^{K}\mu_i\mathbb{E}[\tau_i(T)]=\sum_{i=1}^{K}\Delta_i\mathbb{E}[\tau_i(T)],
\end{equation}
where $\Delta_i=\mu_{i_{\max}}-\mu_i$. The objective of maximizing the expected cumulative reward is equivalent to minimizing the regret over a time horizon of length $T$. In order to minimize regret, it can be inferred from (\ref{regret2}) that every sub-optimal arm ($\Delta_i>0$) should be distinguished from the optimal one with the least number of plays.

\subsection{Two-Step Learning Structure}\label{subsec:learningstructure}
While classic bandit algorithms have to try out every arm sufficiently often to distinguish the sub-optimal arms from the optimal one, which induces a linear scaling of regret in the number of arms, the side information on arm similarity and dissimilarity allows the possibility of identifying a set of sub-optimal arms without even playing them. To be specific, we define a candidate set $\mathcal{B}$ determined by the side information $\mathcal{E}_{\epsilon}^S,\mathcal{E}_{\epsilon}^D$ as follows.

\begin{definition}[\textbf{Candidate Arm and Candidate Set}]
Given the side information $\mathcal{E}_{\epsilon}^S,\mathcal{E}_{\epsilon}^D$, an arm $i$ is a candidate arm if there exists an assignment of reward distributions with means $\bm{\mu}=(\mu_1,...,\mu_K)$ conforming to $\mathcal{E}_{\epsilon}^S,\mathcal{E}_{\epsilon}^D$ and $\mu_i=\max_{1\le j\le K}\mu_j$. The candidate set $\mathcal{B}$ is the set consisting of all candidate arms.
\end{definition}

Note that the optimal arm $i_{\max}$ under the ground truth assignment of reward distributions in the bandit problem always belongs to the candidate set $\mathcal{B}$. It is clear that if we can find the candidate set $\mathcal{B}$ from the side information efficiently, the action space can be reduced to $\mathcal{B}$. Only arms in $\mathcal{B}$ need to be explored. Furthermore, certain topological structures of the revealed UIG on the reduced action space can be further exploited to accelerate learning. In estimating the mean reward of every arm in the candidate set, observations from similar arms can also be leveraged as approximations, which reduces the number of plays required to distinguish sub-optimal arms from the optimal one.

The aforementioned facts motivate a general two-step learning structure: \emph{Learning from Similarity-Dissimilarity Topology (LSDT)} for both cases of complete and partial side information. Specifically, LSDT consists of (1) an offline elimination step that reduces the action space to the candidate set and (2) online learning of the optimal arm by aggregating observations from similar ones. We specify each step for the cases of complete and partial side information separately in Sec. \ref{sec:complete} and Sec. \ref{sec:partial}.

\section{Complete Side Information}\label{sec:complete}
We first consider the case of complete side information that fully reveals the UIG $\mathcal{G}_{\epsilon}^*$. We follow the two-step learning structure proposed in Sec. \ref{subsec:learningstructure} and develop a learning policy: \emph{LSDT-CSI (Learning from Similarity-Dissimilarity Topology with Complete Side Information)} along with theoretical analysis on its regret performance. While restrictive in applications, this case provides useful insights for tackling the general case of partial side information addressed in Sec. \ref{sec:partial}. 
\vspace{-.3cm}
\subsection{Offline Elimination}

The first step of LSDT-CSI is an offline preprocessing that aims at identifying the candidate set from the complete side information. Since the UIG $\mathcal{G}_{\epsilon}^*$ is fully revealed, we denote the candidate set in this case as $\mathcal{B}^*$ to distinguish from the case of partial side information. We show that $\mathcal{B}^*$ is identical to the set of {\it{left anchors}} of the UIG $\mathcal{G}_{\epsilon}^*$.

\begin{definition}[\textbf{Left Anchor}]
Given a UIG $\mathcal{G}=({\mathcal{V,E}})$, a node $i\in\mathcal{V}$ is a left anchor if there exists a UIM for $\mathcal{G}$ where $i$ corresponds to the leftmost interval along the real line.
\end{definition}
Since the mirror image of an UIM with respect to the origin is also an UIM for the same UIG, the node corresponding to the rightmost interval in a UIM is also a left anchor. Based on the definition of the UIG $\mathcal{G}_{\epsilon}^*$ that represents the similarity-dissimilarity structure of the arm set in Sec. \ref{subsec:formulation}, it is not difficult to see that the candidate set $\mathcal{B}^*$ is identical to the set of left anchors of $\mathcal{G}_{\epsilon}^*$, which can be identified through a BFS-based algorithm proposed in \cite{corneil1995simple}. The BFS-based algorithm starts from an arbitrary node in a UIG and returns a set of left anchors. We apply the algorithm two times: in the first time, we start from an arbitrary node in $\mathcal{G}_{\epsilon}^*$ and obtain a set of left anchors. In the second time, we re-apply the algorithm starting from one of the returned node in the last time. One can directly infer from Proposition 2.1 and Theorem 2.3 in \cite{corneil1995simple} that the obtained set is the candidate set $\mathcal{B}^*$. The detailed algorithm is summarized below. Note that the computation complexity of the offline elimination step is $O(|\mathcal{E}_{\epsilon}^*|)$, which is polynomial in the problem size.
\begin{algorithm}
\caption*{~~{\bf{LSDT-CSI}} (Step 1): Offline Elimination}\label{getcandidateset}
	\begin{algorithmic}
		\BState \textbf{Input}: Fully revealed UIG $\mathcal{G}_{\epsilon}^*$.
		\vspace{.05cm}
		\BState \textbf{Output}: Candidate set $\mathcal{B}^*$.
		\vspace{.05cm}
		\BState \textbf{Initialization}: $\mathcal{B}^*=\emptyset$.
		\vspace{.05cm}

			\State Start from an arbitrary node $i$ and perform a BFS on $\mathcal{G}_{\epsilon}^*$.
			\State Let $\mathcal{L}$ be the set of nodes in the last level of the BFS.
			\For{each $j\in\mathcal{L}$}
				\If{ deg$(j)=\min_{k\in \mathcal{L}}\textrm{deg}(k)$}
					 \State $\mathcal{B}^*\leftarrow\mathcal{B}^*\cup\{j\}$.
				\EndIf
			\EndFor
		\BState Start from a node $j\in\mathcal{B}^*$ and repeat the previous steps.
		
	\end{algorithmic}
\end{algorithm}
\vspace{-.3cm}
\subsection{Online Aggregation}

We now present the second step of online learning that further exploits topological structures of the candidate set $\mathcal{B}^*$. We first introduce an equivalence relation between nodes in the UIG $\mathcal{G}_{\epsilon}^*$.
\begin{definition}[\textbf{Neighborhood Equivalence}]
	
	Two nodes $i,j$ in $\mathcal{G}_{\epsilon}^*$ are (neighborhood) equivalent if $\mathcal{N}[i]=\mathcal{N}[j]$, where $\mathcal{N}[i]$ is the set of neighbors of $i$ in $\mathcal{G}_{\epsilon}^*$, including $i$. Moreover, let $\{\mathcal{B}_{i}^*\}$ denote the partition of the arm set $\mathcal{V}$ in $\mathcal{G}_{\epsilon}^*$ with respect to the neighborhood equivalence relation.
\end{definition}

Note that arms within the same equivalence class have the same set of neighbors and thus, they are topologically indistinguishable in the UIG. Based on the equivalence class partition, we obtain a closed-form expression for $\mathcal{B}^*$.

\begin{theorem}\label{candidateset}
	When the side information fully reveals the UIG $\mathcal{G}_{\epsilon}^*$ (assumed to be connected), the candidate set $\mathcal{B}^*$ is the union of two equivalence classes containing the optimal arm $i_{\max}$ and the worst arm $i_{\min}$ (with minimum mean reward)\footnote{Note that the two equivalence classes containing the optimal arm and the worst arm are identical in the special case where $\mathcal{G}^*$ is fully connected. The proposed algorithm and analysis still apply in this case. Without loss of generality, we assume that $\mathcal{G}_{\epsilon}^*$ is not fully connected.}, i.e., 
	\begin{align}\label{B}
		\mathcal{B}^*=\mathcal{B}_{i_{\max}}^*\cup\mathcal{B}^*_{i_{\min}},
	\end{align}
	 where 
\begin{align}\label{B*}
	\mathcal{B}_{i_{\max}}^*=\{j:\mathcal{N}[j]=\mathcal{N}[i_{\max}]\},
\end{align}
\begin{align}\label{B0}
	\mathcal{B}_{i_{\min}}^*=\{j:\mathcal{N}[j]=\mathcal{N}[i_{\min}]\}.
\end{align}
\end{theorem}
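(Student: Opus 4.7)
The plan is to work with the characterization of $\mathcal{B}^*$ as the set of left anchors of $\mathcal{G}_\epsilon^*$ (established in the text preceding the theorem via \cite{corneil1995simple}), together with the observation that in the ground-truth UIM $\{I_k=(\mu_k,\mu_k+\epsilon)\}$, the rightmost interval belongs to $i_{\max}$ and the leftmost to $i_{\min}$. I would then prove the two inclusions in \eqref{B} separately.

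\textbf{Inclusion $\mathcal{B}^*_{i_{\max}}\cup\mathcal{B}^*_{i_{\min}}\subseteq\mathcal{B}^*$ (swap-and-reflect).} For $j\in\mathcal{B}^*_{i_{\max}}$, I would define a new reward profile by $\mu_j'=\mu_{i_{\max}}$, $\mu_{i_{\max}}'=\mu_j$, and $\mu_k'=\mu_k$ otherwise. This corresponds to exchanging the intervals $I_j$ and $I_{i_{\max}}$ in the ground-truth UIM; because $\mathcal{N}[j]=\mathcal{N}[i_{\max}]$ in $\mathcal{G}_\epsilon^*$, every pairwise intersection is preserved, so the resulting assignment still conforms to $\mathcal{E}_\epsilon^S$ and $\mathcal{E}_\epsilon^D$, and $j$ attains the largest mean, giving $j\in\mathcal{B}^*$. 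For $j\in\mathcal{B}^*_{i_{\min}}$ I would first apply the analogous swap between $j$ and $i_{\min}$ (placing $j$ at the leftmost interval) and then apply the reflection $x\mapsto -x$ to all intervals. The reflected family is still a UIM for $\mathcal{G}_\epsilon^*$, and $j$ now occupies the rightmost interval under the reflected means, so it is best under the new assignment.

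\textbf{Inclusion $\mathcal{B}^*\subseteq\mathcal{B}^*_{i_{\max}}\cup\mathcal{B}^*_{i_{\min}}$ (uniqueness of UIM).} Let $i\in\mathcal{B}^*$, so some UIM for $\mathcal{G}_\epsilon^*$ places $I_i$ as its rightmost interval. I would invoke the canonical uniqueness of UIMs for a connected UIG: the left-endpoint ordering is unique up to (i) reflection across a point on the real line and (ii) permutations within maximal runs of nodes sharing the same closed neighborhood. Granted this, the node occupying the rightmost position in any UIM must lie either in the equivalence class of $i_{\max}$ (rightmost in the ground-truth UIM) or, after reflection, in the class of $i_{\min}$ (leftmost in the ground-truth UIM). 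Hence $\mathcal{N}[i]\in\{\mathcal{N}[i_{\max}],\mathcal{N}[i_{\min}]\}$, and $i\in\mathcal{B}^*_{i_{\max}}\cup\mathcal{B}^*_{i_{\min}}$.

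\textbf{Main obstacle.} The non-trivial ingredient is the uniqueness statement for UIMs of a connected UIG invoked in the second inclusion; equivalently, the assertion that two nodes can both occupy an extreme (leftmost or rightmost) position in some UIM only if they share the same closed neighborhood. A self-contained derivation would peel off the leftmost equivalence class—identified via a BFS tie-breaking rule in the spirit of the offline elimination algorithm—and induct on the smaller connected UIG, checking along the way that any two straight enumerations agree after reflection and intra-class relabeling. Once this canonical-form result is in hand, the theorem follows immediately from the two symmetry constructions above.
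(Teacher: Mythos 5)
Your proposal is correct, and the two inclusions split the comparison neatly. For $\mathcal{B}^*_{i_{\max}}\cup\mathcal{B}^*_{i_{\min}}\subseteq\mathcal{B}^*$ you do essentially what the paper does: swap the means of $j$ and the extremal arm of its class, noting that $\mathcal{N}[j]=\mathcal{N}[i_{\max}]$ (resp.\ $\mathcal{N}[i_{\min}]$) guarantees the new assignment still conforms to the graph; you are in fact more explicit than the paper, which disposes of the $\mathcal{B}^*_{i_{\min}}$ case with ``similar result holds'' and leaves the reflection $x\mapsto -x$ (needed to turn the minimum into a maximum) implicit. For the reverse inclusion the routes genuinely diverge. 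The paper argues directly and elementarily: for $j$ outside both classes it exhibits concrete witnesses --- in the case $j\in\mathcal{N}[i_{\max}]$, an arm $k\in\mathcal{N}[j]\setminus\mathcal{N}[i_{\max}]$ whose mean would be forced into $(\mu_j'-\epsilon,\mu_j']$ alongside $\mu_{i_{\max}}'$, making $k$ and $i_{\max}$ adjacent, a contradiction; in the case $j\notin\mathcal{N}[i_{\max}]\cup\mathcal{N}[i_{\min}]$, two non-adjacent arms $k_1,k_2\notin\mathcal{N}[j]$ on either side of $\mu_j$ whose means would both be forced into $(\mu_j'-2\epsilon,\mu_j'-\epsilon]$. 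You instead invoke the uniqueness of the straight enumeration of a connected unit interval graph up to reversal and within-block permutation, so that the rightmost interval of any UIM must come from the first or last block, i.e.\ from $\mathcal{B}^*_{i_{\min}}$ or $\mathcal{B}^*_{i_{\max}}$. This is a clean and conceptually illuminating argument, and the canonical-form theorem you lean on is standard (it is the classical uniqueness result for proper/unit interval representations), but as you yourself flag it is the nontrivial ingredient: as written your proof is not self-contained and would need either a citation to that theorem or the inductive derivation you sketch, whereas the paper's case analysis needs nothing beyond the definition of a conforming mean assignment. What your approach buys in exchange is that it makes transparent \emph{why} exactly two classes can appear (the two extreme blocks of an essentially unique ordering), rather than obtaining this as the outcome of an ad hoc contradiction argument.
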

\begin{proof}
	See Appendix B in the supplementary material.
\end{proof}
\begin{figure}
\vspace{-.3cm}
	\begin{center}
		\includegraphics[width=0.5\textwidth]{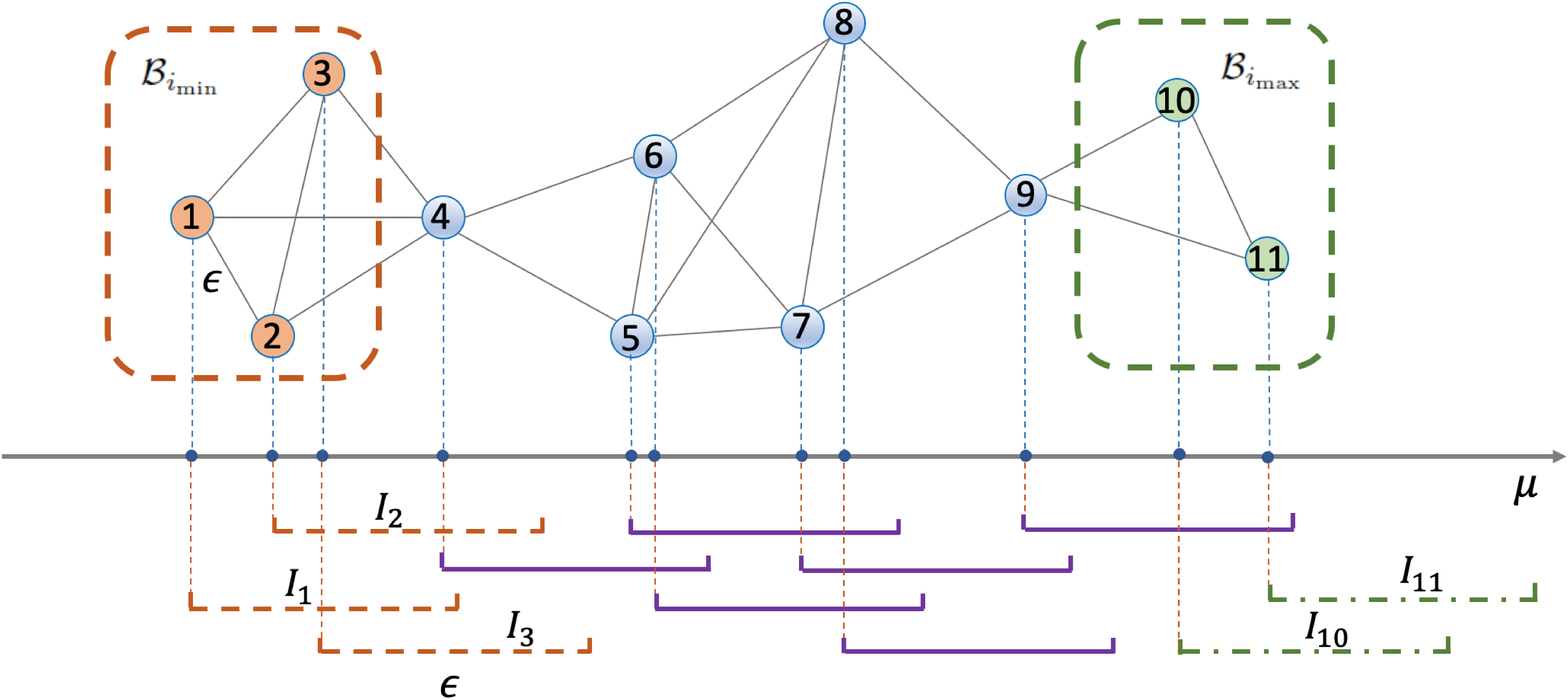}
		\vspace{-1.6cm}
		\caption{\small{left anchors and candidate set: the node corresponding to $I_1$ (or $I_{11}$) is the left anchor under the current UIM (or its mirroring). Switching $I_1,I_2,I_3$ (or $I_{10},I_{11}$) does not change the graph connectivity, i.e., each node in $\mathcal{B}_{i_{\min}}$ (or $\mathcal{B}_{i_{\max}}$) is a left anchor. Hence the candidate set $\mathcal{B}=\{1,2,3\}\cup\{10,11\}$}.}
		\label{fig:UIG}
	\end{center}
		\vspace{-.5cm}
\end{figure}
The result is also illustrated in Fig. \ref{fig:UIG}: the candidate set $\mathcal{B}^*$ is the union of two equivalence classes $\mathcal{B}^*_{i_{\min}}=\{1,2,3\}$ and $\mathcal{B}^*_{i_{\max}}=\{10,11\}$, which can be directly obtained through the offline elimination step. 

Based on the topological structure of the candidate set, we develop a hierarchical online learning policy that aggregates observations from arms within the same equivalence class. By considering each class as a super node (arm), we reduce the problem to a simple two-armed bandit problem. 

Specifically, the second step of LSDT-CSI carries out a hierarchical UCB-based online learning on the candidate set $\mathcal{B}^*$ by maintaining a class index $H_i(t)$ for each equivalence class $\mathcal{B}_i^*$ and an arm index $L_j(t)$ for each individual arm $j$ in $\mathcal{B}^*$. The arm index is defined as:
\begin{align}\label{Lindex}
	L_j(t)=\bar{x}_j(t)+\sqrt{\frac{8\log t}{\tau_j(t)}},
\end{align}
where $\bar{x}_j(t)$, $\tau_j(t)$ are the empirical average of observations from arm $j$ and the number of times that arm $j$ has been played up to time $t$. The class index $H_i(t)$ aggregates the same statistics across arms in the class:
\begin{align}\label{Hindex}
	H_i(t)=\frac{\sum_{j\in \mathcal{B}^*_i}\bar{x}_{j}(t)\tau_{j}(t)}{\sum_{j\in \mathcal{B}^*_i}\tau_{j}(t)}+\sqrt{\frac{8\log t}{\sum_{j\in \mathcal{B}^*_i}\tau_{j}(t)}}.
\end{align}
At each time, the online learning procedure selects the equivalence class with the largest class index and plays the arm with the largest arm index within the selected class. Once the reward has been observed, both class indices and arm indices are updated. 
\begin{algorithm}[h!]
\caption*{~~{\bf{LSDT-CSI}} (Step 2): Online Aggregation}\label{Online Learning Procedure}
	\begin{algorithmic}
		\BState \textbf{Input}: Candidate set $\mathcal{B}^*=\mathcal{B}^*_1\cup\mathcal{B}^*_2$ where $\mathcal{B}^*_1,\mathcal{B}^*_2$ are two disjoint equivalence classes.
		\BState \textbf{Initialization}: Play each arm in $\mathcal{B}^*$ once, update all the arm indices $\{L_j(t)\}_{j\in\mathcal{B}^*}$ and class indices $H_1(t),H_2(t)$ defined in (\ref{Lindex}) and (\ref{Hindex}).

		\vspace{0.1cm}
		\For{$t=|\mathcal{B}^*|+1,|\mathcal{B}^*|+2,...$}
		\vspace{0.1cm}
			\State Let $i_t^*=\argmax_{i=1}^{2}H_i(t-1)$. 
			\State Play arm $j_t^*=\argmax_{j\in\mathcal{B}^*_{i_t^*}}L_j(t-1).$
		\EndFor
	\end{algorithmic}
\end{algorithm}
\vspace{-.3cm}
\subsection{Order Optimality}\label{subsec:optHUCB}
We first present the regret analysis of LSDT-CSI, which focuses on upper bounding the expected number of times that each suboptimal arm has been played up to time $T$. We show that when the total number of times that arms in $\mathcal{B}^*_{i_{\min}}$ have been played is greater than $\Omega(\log T)$, the class index $H_{i_{\min}}(t)$ will not be chosen with high probability. Besides, if each suboptimal arm $j\in\mathcal{B}_{i_{\max}}^*$ has been played more than $\Omega(\log T)$ times, the arm index $L_j(t)$ will not be chosen with high probability. The following theorem provides the performance guarantee for LSDT-CSI.

\begin{theorem}\label{thmupperbound}
		Suppose that $\mathcal{G}_{\epsilon}^*$ is connected. Assume that the reward distribution for each arm is sub-Gaussian with parameter $\sigma=1$ \footnote{See Sec. \ref{sec:discussion} for extensions to general $\sigma$.}. Then the regret of LSDT-CSI up to time $T$ is upper bounded as follows:
		
		\begin{equation}\label{upperbound}
		\begin{aligned}
		R(T)\le&\Bigg(\frac{32\max_{i\in \mathcal{B}_{i_{\min}}}\Delta_i}{(\min_{j\in \mathcal{B}^*_{i_{\min}}}\Delta_j-\max_{k\in \mathcal{B}^*_{i_{\max}}}\Delta_k)^2}\\
		&+\sum_{i\in \mathcal{B}^*_{i_{\max}}\setminus\mathcal{A}}\frac{32}{\Delta_i}\Bigg)\log T +O(|\mathcal{B}^*|),
		\end{aligned}
		\end{equation}
		where $\mathcal{A}$ is the set of arms with largest mean rewards ($i_{\max}\in\mathcal{A}$).
	\end{theorem}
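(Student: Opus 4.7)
The plan is to apply the regret decomposition (\ref{regret2}) and bound $\mathbb{E}[\tau_i(T)]$ for each suboptimal arm. By Theorem \ref{candidateset}, the offline step restricts the action space to $\mathcal{B}^*=\mathcal{B}^*_{i_{\max}}\cup\mathcal{B}^*_{i_{\min}}$, so every play outside the initialization belongs to one of these two classes. The regret therefore splits into (i) the total cost of plays in the wholly suboptimal class $\mathcal{B}^*_{i_{\min}}$, and (ii) the cost of plays of suboptimal arms in $\mathcal{B}^*_{i_{\max}}\setminus\mathcal{A}$; the one-shot initialization contributes an additive $O(|\mathcal{B}^*|)$.

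The key step is bounding $T_{\min}(T):=\sum_{j\in\mathcal{B}^*_{i_{\min}}}\tau_j(T)$. The subtlety is that arms within a class need not share a single mean, so the empirical part of $H_i(t)$ concentrates around the sample-weighted average $\widetilde{\mu}_i(t):=\sum_{j\in\mathcal{B}^*_i}\mu_j\tau_j(t)/\sum_{j\in\mathcal{B}^*_i}\tau_j(t)$ rather than a fixed mean. Conditioned on $\{\tau_j(t)\}$, however, the numerator $\sum_{j\in\mathcal{B}^*_i}\bar{x}_j(t)\tau_j(t)$ is a sum of $\sum_j\tau_j(t)$ independent $1$-sub-Gaussian rewards, and a Hoeffding-type bound yields deviations of order $\sqrt{8\log t/\sum_j\tau_j(t)}$ with probability $1-2t^{-4}$. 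Since $\widetilde{\mu}_{i_{\min}}(t)\le\mu_{i_{\max}}-\min_{j\in\mathcal{B}^*_{i_{\min}}}\Delta_j$ and $\widetilde{\mu}_{i_{\max}}(t)\ge\mu_{i_{\max}}-\max_{k\in\mathcal{B}^*_{i_{\max}}}\Delta_k$ regardless of the (random) weighting, the classical UCB argument shows that the event $H_{i_{\min}}(t)\ge H_{i_{\max}}(t)$ cannot occur once $T_{\min}(t)>32\log T/(\min_j\Delta_j-\max_k\Delta_k)^2$ except on a tail of summable probability. Taking expectations and multiplying by $\max_{i\in\mathcal{B}^*_{i_{\min}}}\Delta_i$ yields the first term of (\ref{upperbound}).

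For the suboptimal arms in $\mathcal{B}^*_{i_{\max}}$, observe that whenever the class $\mathcal{B}^*_{i_{\max}}$ is selected, the rule $\argmax_{j\in\mathcal{B}^*_{i_{\max}}}L_j(t-1)$ is exactly UCB1 on that class with the global clock $t$. A direct adaptation of the analysis of Auer \emph{et al.} then gives $\mathbb{E}[\tau_i(T)]\le 32\log T/\Delta_i^2+O(1)$ for each $i\in\mathcal{B}^*_{i_{\max}}\setminus\mathcal{A}$, contributing $\Delta_i\mathbb{E}[\tau_i(T)]=32\log T/\Delta_i+O(\Delta_i)$ to the regret and producing the second term of (\ref{upperbound}). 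The main obstacle is the concentration step in the second paragraph: the weights $\tau_j(t)$ are themselves stochastic and coupled with the rewards, so I would handle this by peeling over dyadic scales of $T_{\min}(t)$ (equivalently, conditioning on the filtration generated by the sample-size process) so that a single Hoeffding bound applies at each scale, with the resulting union-bound factors absorbed into the $O(|\mathcal{B}^*|)$ constant.
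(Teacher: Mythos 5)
Your proposal follows essentially the same route as the paper: the same decomposition into the aggregate count of $\mathcal{B}^*_{i_{\min}}$-plays (bounded by $32\log T/(\min_{j\in\mathcal{B}^*_{i_{\min}}}\Delta_j-\max_{k\in\mathcal{B}^*_{i_{\max}}}\Delta_k)^2$ via the class-index comparison and the three-event split around the sample-weighted class means) plus a per-arm UCB1 bound of $32\log T/\Delta_i^2$ for each $i\in\mathcal{B}^*_{i_{\max}}\setminus\mathcal{A}$, with initialization absorbed into $O(|\mathcal{B}^*|)$. The only step to tighten is the class-index concentration: conditioning on $\{\tau_j(t)\}$ and invoking Hoeffding is not literally valid since the sample sizes are reward-dependent, and the paper closes exactly this gap by showing that $e^{\lambda\sum_{\tau}(Z_\tau-\nu_\tau)}/e^{\lambda^2 S_t/2}$ is a submartingale and applying Markov's inequality with an indicator on $\{\tau_{\mathcal{B}^*_{i_{\min}}}(t)=s\}$, union-bounding over all pairs $(s,r)$ of class counts --- the rigorous counterpart of the peeling you sketch.
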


\begin{proof}
	See Appendix C in the supplementary material.
\end{proof}

\begin{remark}
	For fixed $\Delta_i$, the regret of LSDT-CSI is of order 
	\begin{align}
	O\Big((1+|\mathcal{B}^*_{i_{\max}}\setminus\mathcal{A}|)\log T\Big),
	\end{align} 
	as $T\to\infty$. In certain scenarios (e.g., $\mathcal{G}^*_{\epsilon}$ is a line graph), $|\mathcal{B}^*_{i_{\max}}|\ll K$, which indicates a sublinear scaling of regret in terms of the number of arms given such side information.
	 
\end{remark}
\begin{remark}
	If $\mathcal{G}^*_{\epsilon}$ is fully connected (e.g., $\epsilon$ is large), then $\mathcal{B}^*_{i_{\max}}=\mathcal{B}^*_{i_{\min}}=\mathcal{V}$. In this case, LSDT-CSI degenerates to the classic UCB policy and $R(T)\sim O(K\log T)$.
\end{remark}
We discuss in Sec. \ref{sec:numerical} that if the mean reward of each arm is independently and uniformly chosen from $[0,1]$ and $\epsilon$ is bounded away from $0$ and $1$, the expected value of $|\mathcal{B}^*|$ is smaller than $O(K^{1/2}\log K)$, which indicates a sublinear scaling of regret in terms of the size of the action space. We also use a numerical example to verify the result in Sec. \ref{sec:numerical}.

To establish the order optimality of LSDT-CSI, we further derive a matching lower bound on regret. We focus here on the case that the unknown mean reward of each arm is unbounded (i.e., can be any value on the real line). We adopt the same parametric setting as in \cite{lai1985asymptotically} on classic MAB where the rewards are drawn from a specific parametric family of distributions with known distribution type\footnote{Although the upper bound on regret of LSDT-CSI is derived under the non-parametric setting (the distribution type is unknown), the non-parametric lower bound suffices to show the order optimality of LSDT-CSI since it should be no smaller than that in the parametric one.}. Specifically, the reward distribution of arm $i$ has a univariate density function $f(\cdot;\theta_i)$ with an unknown parameter $\theta_i$ from a set of parameters $\Theta$. Let $I(\theta||\lambda)$ be the Kullback-Leibler (KL) distance between two distributions with density functions $f(\cdot;\theta)$ and $f(\cdot;\lambda)$ and with means $\mu(\theta)$ and $\mu(\lambda)$ respectively. We assume the same regularity assumptions on the finiteness of the KL divergence and its continuity with respect to the mean values as in \cite{lai1985asymptotically}.

\begin{assumption}\label{lowassum1}
	For every $f(\cdot;\theta),f(\cdot;\lambda)$ such that $\mu(\lambda)>\mu(\theta)$, we have $0<I(\theta||\lambda)<\infty$.
\end{assumption}

\begin{assumption}\label{lowassum2}
	For every $\epsilon>0$ and $\theta,\lambda\in\Theta$ with $\mu(\lambda)>\mu(\theta)$, there exists $\eta>0$ for which $|I(\theta||\lambda)-I(\theta||\rho)|<\epsilon$ whenever $\mu(\lambda)<\mu(\rho)<\mu(\lambda)+\eta,~\rho\in\Theta$.
\end{assumption}

The following theorem provides a lower bound on regret for uniformly good policies\footnote{A policy $\pi$ is uniformly good if for every $\bm{f}$, the regret of $\pi$ satisfies $R(T)=o(T^\alpha),\forall \alpha>0$, as $T\to\infty$ \cite{lai1985asymptotically}.}.
\begin{theorem}\label{lowerbound}
	Suppose $\mathcal{G}_{\epsilon}^*$ is connected. Assume that Assumptions \ref{lowassum1}, \ref{lowassum2} hold and the mean reward of each arm can be any value in $\mathbb{R}$. For any uniformly good policy, the regret up to time $T$ is lower bounded as follows:
	\begin{equation}
		\lim_{T\to\infty}\frac{R(T)}{\log T}\ge C_1,
	\end{equation}
	where $C_1$ is the optimal value of an LP that only depends on $f_1,...,f_K$ and $\epsilon$ (see (\ref{lowerboundLP}) in Appendix \ref{pflowerbound} for details). It can be shown that for fixed $\Delta_i$, $I(\theta_i||\theta_i')$ and $I(\theta_i||\theta_{i_{\max}})$, the regret for any uniformly good policy is of order $${\Omega}\Big((1+|\mathcal{B}^*_{i_{\max}}\setminus\mathcal{A}|)\log T\Big),$$
	as $T\to\infty$.

\end{theorem}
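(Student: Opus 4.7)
The plan is to adapt the classical Lai--Robbins change-of-measure argument to the constrained setting in which any alternative reward model must still induce exactly the UIG $\mathcal{G}^*_\epsilon$. Concretely, for each candidate arm $i\in\mathcal{B}^*\setminus\{i_{\max}\}$ I would construct a perturbed parameter configuration $\bm{\theta}^{(i)}=(\theta_1^{(i)},\dots,\theta_K^{(i)})$ such that (i) arm $i$ is the unique best arm under $\bm{\theta}^{(i)}$, (ii) the induced means $\mu(\theta_j^{(i)})$ produce the same UIG $\mathcal{G}^*_\epsilon$, and (iii) the set of arms with $\theta_j^{(i)}\neq\theta_j$ is as small as possible. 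The existence of at least one such $\bm{\theta}^{(i)}$ is guaranteed by the very definition of the candidate set $\mathcal{B}^*$.

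Given these alternatives, I would invoke the standard divergence decomposition lemma between $\bm{\theta}$ and $\bm{\theta}^{(i)}$ combined with the uniformly good assumption (which, via a Markov-type argument, implies that any arm $j\neq i$ is played $o(T^\alpha)$ times under $\bm{\theta}^{(i)}$) to obtain
\begin{align}
\sum_{j=1}^{K}\mathbb{E}_{\bm{\theta}}[\tau_j(T)]\,I(\theta_j\|\theta_j^{(i)})\ \ge\ (1-o(1))\log T.
\end{align}
Setting $v_j=\liminf_{T\to\infty}\mathbb{E}_{\bm{\theta}}[\tau_j(T)]/\log T$, this reads $\sum_j v_j\,I(\theta_j\|\theta_j^{(i)})\ge 1$ for every UIG-consistent alternative $\bm{\theta}^{(i)}$. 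The desired lower bound is then the optimal value $C_1$ of the LP
\begin{align}
\min_{v\ge 0}\ \sum_{j=1}^{K}\Delta_j v_j\ \ \text{s.t.}\ \ \sum_{j=1}^{K}v_j I(\theta_j\|\theta_j^{(i)})\ge 1\ \ \forall i\in\mathcal{B}^*\setminus\{i_{\max}\},
\end{align}
ranging over all admissible perturbations. Assumptions~\ref{lowassum1} and \ref{lowassum2} guarantee that the KL terms are finite and continuous in the mean-shift, so the LP is well-posed.

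For the order claim I would exhibit explicit feasible alternatives. First, every $i\in\mathcal{B}^*_{i_{\max}}\setminus\mathcal{A}$ satisfies $\mathcal{N}[i]=\mathcal{N}[i_{\max}]$ by Theorem~\ref{candidateset}; shifting only $\mu_i$ upward to $\mu_{i_{\max}}+\eta$ for sufficiently small $\eta>0$ keeps every edge and non-edge of $\mathcal{G}^*_\epsilon$ intact (edges to $\mathcal{N}[i_{\max}]$ survive by continuity of the strict inequality, and non-edges persist because $i_{\max}$ is already the global maximum). This single-arm perturbation yields the decoupled constraint $v_i\,I(\theta_i\|\theta_i^{(i)})\ge 1$, forcing $v_i=\Omega(1)$ and contributing $\Delta_i/I(\theta_i\|\theta_i^{(i)})=\Omega(1)$ per such arm, for a total of $\Omega(|\mathcal{B}^*_{i_{\max}}\setminus\mathcal{A}|)$ to the LP objective. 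Second, pick any $j^*\in\mathcal{B}^*_{i_{\min}}$; since $j^*\in\mathcal{B}^*$ there exists a UIG-consistent alternative making $j^*$ optimal, and the corresponding KL constraint forces $\sum_{k\in S}v_k=\Omega(1)$ for the (nonempty) subset $S$ of arms that are perturbed; because every $k\in S$ has $\Delta_k$ bounded below by a fixed positive constant (arms in $S$ are necessarily suboptimal, including at least one from $\mathcal{B}^*_{i_{\min}}$), this accounts for the additional additive $1$ in $(1+|\mathcal{B}^*_{i_{\max}}\setminus\mathcal{A}|)$. Multiplying by $\log T$ finishes the proof.

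The main obstacle is the second part, namely producing a UIG-consistent alternative that elevates an arm in $\mathcal{B}^*_{i_{\min}}$ to the maximum. Since these arms sit at the opposite end of the unit interval representation and share essentially no neighborhood with $i_{\max}$, a pure single-arm shift will destroy many edges or create spurious ones, so one must coordinate a simultaneous translation of a subset of mean rewards along the real line while respecting every edge and non-edge of $\mathcal{G}^*_\epsilon$. The existence of such an assignment is exactly what membership of $j^*$ in $\mathcal{B}^*$ certifies, but identifying the subset $S$ and bounding the relevant KL divergences in terms of the mean shifts (uniformly small thanks to Assumption~\ref{lowassum2}) is the technical core of the argument.
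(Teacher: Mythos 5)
Your overall skeleton is the same as the paper's: a Lai--Robbins change of measure against UIG-preserving alternatives, the resulting constraints $\sum_j v_j I(\theta_j\|\theta_j^{(i)})\ge 1$ assembled into an LP, and the single-arm upward shift $\mu_i\mapsto\mu_{i_{\max}}+\eta$ for each $i\in\mathcal{B}^*_{i_{\max}}\setminus\mathcal{A}$ (your check that this preserves all edges and non-edges for $\eta$ small is exactly the paper's condition on $\eta$). That part is fine.

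The genuine gap is in the $\mathcal{B}^*_{i_{\min}}$ step, and it is not merely the unfinished construction you flag. Your argument for the additive ``$+1$'' hinges on the assertion that every arm in the perturbed set $S$ is suboptimal, so that $\sum_{k\in S}v_k=\Omega(1)$ translates into $\Omega(1)$ regret. But membership of $j^*$ in $\mathcal{B}^*$ only certifies that \emph{some} conforming assignment makes $j^*$ optimal; a generic such assignment may well change the distribution of $i_{\max}$ itself (indeed any global translation does), in which case $i_{\max}\in S$, the constraint can be satisfied by inflating $v_{i_{\max}}$, and the LP contributes nothing since $\Delta_{i_{\max}}=0$. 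So you must exhibit a conforming alternative that elevates $j^*$ while leaving the distributions of all arms in $\mathcal{B}^*_{i_{\max}}$ (hence of $\mathcal{A}$) untouched. The paper does this explicitly in (\ref{thetaj}): it fixes $\mu(\theta_j')=\mu_j$ for $j\in\mathcal{B}^*_{i_{\max}}$ and reflects every other arm via $\mu_j\mapsto\mu_{i_{\max}}+\min_{k\in\mathcal{B}^*_{i_{\max}}}\mu_k-\mu_j$, which preserves all pairwise distances among the reflected arms and (using the neighborhood equivalence within $\mathcal{B}^*_{i_{\max}}$) all cross edges, then lifts $\mu_{j^*}$ by $\eta$. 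This yields $S=\mathcal{V}\setminus\mathcal{B}^*_{i_{\max}}$, a set consisting entirely of suboptimal arms, and hence the constraint $\sum_{j\notin\mathcal{B}^*_{i_{\max}}}\tau_j I(\theta_j\|\theta_j')\ge\log T$ that produces the extra $\Omega(\log T)$ term. Without this (or an equivalent) construction the ``$+1$'' in your claimed order does not follow. A secondary, minor point: your heuristic of making the perturbed set ``as small as possible'' is the wrong desideratum here --- the paper's $S$ is large; what matters is that $S$ avoids the zero-gap arms.
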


\begin{proof} 
See Appendix D in the supplementary material.
\end{proof}

\begin{remark}
	LSDT-CSI is order optimal since its upper bound on regret matches the lower bound shown in Theorem \ref{lowerbound}.
\end{remark}
\begin{remark}
If there is a unique optimal arm, i.e., $|\mathcal{A}|=1$,
	$
	R(T)\sim\Theta\Big(|\mathcal{B}_{i_{\max}}^*|\log T\Big),
	$
	as $T\to\infty$.
\end{remark}

\section{Partial Side Information}\label{sec:partial}
In this section, we consider the general case of partial side information where the UIG $\mathcal{G}_{\epsilon}^*$ is partially revealed. We develop a learning policy: \emph{LSDT-PSI (Learning from Similarity-Dissimilarity Topology with Partial Side Information)} following the two-step structure proposed in Sec. \ref{subsec:learningstructure} and provide theoretical analysis on the regret performance.

\subsection{Offline Elimination}
A partially revealed UIG can be represented by an undirected edge-labeled multigraph $\mathcal{G}_{\epsilon}=(\mathcal{V},\mathcal{E}_{\epsilon}^S,\mathcal{E}_{\epsilon}^D)$ (see Fig. \ref{fig:observedSIG}). Specifically, $\mathcal{G}_{\epsilon}$ consists of two types of edges: type-S edges ($\mathcal{E}_{\epsilon}^S$) and type-D edges ($\mathcal{E}_{\epsilon}^D$) indicating the presence and the absence of the corresponding UIG edges. The absence of an edge between two nodes indicates an unknown relation between the two arms.

We first show that finding the candidate set under partial side information $\mathcal{E}_{\epsilon}^S,\mathcal{E}_{\epsilon}^D$ is NP-complete. We notice that finding the candidate set is equivalent to considering every node $i$ individually and deciding if $i$ can be a left anchor of a UIG $\mathcal{G}'_{\epsilon}=(\mathcal{V},\mathcal{E}^{P}_{\epsilon})$ consisting of the same set of nodes with $\mathcal{G}_{\epsilon}$ and the potential edge set $\mathcal{E}_{\epsilon}^{P}$ satisfying 
\begin{align}\label{requir1}
\mathcal{E}^{S}_{\epsilon}\subseteq\mathcal{E}^{P}_{\epsilon},
\end{align}
\vspace{-.5cm}
\begin{align} \label{requir2}
\mathcal{E}^{P}_{\epsilon}\cap\mathcal{E}^{D}_{\epsilon}=\emptyset.
\end{align} 

Specifically, we show the NP-completeness of the following decision problem.

\begin{itemize}
	\item[]	\emph{LEFTANCHOR}
	\item[] \emph{{{[INPUT]}}: A multigraph $\mathcal{G}=(\mathcal{V},\mathcal{E}_1,\mathcal{E}_2)$ knowing that there exists a UIG $\mathcal{G}'=(\mathcal{V},\mathcal{E}_3)$ where $\mathcal{E}_1\subseteq\mathcal{E}_3$ and $\mathcal{E}_3\cap\mathcal{E}_2=\emptyset$, and a specific node $i$.}

	\item[]\emph{{[QUESTION]}: Does there exist a UIG $\mathcal{G}''=(\mathcal{V},\mathcal{E}_4)$ where $\mathcal{E}_1\subseteq\mathcal{E}_4$ and $\mathcal{E}_4\cap\mathcal{E}_2=\emptyset$ such that node $i$ is a left anchor of $\mathcal{G}''$?}
\end{itemize}

\begin{theorem}\label{LEFTNP}
LEFTANCHOR is NP-complete.
\end{theorem}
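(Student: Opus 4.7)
The plan is to prove NP-completeness of LEFTANCHOR in two steps: first verifying membership in NP, then reducing a known NP-complete problem to it. Membership in NP is routine: a certificate is a unit interval model $\{I_v = (l_v, l_v+1)\}_{v\in\mathcal{V}}$ for the promised sandwich UIG. Since proper/unit interval graphs admit representations whose endpoints are integer multiples of $1/(2|\mathcal{V}|)$, the certificate has polynomial bit-size, and the verifier needs only to check (i) that the intersection graph of $\{I_v\}$ contains every edge of $\mathcal{E}_1$ and is disjoint from $\mathcal{E}_2$, and (ii) that $l_i = \min_{v\in\mathcal{V}} l_v$. Both checks run in $O(|\mathcal{V}|^2)$ time.

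For NP-hardness, the plan is to reduce from the Unit Interval Graph Sandwich problem, which is known to be NP-complete (Kaplan-Shamir). Given an arbitrary sandwich instance $(\mathcal{V}_0, \mathcal{E}^{(1)}_0, \mathcal{E}^{(2)}_0)$, I would construct a LEFTANCHOR multigraph by adjoining a distinguished vertex $i^\star$ together with a small scaffold of auxiliary vertices that is itself trivially a UIG, and by adding S- and D-edges between the scaffold and the original vertex set so that, in any UIG sandwich of the combined instance, the scaffold forces $i^\star$ to be the leftmost interval if and only if $(\mathcal{V}_0, \mathcal{E}^{(1)}_0, \mathcal{E}^{(2)}_0)$ admits a valid UIG sandwich. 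To honor the LEFTANCHOR promise that some sandwich UIG always exists for the combined instance, I would build in a fallback configuration in which the scaffold alone (with the original vertices collapsed into a single allowable cluster positioned well to the right of $i^\star$) provides a trivial UIM, independently of the answer to the original sandwich question.

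The main obstacle is engineering the scaffold so that the promise and the bi-conditional hold simultaneously. Because connected UIGs are rigid --- the interval ordering is unique up to reflection --- S-edges propagate ordering constraints robustly, whereas D-edges only prohibit overlap without fixing relative orientation. The gadget must therefore exploit the rigidity conferred by S-edges to pin down a canonical left-to-right ordering once $i^\star$ is declared leftmost, while using D-edges to outlaw the trivial collapse configuration precisely in those UIMs that would otherwise place $i^\star$ at the leftmost position without solving the original sandwich. I expect the case analysis of how the scaffold interacts with putative UIMs of the original graph, together with the need to reconcile rigidity with the flexibility afforded by partial information, to be the most delicate part of the reduction; a cleanly chosen scaffold (for instance, a long path of auxiliary nodes pairwise connected by S-edges, with select D-edges crossing into $\mathcal{V}_0$) should suffice to carry the argument through.
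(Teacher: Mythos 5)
Your NP-membership argument is fine, but the hardness reduction has a structural flaw that no choice of scaffold can repair. LEFTANCHOR is a \emph{promise} problem: every legal input is guaranteed to admit at least one sandwich UIG $\mathcal{G}'=(\mathcal{V},\mathcal{E}_3)$ with $\mathcal{E}_1\subseteq\mathcal{E}_3$ and $\mathcal{E}_3\cap\mathcal{E}_2=\emptyset$. A many-one reduction from the unrestricted UIG Sandwich problem must therefore map \emph{every} sandwich instance, including NO instances, to a multigraph satisfying this promise. But S- and D-edges are global constraints that must hold in every valid sandwich UIG of the combined instance, irrespective of where $i^\star$ sits; they cannot be made ``active only when $i^\star$ is leftmost.'' So if your combined instance contains the original constraints $(\mathcal{E}^{(1)}_0,\mathcal{E}^{(2)}_0)$ on $\mathcal{V}_0$ verbatim (which it must for the forward direction --- a UIM witnessing $i^\star$ leftmost has to induce a solution of the original sandwich instance), then any sandwich UIG of the combined instance restricts to a sandwich UIG of $(\mathcal{V}_0,\mathcal{E}^{(1)}_0,\mathcal{E}^{(2)}_0)$, so the promise fails exactly on NO instances. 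Your proposed fallback, collapsing $\mathcal{V}_0$ into a single cluster to the right of $i^\star$, makes $\mathcal{V}_0$ a clique and hence violates every D-edge inside $\mathcal{V}_0$; it is only a valid configuration when $\mathcal{E}^{(2)}_0$ restricted to $\mathcal{V}_0$ is empty, in which case the sandwich instance is trivially YES. There is no way to simultaneously preserve the promise on NO instances and keep the biconditional on YES instances when reducing from the raw sandwich problem.

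The paper sidesteps this by reducing from a problem that carries its own promise: it defines CONSISTENT-NAE-3SAT (given that a 3-SAT instance is not-all-equal satisfiable, decide whether an assignment exists in which every clause has exactly one true literal, or every clause has exactly two), proves that promise problem NP-complete via a chain from 1-IN-3SAT, and then applies Kaplan--Shamir-style variable and clause gadgets. The NAE-satisfiability promise of the source instance translates, via the Kaplan--Shamir argument, into the existence of a sandwich UIG for the constructed multigraph --- which is precisely the LEFTANCHOR promise --- while the ``consistent'' question corresponds to whether $x_1$ or $\bar{x}_1$ can be a left anchor. If you want to keep your reduction-from-sandwich outline, you would first have to show that the sandwich problem remains NP-hard under a promise compatible with LEFTANCHOR's, which is essentially the extra work the paper does on the SAT side.
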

\begin{proof}

To show the NP-completeness of LEFTANCHOR, we give a reduction from a variant of the 3-SAT problem: CONSISTENT-NAE-3SAT. Due to the page limit, we include the definition of CONSISTENT-NAE-3SAT as well as its proof of NP-completeness in Appendix E in the supplementary material. The reduction to LEFTANCHOR and the remaining proof are presented in Appendix F.
\end{proof}
\begin{figure}
	\begin{center}
		
		\includegraphics[width=0.5\textwidth]{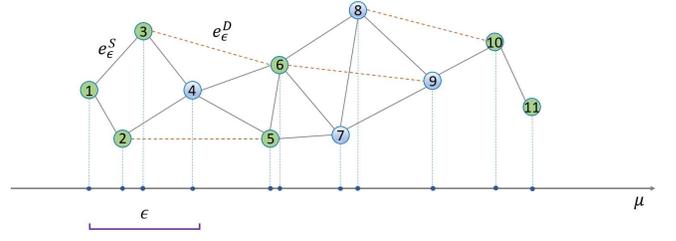}
		\vspace{-2.5cm}
		\caption{\small{Partially revealed UIG as an undirected edge-labeled multigraph: black solid lines represent type-S edges and red dash lines represent type-D edges. The candidate set $\mathcal{B}=\{1,2,3,5,6,10,11\}$: take $\epsilon=0.15$, there exists a graph conforming assignment of mean rewards $\bm{\mu}=(0.8,0.8,0.8,0.9,$ $1,1,0.9,0.9,0.8,0.7,0.6)$, where node $5$ and $6$ are optimal}.}
		\label{fig:observedSIG}
		\vspace{-.6cm}
	\end{center}
	
\end{figure}

It should be noted that LEFTANCHOR is similar to the so-called UIG Sandwich Problem\cite{kaplan1994complexity} where two graphs $\mathcal{G}_1=(\mathcal{V},\mathcal{E}_1)$ and $\mathcal{G}_2=(\mathcal{V},\mathcal{E}_2)$ are given satisfying $\mathcal{E}_1\subseteq\mathcal{E}_2$. The question is whether a UIG $\mathcal{G}_3=(\mathcal{V},\mathcal{E}_3)$ exists satisfying $\mathcal{E}_1\subseteq\mathcal{E}_3\subseteq\mathcal{E}_2$. It is not difficult to see that the type-S edge set $\mathcal{E}^{S}_{\epsilon}$ corresponds to $\mathcal{E}_1$ in the sandwich problem and the complement of $\mathcal{E}^{D}_{\epsilon}$ corresponds to $\mathcal{E}_2$. However, LEFTANCHOR is different from the sandwich problem as we know that the sandwich problem is satisfied by the ground truth UIG $\mathcal{G}_{\epsilon}^*$, and what we are interested in is whether a specific node $i$ can be a left anchor.

To address the challenge of finding the candidate set in polynomial time, we exploit the following topological property of $\mathcal{G}_{\epsilon}$ to obtain an approximation solution.
\begin{proposition}\label{prop2}
Given $\mathcal{G}_{\epsilon}$, an arm $i$ is sub-optimal if it is similar to two dissimilar arms, i.e., if there exist $j,k$, such that $(i,j),(i,k)\in\mathcal{E}_{\epsilon}^S$ but $(j,k)\in\mathcal{E}_{\epsilon}^{D}$, then $i\not\in\mathcal{B}$.
\end{proposition}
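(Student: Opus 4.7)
The plan is a direct proof by contradiction, leveraging the definition of the candidate set. Recall that $i \in \mathcal{B}$ requires the existence of a mean-reward assignment $\bm{\mu}$ that (i) conforms to the partial side information $\mathcal{E}_\epsilon^S, \mathcal{E}_\epsilon^D$ and (ii) attains its maximum at coordinate $i$. To show $i \notin \mathcal{B}$ under the stated hypothesis, I would assume such a $\bm{\mu}$ exists and derive an impossible chain of inequalities from the three graph edges $(i,j),(i,k)\in\mathcal{E}_\epsilon^S$ and $(j,k)\in\mathcal{E}_\epsilon^D$.

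The execution proceeds in three short steps. First, I would translate the three edge conditions into constraints on the means using the UIM convention from Section \ref{subsec:formulation}: similarity (open-interval intersection) yields the strict bounds $|\mu_i - \mu_j| < \epsilon$ and $|\mu_i - \mu_k| < \epsilon$, while dissimilarity (empty intersection of open intervals) yields the non-strict bound $|\mu_j - \mu_k| \ge \epsilon$. Second, I would fix, without loss of generality, the labeling so that $\mu_j \ge \mu_k$, sharpening the dissimilarity condition to $\mu_j \ge \mu_k + \epsilon$. Third, using the optimality of $i$, which forces $\mu_i \ge \mu_j$, I combine with the similarity bound $\mu_i < \mu_k + \epsilon$ to obtain the chain
\begin{equation*}
  \mu_j \le \mu_i < \mu_k + \epsilon \le \mu_j,
\end{equation*}
which is the desired contradiction. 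The case $\mu_k > \mu_j$ is entirely symmetric, using the edge $(i,j)$ in place of $(i,k)$.

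The argument is essentially a one-line triangle-type inequality, so there is no substantive technical obstacle; the entire proof is a piece of careful bookkeeping on inequality directions. The single subtle point is that the dissimilarity inequality must be non-strict ($\ge \epsilon$) for the chain to close, which is precisely what the open-interval UIM convention guarantees. For clarity, I would explicitly cite the UIM definition at the step where the three inequalities are introduced, so the reader can verify that the strict/non-strict distinction is used consistently with the formulation in Section \ref{subsec:formulation}.
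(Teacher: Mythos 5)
Your proof is correct and matches the paper's reasoning: the paper states Proposition~1 without a dedicated proof, but the identical argument (any two neighbors of an arm that is optimal under a conforming assignment must have means in $(\mu_i-\epsilon,\mu_i]$ and hence be within $\epsilon$ of each other, contradicting their dissimilarity) appears in case 1 of the proof of Theorem~1 in Appendix~B. One small quibble: your chain $\mu_j\le\mu_i<\mu_k+\epsilon\le\mu_j$ is already strict at the middle step because the \emph{similarity} bound is strict, so the contradiction would survive even if the dissimilarity bound were $>\epsilon$ rather than $\ge\epsilon$; the non-strictness of the dissimilarity inequality is therefore not the load-bearing subtlety you suggest.
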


Based on this property, we develop the offline elimination step of LSDT-PSI with $O(K|\mathcal{E}_{\epsilon}^D|)$ complexity.

\begin{algorithm}
\caption*{~~{\bf{LSDT-PSI}} (Step 1): Offline Elimination}\label{EUCB-offline}
	\begin{algorithmic}
		\BState \textbf{Input}: $\mathcal{G}_{\epsilon}=(\mathcal{V},\mathcal{E}_{\epsilon}^S,\mathcal{E}_{\epsilon}^D)$.
		\vspace{.1cm}
		\BState \textbf{Output}: $\mathcal{B}_0$.
		\vspace{.1cm}
		\BState \textbf{Initialization}: $\mathcal{B}_0=\mathcal{V}$.
		\vspace{.1cm}
		\For{$i=1,2,...,K$}
		\vspace{.1cm}
			\State $\mathcal{B}_0\leftarrow \mathcal{B}_0\setminus\{i\}$ if there exist $j,k\in\mathcal{V}$ such that 
			\vspace{-.1cm}			
			$$(i,j), (i,k)\in\mathcal{E}_{\epsilon}^{S}, (j,k)\in\mathcal{E}^{D}_{\epsilon}.$$
			\vspace{-.5cm}
		\EndFor
	\end{algorithmic}
\end{algorithm}

It is clear that in general, $\mathcal{B}^*\subseteq\mathcal{B}\subseteq\mathcal{B}_0$. However, in certain scenarios, the partially revealed UIG provides sufficient topological information to identify the ground truth candidate set $\mathcal{B}^*$ obtained from the fully revealed UIG. We show that such information is fully exploited by the offline elimination step of LSDT-PSI to achieve the same performance as that of LSDT-CSI for the case of complete side information. 

Specifically, we make the following assumptions on $\mathcal{G}_{\epsilon}^*$ and its equivalence classes $\{\mathcal{B}_i^*\}_{i=1}^{m}$ assuming that the neighbor set of every arm $i\not\in\mathcal{B}^*$ is diverse enough. Without loss of generality, we assume an increasing order of the equivalence classes along the real line, i.e., $\forall 1\le i<j\le m$ and $\forall k_i\in\mathcal{B}_{i}^*,k_j\in\mathcal{B}_j^*$, we have $\mu_{k_i}<\mu_{k_j}$. Note that $\mathcal{B}^*=\mathcal{B}_1^*\cup\mathcal{B}^*_m$.

\begin{assumption}\label{diversity1}
For every $1<i<m$, assume that there exist $j,k$ such that $j<i<k$ and $\mathcal{B}_{j}^*,\mathcal{B}_{k}^{*}$ are connected to $\mathcal{B}_{i}^*$ but mutually disconnected in $\mathcal{G}_{\epsilon}^{*}$.\footnote{Two equivalence classes are connected if and only if at least one pair of arms from the two classes are adjacent in the UIG. It can be inferred from the equivalence relation that if there exists an adjacent arm pair from the two classes, all arm pairs are adjacent.}
\end{assumption}
\begin{assumption}\label{diversity2}
Assume that there exists a constant $\kappa>0$ and for every $i$, $|\mathcal{B}_i^*|\ge \kappa\log K$.
\end{assumption}

We further make a probabilistic assumption on the partial side information.
\begin{assumption}\label{observation}
The presence and the absence of an edge in the UIG $\mathcal{G}_{\epsilon}^*$ are revealed by the partial side information $\mathcal{E}_{\epsilon}^S$ and $\mathcal{E}_{\epsilon}^D$ independently with probabilities $p_S$  and $p_D$. Assume that $p_S^2p_D\ge 1-e^{-2/\kappa}$, where $\kappa$ is defined in Assumption \ref{diversity2}.
\end{assumption}
Note that as $\kappa$ increases, for every arm $i\not\in\mathcal{B}^*$, the number of dissimilar arm pairs that are similar to $i$ increases. Therefore, smaller probabilities of observing edges can still guarantee that arm $i$ is elilminated with high probability.

Based on these assumptions, we provide performance guarantee for the offline elimination step of LSDT-PSI through the following theorem. We also verify the results through numerical examples in Sec. \ref{sec:numerical}.

\begin{theorem}\label{sizeB0}
Given a UIG $\mathcal{G}^*_{\epsilon}$, under Assumptions \ref{diversity1}-\ref{observation}, with probability at least $1-\frac{1}{K^2}$, every arm $i\not\in\mathcal{B}^*$ is eliminated by the offline elimination step of LSDT-PSI and thus,
\begin{align}
\mathbb{E}_{\mathcal{E}_{\epsilon}^S,\mathcal{E}_{\epsilon}^D}\Big[\big|\mathcal{B}_0\big|\Big]=\big|\mathcal{B}^*\big|+o(1),
\end{align}
as $K\to\infty$, where $\mathcal{B}_0$ is the arm set remaining after the offline elimination step of LSDT-PSI.
\end{theorem}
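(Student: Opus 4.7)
The plan is to fix an arbitrary arm $i \notin \mathcal{B}^*$ in some intermediate equivalence class $\mathcal{B}_\ell^*$ (with $1 < \ell < m$) and show that with probability close to one the offline step of LSDT-PSI finds a triple $(i, j', k')$ that triggers Proposition \ref{prop2} and removes $i$. A union bound over the arms outside $\mathcal{B}^*$ then yields the claimed high-probability statement about $\mathcal{B}_0$, and the expected-size bound follows as a short corollary.

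The first ingredient is Assumption \ref{diversity1}: it guarantees two equivalence classes $\mathcal{B}_j^*, \mathcal{B}_k^*$ with $j < \ell < k$ that are both connected to $\mathcal{B}_\ell^*$ in the true UIG $\mathcal{G}_\epsilon^*$ but mutually disconnected. Because of the neighborhood-equivalence definition, this means that for \emph{every} pair $(j', k') \in \mathcal{B}_j^* \times \mathcal{B}_k^*$, the ground truth satisfies $(i, j'), (i, k') \in \mathcal{E}_\epsilon^*$ and $(j', k') \notin \mathcal{E}_\epsilon^*$. Hence every such pair is a potential witness for Proposition \ref{prop2}, provided the three corresponding edges are revealed with the right labels in $\mathcal{G}_\epsilon$.

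The second ingredient is a counting argument that preserves independence. By Assumption \ref{diversity2}, both $|\mathcal{B}_j^*|$ and $|\mathcal{B}_k^*|$ are at least $\kappa \log K$, so I can select $L = \lceil \kappa \log K \rceil$ pairs $(j'_s, k'_s)$ with distinct $j'_s \in \mathcal{B}_j^*$ and distinct $k'_s \in \mathcal{B}_k^*$. Let $E_s$ be the event that the three edges $(i, j'_s)$, $(i, k'_s)$, $(j'_s, k'_s)$ are revealed respectively as type-S, type-S, and type-D. By Assumption \ref{observation}, $\Pr[E_s] = p_S^2 p_D$, and because different indices $s$ involve pairwise disjoint edges, the events $\{E_s\}_{s=1}^{L}$ are mutually independent. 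The probability that arm $i$ is \emph{not} eliminated is therefore at most $\prod_{s=1}^{L} \Pr[\overline{E_s}] = (1 - p_S^2 p_D)^L$, and the numerical condition $p_S^2 p_D \ge 1 - e^{-2/\kappa}$ is calibrated so that this product is at most $\exp(-2\log K) = 1/K^2$. A union bound over the at most $K$ arms outside $\mathcal{B}^*$ (absorbing a constant loss in the exponent if needed) then gives the high-probability guarantee. The expected-size conclusion follows from $\mathbb{E}[|\mathcal{B}_0|] - |\mathcal{B}^*| \le \sum_{i \notin \mathcal{B}^*} \Pr[i \in \mathcal{B}_0] = o(1)$ as $K \to \infty$.

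The main obstacle, and the reason a naive argument fails, is the temptation to use all $|\mathcal{B}_j^*|\cdot|\mathcal{B}_k^*|$ candidate witness triples at once. These triples share the endpoint $i$ and therefore share edge-revelation events, so a straightforward product of $(1-p_S^2 p_D)$ is not valid. The disjoint-pairing construction above sacrifices a logarithmic factor in the number of trials but restores genuine independence, and the condition $p_S^2 p_D \ge 1 - e^{-2/\kappa}$ in Assumption \ref{observation} is precisely tuned to absorb that loss and deliver the $1/K^2$ tail.
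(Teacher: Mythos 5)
Your proposal is correct and follows essentially the same route as the paper's proof: the paper likewise selects $N=\min\{|\mathcal{B}_j^*|,|\mathcal{B}_k^*|\}\ge\kappa\log K$ elements from each of the two witness classes, pairs them one-to-one into disjoint triples $(u_n,i),(v_n,i),(u_n,v_n)$ so that independence holds, and bounds the survival probability by $(1-p_S^2p_D)^N\le K^{-2}$ before summing over arms for the expectation. Your explicit remark about why disjoint pairing is needed (shared edges at the common endpoint $i$) is a useful clarification of a point the paper leaves implicit, but it is not a different argument.
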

\begin{proof}
See Appendix G in the supplementary material.
\end{proof}

\subsection{Online Aggregation}
Now we present the second step, the online learning procedure of LSDT-PSI. We first define a similarity graph $\mathcal{G}_{\epsilon}'=(\mathcal{V}',{\mathcal{E}_{\epsilon}^{S}}')$ restricted to the remaining arm set $\mathcal{B}_0$: $\mathcal{V}'=\mathcal{B}_0$ and ${\mathcal{E}^{S}_{\epsilon}}'=\{(i,j)\big|i,j\in \mathcal{B}_0, (i,j)\in\mathcal{E}^{S}_{\epsilon}\}.$
For every arm $i\in\mathcal{B}_0$, we define an exploration value $z_i\in[0,1]$, which measures the topological significance of node $i$ in the similarity graph $\mathcal{G}_{\epsilon}'$ and determines the frequency of playing arm $i$. Intuitively, a node with a higher degree has a higher exploration value since playing this node provides information about more (neighboring) nodes. Specifically, we define exploration values $\{z_i\}_{i\in\mathcal{B}_0}$ as the optimal solution to the following LP.
\vspace{-.1cm}
\begin{equation}
	\begin{aligned}
		\mathcal{P}_2:&~C_2=\min_{\{z_i\}_{i\in\mathcal{V}'}}\sum_{i\in\mathcal{V}'}z_i,\\
		s.t.&\sum_{j\in\mathcal{N}'[i]}z_j\ge 1,~\forall i\in\mathcal{V}',\\
		&z_i\ge 0,~\forall i\in\mathcal{V}',
	\end{aligned}
\end{equation}
where $\mathcal{N}'[i]$ is the set of neighbors of node $i$ in $\mathcal{G}_{\epsilon}'$ (including $i$). In the online learning procedure, the number of times arm $i$ is played is proportional to its exploration value $z_i$. Note that if at least $n_i$ plays are necessary to distinguish a suboptimal arm $i$ from the optimal one in the classic MAB problem, now if suffices to play only $z_in_i$ times by aggregating observations from every neighboring arm $j\in\mathcal{N}'[i]$ that is played $z_jn_i$ times. Note that $z_i\le 1,\forall i$ and $C_2$ is upper bounded by the size of the minimum dominating set of ${\mathcal{G}_{\epsilon}}'$.

We briefly summarize the second step of LSDT-PSI: the algorithm is played in epochs and during epoch $m$, arms are played up to $\tau_i(m)\sim \Theta(z_i\log T)$ times. Arms less likely to be optimal are eliminated at the end of every epoch and only two types of arms will be played in the next epoch: 1) non-eliminated arms and 2) arms with non-eliminated neighbors. After~a~sufficient~number~of~epochs,~only~arms~close~to~the~optimal~one~remain~and~we~use single arm indices for selection. Let $\bar{x}_i(m)$ be the average reward from arm $i$ up to epoch $m$.
		\begin{algorithm}
\caption*{~~{\bf{LSDT-PSI}} (Step 2): Online Aggregation}\label{EUCB-online}
	\begin{algorithmic}
		\BState \textbf{Input}: $\mathcal{G}'_{\epsilon}=(\mathcal{V}',{\mathcal{E}_{\epsilon}^S}')$, time horizon $T$, parameter $\lambda>0$.
		\vspace{.1cm}
		\BState {\bf{Initialization}}: Let $\tilde{\Delta}_0=1$, $\mathcal{S}_0=\mathcal{B}_0$, $\{z_i\}_{i\in\mathcal{V}'}$ be the solution to $\mathcal{P}_2$, $m_f=\min\left\{\left\lceil\log_2\left(\frac{8}{\sqrt{2\lambda}\epsilon}\right)\right\rceil,\left\lfloor\frac{1}{2}\log_2 \frac{T}{e}\right\rfloor\right\}$.\vspace{.1cm}
		\For{$m = 0,1,...,m_f$}
		\vspace{0.1cm}
			\If{$|\mathcal{B}_m|=1$} Play $i\in \mathcal{B}_m$ until time $T$.
			\vspace{0.1cm}
			\Else \For{each arm $i\in \mathcal{S}_m$}
				\vspace{0.1cm}
					\State Play arm $i$ until
					$
						\tau_i(m)=\left\lceil\frac{\lambda z_i\log(T\tilde{\Delta}_m^2)}{\tilde{\Delta}_m^2}\right\rceil.
					$
				\EndFor
				\State Let $\mathcal{B}_{m+1}=\mathcal{B}_m$.
				\vspace{0.1cm}
				\For{each arm $i\in \mathcal{B}_m$}
				\vspace{0.1cm}
					\State $\mathcal{B}_{m+1}\leftarrow \mathcal{B}_{m+1}\setminus\{i\}$ if
					\begin{equation}\label{UCBLCB}
					\begin{aligned}
						&\frac{\sum_{j\in\mathcal{N}'[i]}\bar{x}_j(m)\tau_j(m)}{\sum_{j\in\mathcal{N}'[i]}\tau_j(m)}+\sqrt{\frac{\log(T\tilde{\Delta}_m^2)}{2\sum_{j\in\mathcal{N}'[i]}\tau_j(m)}}+\epsilon\le \\
						&\max_{k\in \mathcal{B}_m}\left\{\frac{\sum_{j\in\mathcal{N}'[k]}\bar{x}_j(m)\tau_j(m)}{\sum_{j\in\mathcal{N}'[k]}\tau_j(m)}-\sqrt{\frac{\log(T\tilde{\Delta}_m^2)}{2\sum_{j\in\mathcal{N}'[k]}\tau_j(m)}}\right\}.
					\end{aligned}
					\end{equation}
				\EndFor
				\State Let $\mathcal{S}_{m+1}=\{i:\mathcal{N}'[i]\cap \mathcal{B}_{m+1}\neq\emptyset\}$.
				\vspace{.1cm}
				\State Let $\tilde{\Delta}_{m+1}=\tilde{\Delta}_m/2$.
			\EndIf
		\vspace{.1cm}	
		\EndFor		
		\For{$t=\sum_{i\in\mathcal{V}'}{\tau_i(m_f)}+1,...,T$}
		\vspace{.1cm}	
			\State Play arm $i^*_t=\argmax_{i\in \mathcal{B}_{m_f+1}}\bar{x}_i(t-1) + \sqrt{\frac{2\log (t-1)}{\tau_i(t-1)}}.$
		\EndFor
	\end{algorithmic}
\end{algorithm}
\vspace{-.3cm}
\subsection{Order Optimality}

The following theorem provides an upper bound on regret of LSDT-PSI for any given partially revealed UIG.

\begin{theorem}\label{upperboundpsi}
Given a partially revealed UIG $\mathcal{G}_{\epsilon}$. Assume that the reward distribution of reach arm is $\sigma=1/2$ sub-Gaussian\footnote{Certain sub-Gaussian distributions (e.g. Bernoulli distribution, uniform distribution on $[0,1]$) have parameters $\sigma=1/2$. See Sec. \ref{sec:discussion} for extensions to general $\sigma$.}. Let $\mathcal{Q}=\{i\in \mathcal{B}_0:\Delta_i>4\epsilon\}$. Then the regret of LSDT-PSI up to time $T$ is upper bounded by:
	
	\begin{equation}\label{ucbpsi}
		\begin{aligned}
			R(T)\le&\sum_{j\in \mathcal{B}_0\setminus(\mathcal{Q\cup A})}\Delta_j\max\left\{\frac{8\log T}{\Delta_j^2},\frac{32z_j\log(T\epsilon^2)}{\epsilon^2}\right\}+\\
			&\sum_{i\in \mathcal{Q}}\Delta_iz_i\frac{32\log(T\hat{\Delta}_i^2)}{\hat{\Delta}_i^2}+O(|\mathcal{V}'|),
		\end{aligned}
	\end{equation}
where $\hat{\Delta}_i=\max\{\min_{j\in\mathcal{N}'[i]}\Delta_j-3\epsilon,\epsilon\}$.
\end{theorem}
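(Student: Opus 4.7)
The plan is to decompose $R(T)=\sum_{i\in\mathcal{B}_0\setminus\mathcal{A}}\Delta_i\,\mathbb{E}[\tau_i(T)]$ (arms outside $\mathcal{B}_0$ are never played after the offline elimination step) and separately bound $\mathbb{E}[\tau_i(T)]$ for (i) large-gap arms $i\in\mathcal{Q}$, handled entirely by the epoch-based elimination, and (ii) near-optimal arms $j\in\mathcal{B}_0\setminus(\mathcal{Q}\cup\mathcal{A})$, which may survive all $m_f$ epochs and then be handled by the final UCB phase on $\mathcal{B}_{m_f+1}$.

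\textbf{Step 1 (Concentration of the aggregated estimator).} For each arm $i\in\mathcal{V}'$ and epoch $m$, define the neighborhood-aggregated estimator $\hat{\mu}_i^{\text{agg}}(m):=\sum_{j\in\mathcal{N}'[i]}\bar{x}_j(m)\tau_j(m)\big/N_i(m)$ where $N_i(m):=\sum_{j\in\mathcal{N}'[i]}\tau_j(m)$, and its bias-target $\tilde{\mu}_i(m):=\sum_{j\in\mathcal{N}'[i]}\mu_j\tau_j(m)\big/N_i(m)$. Since each $\bar{x}_j-\mu_j$ is $(1/2)$-sub-Gaussian averaged over $\tau_j(m)$ i.i.d.\ samples, a Hoeffding bound yields
\begin{equation*}
\mathbb{P}\!\left(\bigl|\hat{\mu}_i^{\text{agg}}(m)-\tilde{\mu}_i(m)\bigr|>\sqrt{\tfrac{\log(T\tilde{\Delta}_m^2)}{2N_i(m)}}\right)\le\frac{2}{T\tilde{\Delta}_m^2}.
\end{equation*}
Because every $j\in\mathcal{N}'[i]$ is similar to $i$ in $\mathcal{G}'_\epsilon$, $|\mu_j-\mu_i|<\epsilon$, hence $|\tilde{\mu}_i(m)-\mu_i|<\epsilon$, which is exactly what the additive $\epsilon$ slack in the elimination rule (\ref{UCBLCB}) absorbs.

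\textbf{Step 2 (Eliminating large-gap arms).} Fix $i\in\mathcal{Q}$ and let $m^\star(i)$ be the first epoch at which $\tilde{\Delta}_m\le c\hat{\Delta}_i$ for a suitable constant $c$. Under the concentration event of Step~1, the LHS of (\ref{UCBLCB}) for $i$ is bounded above by $\mu_i+\epsilon+2\sqrt{\log(T\tilde{\Delta}_m^2)/(2N_i(m))}+\epsilon$ and, provided an arm within $\epsilon$ of $\mu_{\max}$ survives in $\mathcal{B}_m$ (I argue this holds inductively with high probability because the elimination rule never discards an arm whose true mean is $\mu_{\max}$ unless a concentration event fails), the RHS is at least $\mu_{\max}-\epsilon-2\sqrt{\log(T\tilde{\Delta}_m^2)/(2N_{i_{\max}}(m))}$. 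The LP constraint $\sum_{j\in\mathcal{N}'[i]}z_j\ge 1$ gives $N_i(m)\ge \lambda\log(T\tilde{\Delta}_m^2)/\tilde{\Delta}_m^2$, so each confidence radius is at most $\tilde{\Delta}_m/\sqrt{2\lambda}$. For $\lambda=32$, the inequality triggers as soon as $\tilde{\Delta}_m$ falls below $\hat{\Delta}_i$, where the $-3\epsilon$ in $\hat{\Delta}_i=\max\{\min_{j\in\mathcal{N}'[i]}\Delta_j-3\epsilon,\epsilon\}$ accounts for the two $\epsilon$-bias gaps between true means and targets plus the $\epsilon$ slack in (\ref{UCBLCB}); the outer $\max$ with $\epsilon$ reflects that epoch-phase elimination cannot resolve gaps finer than the similarity resolution. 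Consequently $\tau_i(T)\le \tau_i(m^\star)\le \lceil 32 z_i\log(T\hat{\Delta}_i^2)/\hat{\Delta}_i^2\rceil$, producing the second sum of (\ref{ucbpsi}).

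\textbf{Step 3 (Small-gap arms and the final UCB phase).} For $j\in\mathcal{B}_0\setminus(\mathcal{Q}\cup\mathcal{A})$, $\Delta_j\le 4\epsilon$ is too small for the aggregated test to separate $j$ from the optimum (because of the $\epsilon$ slack), so $j$ may be played up to the cap $\tau_j(m_f)\le 32z_j\log(T\epsilon^2)/\epsilon^2$ across the epoch phase (using $\tilde{\Delta}_{m_f}=\Theta(\epsilon)$). If $j$ survives into the final UCB phase on $\mathcal{B}_{m_f+1}$, the classical Auer--Cesa-Bianchi--Fischer argument for $(1/2)$-sub-Gaussian rewards gives an additional $8\log T/\Delta_j^2$ plays. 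The sum of the two contributions is at most twice their maximum, yielding (after absorbing the factor $2$ into the $O(|\mathcal{V}'|)$ term) the first sum of (\ref{ucbpsi}).

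\textbf{Step 4 (Failure bookkeeping).} Summing the failure probabilities $2/(T\tilde{\Delta}_m^2)$ of Step~1 over all arms and epochs, the expected regret contribution from concentration failures is $O(|\mathcal{V}'|)$, which also absorbs the constant initialization cost of playing each arm once and the rounding $+1$ from the ceiling. Combining Steps~2--4 gives (\ref{ucbpsi}).

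The main obstacle is Step~2, specifically establishing that an arm within $\epsilon$ of $\mu_{\max}$ persists in $\mathcal{B}_m$ at every epoch with high probability, so that the $\max$ on the RHS of (\ref{UCBLCB}) is indeed large enough to trigger elimination of $i\in\mathcal{Q}$; this requires a simultaneous induction that progressively eliminates sub-optimal arms while preserving at least one near-optimal reference arm. A secondary subtlety is the accounting for the set $\mathcal{S}_m=\{i:\mathcal{N}'[i]\cap\mathcal{B}_m\ne\emptyset\}$: an arm $i$ keeps being played as long as any of its neighbors remains in $\mathcal{B}_m$, so the per-arm bound in Step~2 must use the epoch at which $i$ \emph{itself} is eliminated, not when $i$'s own neighborhood statistics first exceed the threshold.
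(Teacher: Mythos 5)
Your proposal follows essentially the same route as the paper's proof: Hoeffding concentration of the neighborhood-aggregated estimators with the $\epsilon$-bias absorbed by the slack in (\ref{UCBLCB}), a case split into (a $\mathcal{Q}$-arm failing to be eliminated by its critical epoch / the optimal arm being eliminated / successful elimination), and the final UCB phase for arms with $\Delta_j\le 4\epsilon$, with the two subtleties you flag (preserving $i_{\max}$ in $\mathcal{B}_m$, and charging each $i\in\mathcal{Q}$ for plays until its \emph{last surviving neighbor} is eliminated via $\mathcal{S}_m$) handled exactly as in the paper. Two small corrections on constants: the $32$ in (\ref{ucbpsi}) does not come from setting $\lambda=32$ --- $\lambda$ cancels, since $\tilde{\Delta}_{m_i}\ge\sqrt{2\lambda}\,\hat{\Delta}_i/8$ gives $\lambda/\tilde{\Delta}_{m_i}^2\le 32/\hat{\Delta}_i^2$ for any $\lambda$ --- and the $\max$ in the first sum is justified because the final-phase UCB index counts all of arm $j$'s accumulated plays (so the epoch-phase and UCB-phase bounds are alternative caps on the same total), not by absorbing a factor of $2$ on a $\log T$ term into $O(|\mathcal{V}'|)$.
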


\begin{proof}
See Appendix H in the supplementary material.
\end{proof}

\begin{remark}
	For fixed $\Delta_i$, the regret of LSDT-PSI is of order
	\begin{align}
		O\Big((\gamma(\mathcal{G}_{\epsilon}')+|\mathcal{B}_0\setminus(\mathcal{Q}\cup\mathcal{A})|)\log T\Big),
	\end{align}
	as $T\to\infty$, where $\gamma(\mathcal{G}_{\epsilon}')$ is the size of the minimum dominating set of graph $\mathcal{G}_{\epsilon}'$ and $|\mathcal{B}_0\setminus(\mathcal{Q}\cup\mathcal{A})|$ is the number of sub-optimal arms that are $4\epsilon$-close to the optimal one. It is not difficult to see that as $\epsilon$ increases, $\gamma(\mathcal{G}_{\epsilon}')$ decreases and $|\mathcal{B}_0\setminus(\mathcal{Q}\cup\mathcal{A})|$ increases. For an appropriate $\epsilon$, a sublinear scaling of regret in terms of the number of arms can be achieved.
\end{remark}

Recall that in Theorem \ref{sizeB0}, we show that under certain assumptions, the offline elimination step of LSDT-PSI achieves the same performance as LSDT-CSI for the case of complete side information. The following corollary further shows the order optimality of LSDT-PSI in terms of both $K$ and $T$.

\begin{corollary}\label{optpsi}
	Assume that Assumptions \ref{diversity1}-\ref{observation} hold and $\Delta_i>4\epsilon,\forall i\in\mathcal{B}_{i_{\min}}^*$. For fixed $\Delta_i,p_S,p_D$, the expectation of regret of LSDT-PSI taken over random realizations of the partial side information $\mathcal{E}_{\epsilon}^S$, $\mathcal{E}_{\epsilon}^D$ is upper bounded as follows:
\begin{align}
\mathbb{E}_{\mathcal{E}_{\epsilon}^S,\mathcal{E}_{\epsilon}^D}[R(T)]\le O\Big((1+|\mathcal{B}_{i_{\max}}^*\setminus\mathcal{A}|)\log T\Big),
\end{align}
as $T\to\infty$, which matches the lower bound on regret for the case of complete side information established in Theorem \ref{lowerbound}.
\end{corollary}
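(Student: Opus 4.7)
The plan is to decompose $\mathbb{E}_{\mathcal{E}_{\epsilon}^S,\mathcal{E}_{\epsilon}^D}[R(T)]$ by conditioning on the event $E=\{\mathcal{B}_0=\mathcal{B}^*\}$ that the offline elimination step recovers the ground-truth candidate set. Theorem \ref{sizeB0} already supplies $\mathbb{P}(E)\ge 1-1/K^2$. On $E$ I would invoke Theorem \ref{upperboundpsi} with the reduced action space $\mathcal{B}_0=\mathcal{B}^*=\mathcal{B}_{i_{\min}}^*\cup\mathcal{B}_{i_{\max}}^*$, and on $E^c$ I would invoke the generic bound of Theorem \ref{upperboundpsi} on the worst-case $\mathcal{B}_0\subseteq\mathcal{V}$, which yields at most $O(K\log T)$; since $\mathbb{P}(E^c)\le 1/K^2$, the bad-event contribution to the expectation is only $O(\log T/K)$ and is absorbed into the main term.

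The key structural consequence of the hypothesis $\Delta_i>4\epsilon$ for every $i\in\mathcal{B}_{i_{\min}}^*$ is that, on $E$, the entire worst equivalence class is forced into $\mathcal{Q}$, so $\mathcal{B}_0\setminus(\mathcal{Q}\cup\mathcal{A})\subseteq\mathcal{B}_{i_{\max}}^*\setminus\mathcal{A}$. Each summand in the first sum of (\ref{ucbpsi}) is then $O(\log T)$ with a constant depending only on the (fixed) $\Delta_j$, $\epsilon$, and $z_j\le 1$, so the first sum contributes $O(|\mathcal{B}_{i_{\max}}^*\setminus\mathcal{A}|\log T)$.

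For the second sum, I would factor out $\log T$ and bound $\sum_{i\in\mathcal{Q}}32\Delta_i z_i/\hat{\Delta}_i^2$ using $\hat{\Delta}_i\ge\epsilon$ and the boundedness of each $\Delta_i$, reducing the task to showing $\sum_{i\in\mathcal{Q}}z_i$ is $T$-independent. This is immediate from the LP $\mathcal{P}_2$, which (as noted after the definition of $\mathcal{P}_2$) satisfies $\sum_i z_i\le\gamma(\mathcal{G}_{\epsilon}')\le K$; for fixed $K,p_S,p_D$ this is a constant in $T$, so the $\mathcal{Q}$ term contributes $O(\log T)$. Combining both events yields $\mathbb{E}_{\mathcal{E}_{\epsilon}^S,\mathcal{E}_{\epsilon}^D}[R(T)]\le O((1+|\mathcal{B}_{i_{\max}}^*\setminus\mathcal{A}|)\log T)$, matching the lower bound of Theorem \ref{lowerbound}.

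The main obstacle I expect is the bookkeeping on the good event: verifying rigorously that $\Delta_i>4\epsilon$ on $\mathcal{B}_{i_{\min}}^*$ really does force $\mathcal{B}_0\setminus(\mathcal{Q}\cup\mathcal{A})\subseteq\mathcal{B}_{i_{\max}}^*\setminus\mathcal{A}$ (so that no worst-class arm leaks into the first sum of (\ref{ucbpsi})), and that the $\hat{\Delta}_i$ appearing in the $\mathcal{Q}$ term stays uniformly bounded away from $0$ by $\epsilon$. Everything else reduces to routine counting and linearity of expectation over $\mathcal{E}_{\epsilon}^S,\mathcal{E}_{\epsilon}^D$ combined with Theorem \ref{sizeB0}.
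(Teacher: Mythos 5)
Your overall decomposition (condition on the good offline event, then split the bound of Theorem \ref{upperboundpsi} into the near-optimal arms and the set $\mathcal{Q}$) is the same as the paper's, and your observations that the bad event contributes negligibly and that $\Delta_i>4\epsilon$ on $\mathcal{B}_{i_{\min}}^*$ pushes the whole worst class into $\mathcal{Q}$ (so the first sum is over a subset of $\mathcal{B}_{i_{\max}}^*\setminus\mathcal{A}$) are both correct. But there is a genuine gap in your treatment of the $\mathcal{Q}$ term. You bound $\sum_{i\in\mathcal{Q}}z_i\le\gamma(\mathcal{G}_{\epsilon}')\le K$ and declare this ``a constant in $T$ for fixed $K$.'' The corollary is not a statement for fixed $K$: its content is precisely the joint scaling in $K$ and $T$, and the claim that the upper bound matches the lower bound $\Omega\big((1+|\mathcal{B}_{i_{\max}}^*\setminus\mathcal{A}|)\log T\big)$ of Theorem \ref{lowerbound} requires the constant in front of $\log T$ to be $O(1+|\mathcal{B}_{i_{\max}}^*\setminus\mathcal{A}|)$ with no residual $K$-dependence. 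Your argument only yields $O\big((K+|\mathcal{B}_{i_{\max}}^*\setminus\mathcal{A}|)\log T\big)$, which is no better than running UCB1 on all arms and does not match the lower bound.

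The missing idea is a probabilistic bound on the LP value over the random similarity graph. On the good event, the only contribution to $\sum_{i\in\mathcal{Q}}z_i$ beyond the $|\mathcal{B}_{i_{\max}}^*\setminus\mathcal{A}|$ count comes from $\sum_{i\in\mathcal{B}_{i_{\min}}^*}z_i$, and one must show its conditional expectation is bounded by a constant $c_{p_S}$ depending only on $p_S$, \emph{not} on $K$ or on $L=|\mathcal{B}_{i_{\min}}^*|$. The paper does this by noting that, conditioned on the elimination event (which only involves edges with an endpoint outside $\mathcal{B}^*$), the type-S edges inside $\mathcal{B}_{i_{\min}}^*$ remain i.i.d.\ Bernoulli$(p_S)$; it then exhibits the candidate solution $z_i^*=2/(p_S L)$ for $\mathcal{P}_2$ restricted to this class and shows by Hoeffding's inequality that each arm has at least $p_S L/2$ similar neighbors within the class, so $\{z_i^*\}$ is feasible, with probability at least $1-1/L$; on the complementary event the trivial bound $C_L\le L$ times probability $1/L$ contributes $O(1)$. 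This yields $\mathbb{E}\big[\sum_{i\in\mathcal{B}_{i_{\min}}^*}z_i\,\big|\,F\big]\le 2/p_S+O(1)$, which is the step your proposal lacks and without which the stated order in $K$ cannot be obtained.
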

\begin{proof}
See Appendix I in the supplementary material.
\end{proof}

\section{Extensions}\label{sec:discussion}
In this section, we discuss extensions of the proposed policies: LSDT-CSI and LSDT-PSI as well as their regret analysis to cases with disconnected UIGs and other reward distributions. We also discuss the extension of applying Thompson Sampling techniques to the LSDT learning structure.

\vspace{-.3cm}
\subsection{Extensions to disconnected UIG}
Suppose that the UIG $\mathcal{G}^*_{\epsilon}$ has $M$ ($M>1$) connected components. It is not difficult to see that every connected component of $\mathcal{G}_{\epsilon}^*$ is still a UIG and the set of left anchors of $\mathcal{G}_{\epsilon}^*$ is the union of left anchors of all components. Therefore, in the case of complete side information, the offline elimination step of LSDT-CSI outputs at most $2M$ equivalences classes and the second step of LSDT-CSI can be directly applied by maintaining a class index for every equivalence class as defined in (\ref{Hindex}). Moreover, by extending the regret analysis of LSDT-CSI in Theorem \ref{thmupperbound} as well as the lower bound on regret for uniformly good policies in Theorem \ref{lowerbound} to the disconnected case, we can show that LSDT-CSI achieves an order optimal regret, i.e.,
\begin{align}
	R(T)\sim\Theta\Big((M+|\mathcal{B}^*_{i_{\max}}\setminus\mathcal{A}|)\log T\Big),
\end{align}
as $T\to\infty$. In the extreme case when $M=K$ (e.g., $\epsilon\to 0$), LSDT-CSI degenerates to the classic UCB policy and $R(T)\sim\Theta(K\log T)$.

In the case of partial side information, the LSDT-PSI policy along with its regret analysis applies to any partially revealed UIG without assumptions on the connectivity of the graph. The upper bound on regret in Theorem \ref{upperboundpsi} still holds when $\mathcal{G}^*_{\epsilon}$ has $M$ connected components. In the extreme case where $M=K$, the size of the minimum dominating set of the similarity graph $\mathcal{G}_{\epsilon}'$ equals $K$ and thus, $R(T)\sim O(K\log T)$.

To show the order optimality of LSDT-PSI in the disconnected case, we need certain modifications on the assumptions of the UIG. We consider every connected component $m$ of $\mathcal{G}^*_{\epsilon}$ with $\ell$ equivalence classes $\{\mathcal{B}_{i}^{*^{(m)}}\}_{i=1}^{\ell}$. We assume that Assumptions \ref{diversity1} and \ref{diversity2} hold for every connected component and without loss of generality, we assume that the optimal arm $i_{\max}$ is in component $m=1$. Then under Assumption \ref{observation}, we can extend the regret analysis in Corollary \ref{optpsi} to the case where $\mathcal{G}^*_{\epsilon}$ has $M$ connected components. It can be shown that the expected regret of LSDT-PSI is upper bounded by 
\begin{align}
	O\Big((M+|\mathcal{B}^*_{i_{\max}}\setminus\mathcal{A}|)\log T\Big),
\end{align}
as $T\to \infty$, which matches the lower bound in the case of complete side information.
\vspace{-.3cm}
\subsection{Extensions to Other Distributions}
Recall that in the regret analysis of LSDT-CSI and LSDT-PSI, we assume sub-Gaussian reward distributions with parameter $\sigma=1$ (e.g., standard normal distribution) or $\sigma=1/2$ (e.g., Bernoulli distribution). We first discuss extensions to general sub-Gaussian distributions with arbitrary parameters $\sigma$. 

In LSDT-CSI, by replacing the second terms of the UCB indices defined in (\ref{Lindex}) and (\ref{Hindex}) by $\sqrt{\frac{\alpha\log t}{\tau_j(t)}}$ and $\sqrt{\frac{\alpha\log t}{\sum_{j\in\mathcal{B}_i^*}\tau_j(t)}}$ where $\alpha$ is an input parameter, the regret analysis in Theorem \ref{thmupperbound} still applies and the upper bound on regret is only affected up to a constant scaling factor, as long as $\alpha> 6\sigma^2$.  A similar extension also applies to LSDT-PSI if we change the second terms of the UCB indicies in (\ref{UCBLCB}) to $\sqrt{\frac{\beta\log (T\tilde{\Delta}_{m}^2)}{\sum_{j\in\mathcal{N}'[i]}\tau_j(m)}}$ where $\beta\ge 2\sigma^2$.

Furthermore, we can extend the results for sub-Gaussian reward distributions to other distribution types such as light-tailed and heavy-tailed distributions. There are standard techniques for such extensions by replacing the concentration result with the corresponding ones for light-tailed and heavy-tailed distributions (the latter also requires replacing sample means with truncated sample means). Similar extensions for classic MAB problems without side information are discussed in \cite{vakili2013achieving,vakili2013deterministic}. To illuminate the main ideas without too much technicality, most existing work assumes an even stronger assumption of bounded support in $[0,1]$ (see \cite{auer2002finite},\cite{garivier2011kl}, \cite{langford2008epoch}, etc.).
\vspace{-.3cm}
\subsection{Extensions to Thompson Sampling Techniques}
The two-step learning structure LSDT is in general independent of the specific arm selection rule adopted in the online learning step. We discuss here how Thompson Sampling (TS) techniques can be extended and incorporated into the basic structure with aggregation of reward observations. Specifically, in the case of complete side information, after reducing the action space to the candidate set via the offline step, we adopt a similar hierarchical online learning policy as that used in LSDT-CSI by maintaining two posterior distributions on the reward parameters, one at the equivalence class level, the other at the arm level. At each time, the policy first randomly selects an equivalence class according to its class-level probability of containing the optimal arm and then randomly draws an arm within the class according to its arm-level probability of being optimal. In the case of partial side information, similar to LSDT-PSI, an eliminative strategy is carried out to sequentially eliminate arms less likely to be optimal. At each time, an arm is randomly drawn according its arm-level posterior distribution of being optimal. The observation from the selected arm is also used to update higher level posterior distributions of its neighbors, which aggregate observations from all similar arms. According to the high level posterior distribution, the arm that is least likely to be optimal is eliminated if it has been explored for sufficient times. Simulation results in Appendix \ref{TSsimulation} show a similar performance gain by exploiting the side information on arm similarity and dissimilarity through the two-step learning structure when TS is incorporated in both cases. To achieve a full exploitation of the side information and establish the order optimality on regret, however, further studies are required.

\begin{figure}[t]
	\begin{center}
	\vspace{-.3cm}
	\begin{subfigure}[b]{0.25\textwidth}
	\begin{center}
		\includegraphics[width=\textwidth]{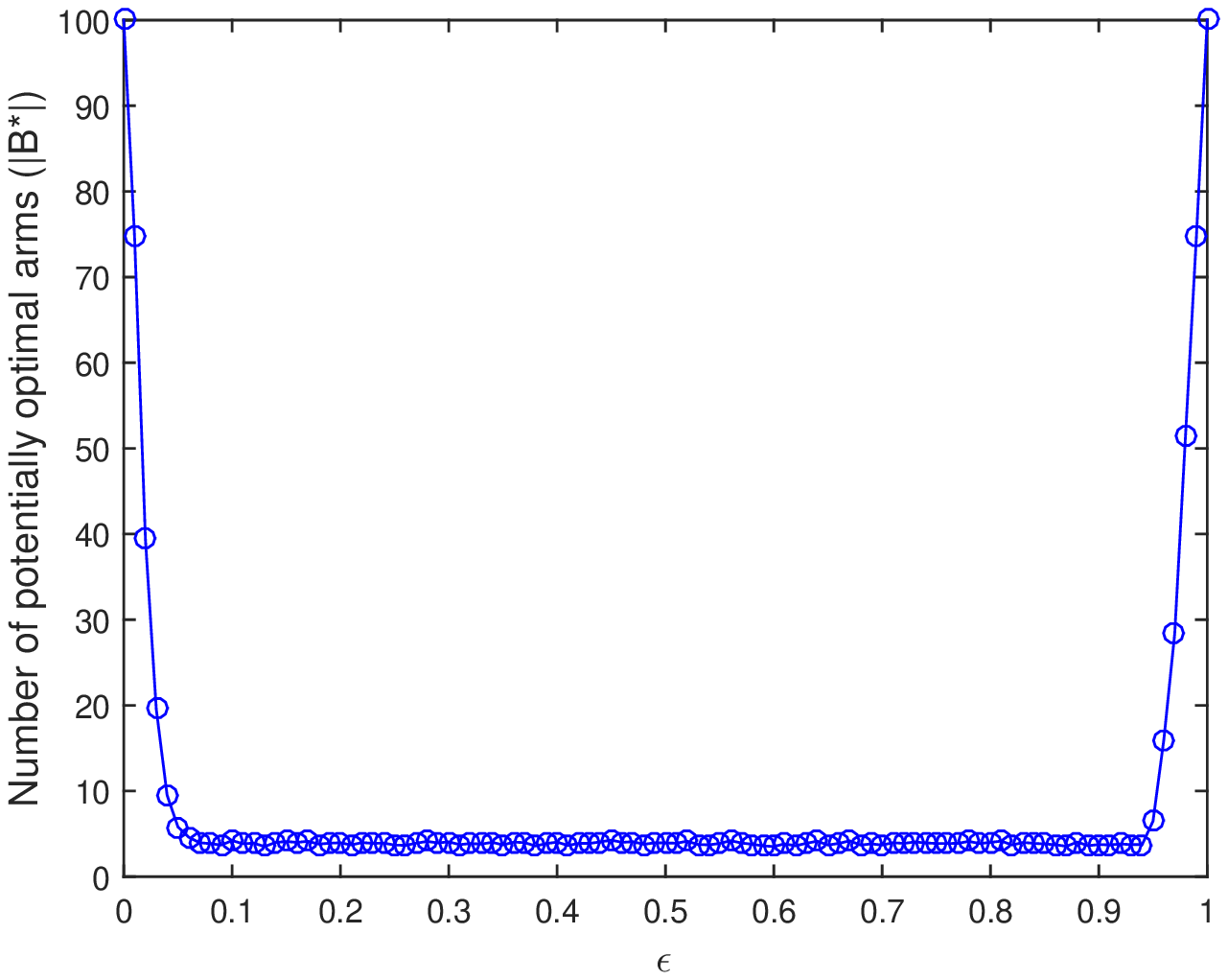}
		\caption{\small{ $|\mathcal{B}^*|$ v.s. $\epsilon$. }}
		\label{simulation1a}
	\end{center}
	\end{subfigure}%
	\hspace{-.3cm}
	\begin{subfigure}[b]{0.25\textwidth}
	\begin{center}
		\includegraphics[width=\textwidth]{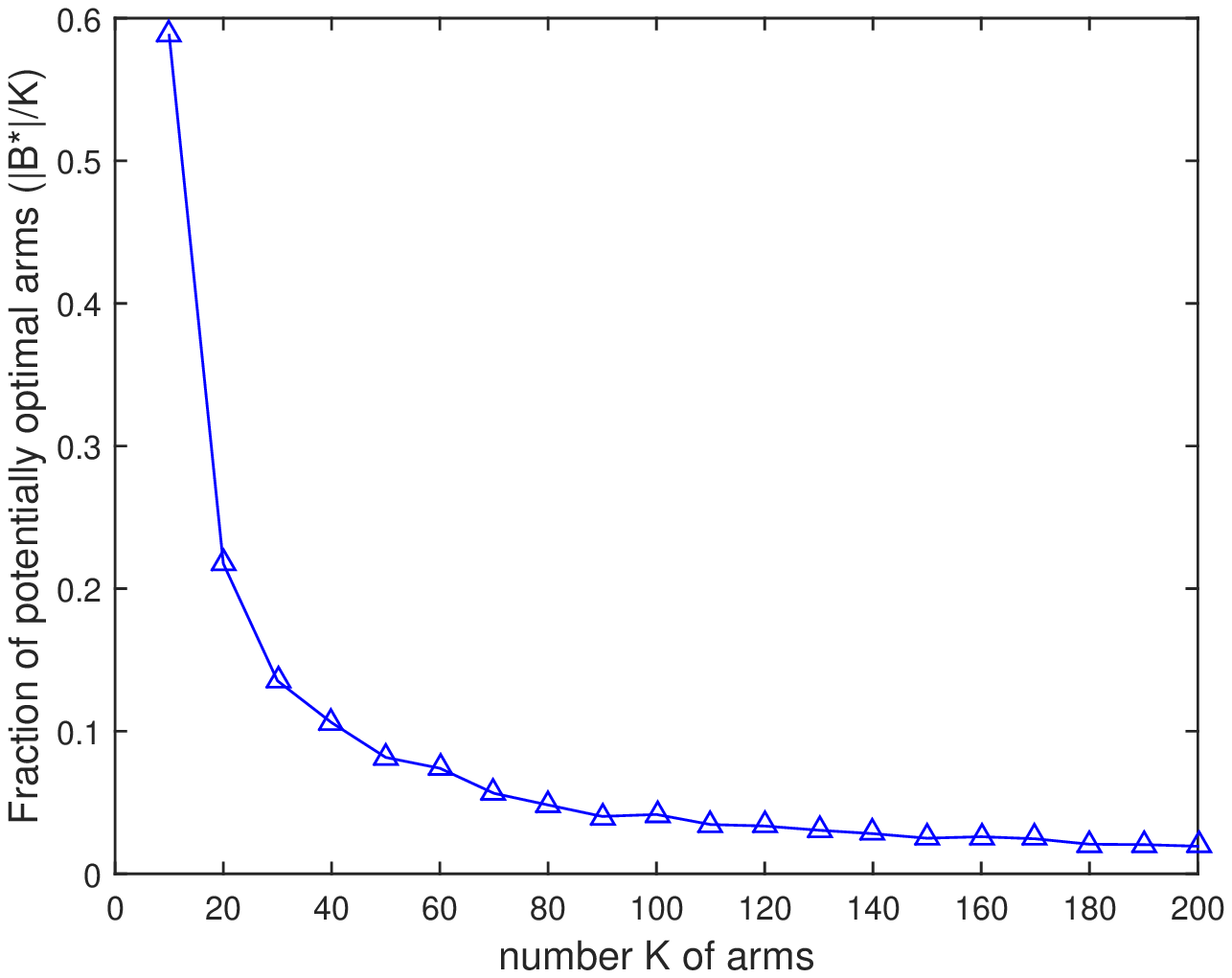}
		\caption{\small{ $\frac{|\mathcal{B}^*|}{K}$ v.s. $K$.}}
		\label{simulation1b}
	\end{center}
	\end{subfigure}
	\caption{\small{Reduction of the action space with complete side information: comparison between the size of the candidate set $|\mathcal{B}^*|$ and the number $K$ of arms. In (a), $K=100$, $|\mathcal{B}^*|\approx5$ when $\epsilon\in[0.1,0.9]$. In (b), $\epsilon=0.2$,$|\mathcal{B}^*|/K$ decreases as $K$ increases.}}
	\end{center}
	\vspace{-.5cm}
\end{figure}
\vspace{-.3cm}
\section{Numerical Examples}\label{sec:numerical}
In this section, we illustrate the advantages of our policies through numerical examples on both synthesized data and a real dataset in recommendation systems. All the experiments are run 100 times using a Monte Carlo method on MATLAB R2014b.
\vspace{-.5cm}
\subsection{Reduction of the action space}
\subsubsection{Complete Side Information}
We use two experiments to show how much the action space can be reduced by exploiting the complete side information. In the first experiment, we fix $K=100$ arms with mean rewards uniformly chosen from $(0,1)$ and let $\epsilon$ vary from $0$ to $1$. For every $\epsilon$, we obtain a UIG $\mathcal{G}_{\epsilon}^*$. We apply the offline elimination step of LSDT-CSI to $\mathcal{G}_{\epsilon}^*$ and compare the size of the candidate set $\mathcal{B}^*$ with $K$. In the second experiment, we fix $\epsilon=0.2$ and let $K$ increase from $10$ to $200$. We generate arms and UIGs in the same way as in the first experiment. We show how $|\mathcal{B}^*|/K$ varies as $K$ increases. The results are shown in Figs. \ref{simulation1a} and \ref{simulation1b}.

As we can see from Fig. \ref{simulation1a}, when $\epsilon$ is small ($\epsilon<0.1$), the graph is disconnected. As $\epsilon$ increases, the number of connected components decreases and thus, $|\mathcal{B}^*|$ decreases. When the graph is connected ($\epsilon>0.1$), the candidate set $\mathcal{B}^*$ only contains two equivalence classes and thus $|\mathcal{B}^*|$ is much smaller than $K$. When $\epsilon$ is large ($\epsilon>0.9$), the probability that the graph is complete increases as $\epsilon$ increases. In this case, the candidate set contains all the arms. Thus, $|B^*|$ increases to $K$ as $\epsilon$ grows to $1$. In Fig. \ref{simulation1b}, we notice that $\mathcal{B}^*$ has a diminishing cardinality compared with $K$. Since the mean rewards are uniformly chosen from $(0,1)$, the set of arms becomes denser on the interval $(0,1)$ as $K$ grows. It can be inferred from \cite{holst1980lengths} that the maximum distance $d$ between two consecutive points uniformly chosen from $(0,1)$ is in the order of $O(\frac{\log K}{\sqrt{K}})$ with probability $1-1/K$. If we choose $\epsilon=\frac{\rho\log K}{\sqrt{K}}$ for some $\rho>0$, $\mathcal{G}_{\epsilon}$ will be connected with high probability. Moreover, it can be shown that the cardinality of  $\mathcal{B}^*_{i_{\max}}$ ($\mathcal{B}^*_{i_{\min}}$) is smaller than the number of nodes whose distance to $i_{\max}$ ($i_{\min}$) is smaller than $d$. Therefore, it follows that the cardinality of the candidate set in this setting is smaller than $O(K^{1/2}\log K)$.
\vspace{-.4cm}
\subsubsection{Partial Side Information}
\begin{figure}
	\vspace{-.3cm}
	\begin{center}
	\begin{subfigure}[b]{0.25\textwidth}
	\begin{center}
		\includegraphics[width=\textwidth]{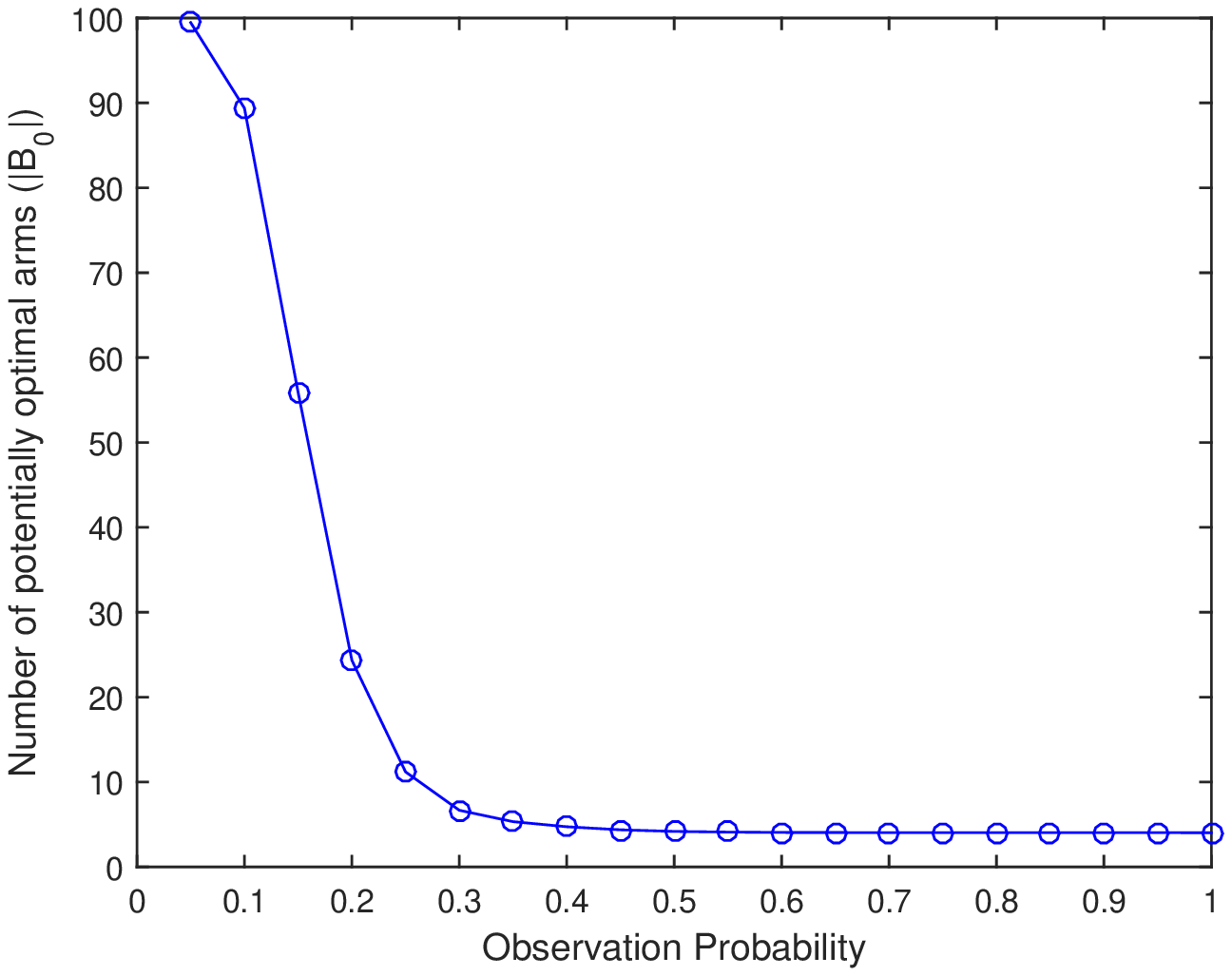}
		\caption{\small{$|\mathcal{B}_0|$ v.s. $p$. }}
		\label{simulation2a}
	\end{center}
	\end{subfigure}
	\hspace{-.4cm}
	\begin{subfigure}[b]{0.25\textwidth}
	\begin{center}
		\includegraphics[width=\textwidth]{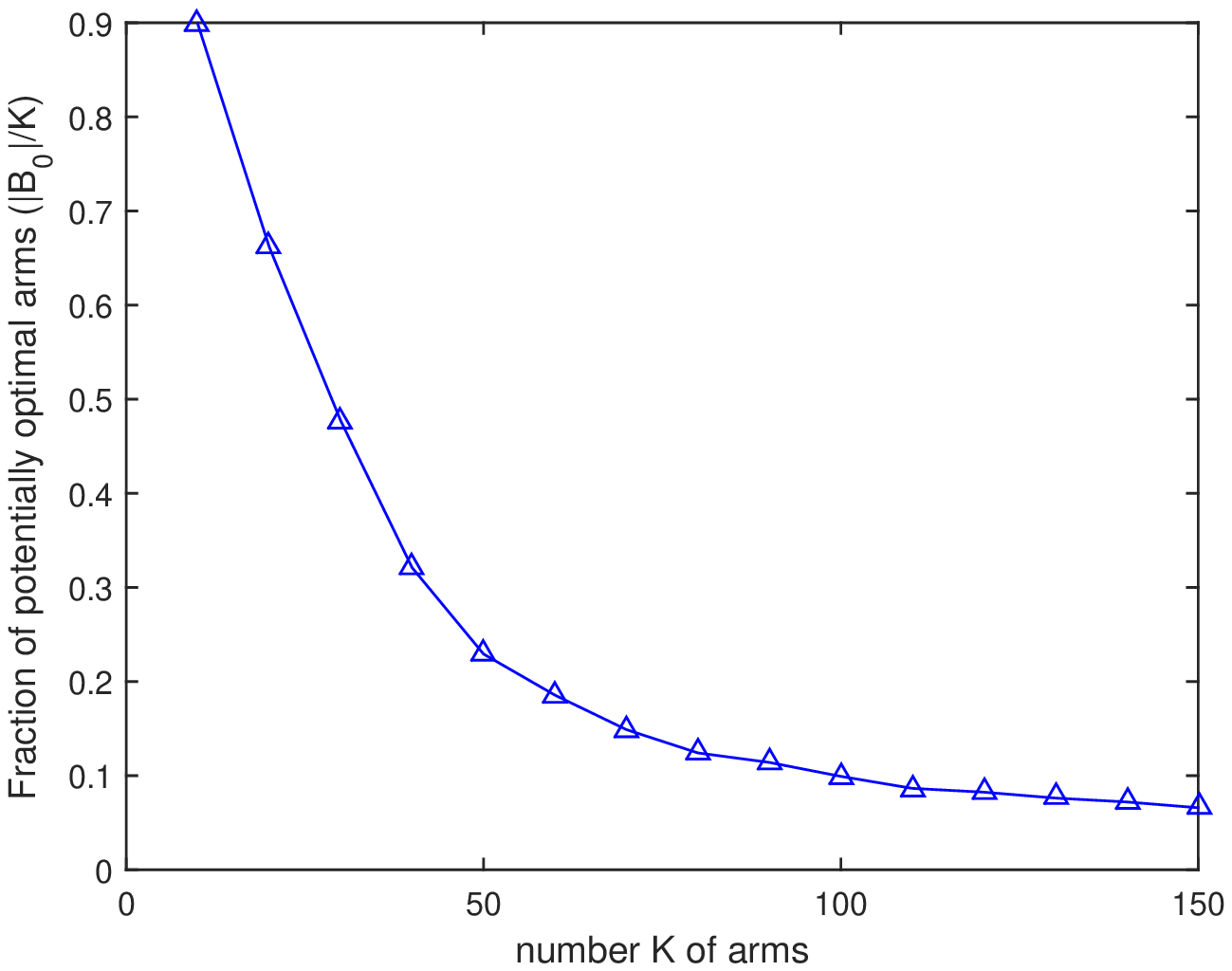}
		\caption{$\frac{|\mathcal{B}_0|}{K}$ v.s. $K$.}
		\label{simulation2b}
	\end{center}
	\end{subfigure}

	\caption{\small{Reduction of the action space with partial side information: comparison between the size of $|\mathcal{B}_0|$ and the number $K$ of arms. In (a), $K=100$, $\epsilon=0.2$, $|\mathcal{B}_0|$ decreases as $p$ grows to $1$. In (b), $\epsilon=0.2$, $p=0.5$, $|\mathcal{B}_0|/K$ decreases as $K$ increases.}}
	\end{center}
	\vspace{-.5cm}
\end{figure}
We use two other experiments to show the reduction of the action space with partial side information. In the first experiment, we fix $K=100$ arms with mean rewards uniformly chosen from $(0,1)$. We choose $\epsilon=0.2$ and obtain the UIG $\mathcal{G}_{\epsilon}^*$. We let $p_S=p_D=p$ vary from $0.1$ to $1$ and for every $p$, we observe the presence and the absence of edges in $\mathcal{G}_{\epsilon}^*$ independently with probability $p$. We apply the offline elimination step of LSDT-PSI on $\mathcal{G}_{\epsilon}$ and compare the size of the output set $\mathcal{B}_0$ with $K$. Note that when $p=1$, $\mathcal{G}^{*}_{\epsilon}$ is fully revealed and we use the offline elimination step of LSDT-CSI to obtain $\mathcal{B}^*$. In the second experiment, we fix $\epsilon=0.2,p_S=p_D=0.5$ and let $K$ increase from $10$ to $150$. We generate arms and side information graphs in the same way as in the first experiment and show how $|\mathcal{B}_0|/K$ varies as $K$ increases. The results of the two experiments are shown in Figs. \ref{simulation2a} and \ref{simulation2b} .

It can be seen from Fig. \ref{simulation2a} that as $p$ increases, $|\mathcal{B}_0|$ decreases to $|\mathcal{B}^*|$. Besides, when $p>0.5$, the performance of the offline elimination step of LSDT-PSI is as good as that of LSDT-CSI, which is optimal, i.e. only arms in $\mathcal{B}^*$ remain. Moreover, in Fig. \ref{simulation2b}, we see that $|\mathcal{B}_0|/K$ decreases as $K$ increases which indicates a diminishing cardinality of the reduced action space in terms of $K$.
\vspace{-.4cm}
\subsection{Regret on Randomly Generated Graphs}
\subsubsection{Complete Side Information}
We compare LSDT-CSI with existing algorithms on a set of randomly generated arms. We obtain the UIG $\mathcal{G}_{\epsilon}^*$ on $K=100$ nodes with means uniformly chosen from $[0.1,1]$ and $\epsilon = 0.1$. Every time an arm $i$ is played, a random reward is drawn independently from a Gaussian distribution with mean $\mu_i$ and variance $1$. We let $T$ vary from $10$ to $1000$ and compare the regret of LSDT-CSI and 4 baseline algorithms:
\begin{enumerate}[(i)]
	\item UCB1: classic UCB policy proposed in \cite{auer2002finite} assuming no relation among arms.
	\item TS: classic Thompson Sampling algorithm proposed in \cite{thompson1933likelihood} assuming Beta prior and Bernoulli likelihood on the reward model.
	\item CKL-UCB: proposed in \cite{magureanu2014lipschitz} for Lipschitz bandit exploiting only similarity relations.
	\item OSUB: proposed in \cite{combes2014unimodal} for unimodal bandits. Note that if the UIG $\mathcal{G}_{\epsilon}^*$ is connected, it satisfies the unimodal structure: for every sub-optimal arm $i$, there exists a path $P=(i_1=i,i_2,...,i_n=i_{\max})$ such that for every $t=1,...,n-1$, $\mu_{i_t}\le\mu_{i_{t+1}}$.
	\item OSSB: proposed in \cite{combes2017minimal} for general structured bandits. At each time, OSSB estimates the minimum number of times that every arm has to be played by solving a LP.
\end{enumerate}

The results shown in Fig. \ref{simulation3a} indicate that LSDT-CSI outperforms the baseline algorithms. In particular, when $T<K$, LSDT-CSI has already started to exploit the optimal arm while the other algorithms are still exploring. We~also~compare~LSDT-CSI~with~an~intuitive~algorithm~applying~UCB1~on~the~candidate~set~in~Fig.~\ref{simulation3b}. With the same setup, we see~performance~gain~due~to~the~online~step.

\begin{figure}
	\begin{center}
	\vspace{-.3cm}
	\begin{subfigure}[b]{0.25\textwidth}
	\begin{center}
		\includegraphics[width=1\textwidth]{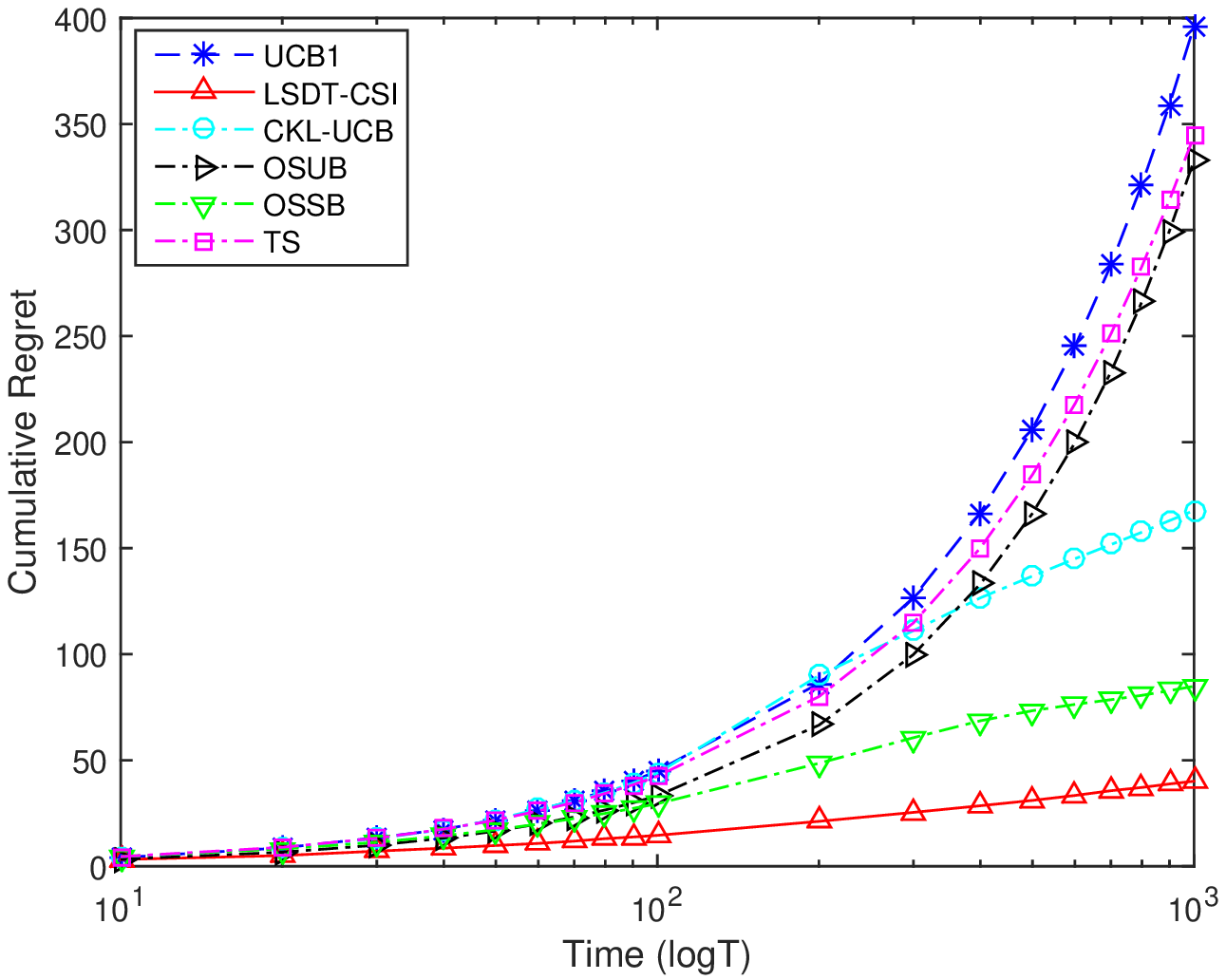}
		\caption{}
		\label{simulation3a}
	\end{center}
	\end{subfigure}
	\hspace{-.5cm}
	\begin{subfigure}[b]{0.25\textwidth}
	\begin{center}
		\includegraphics[width=1\textwidth]{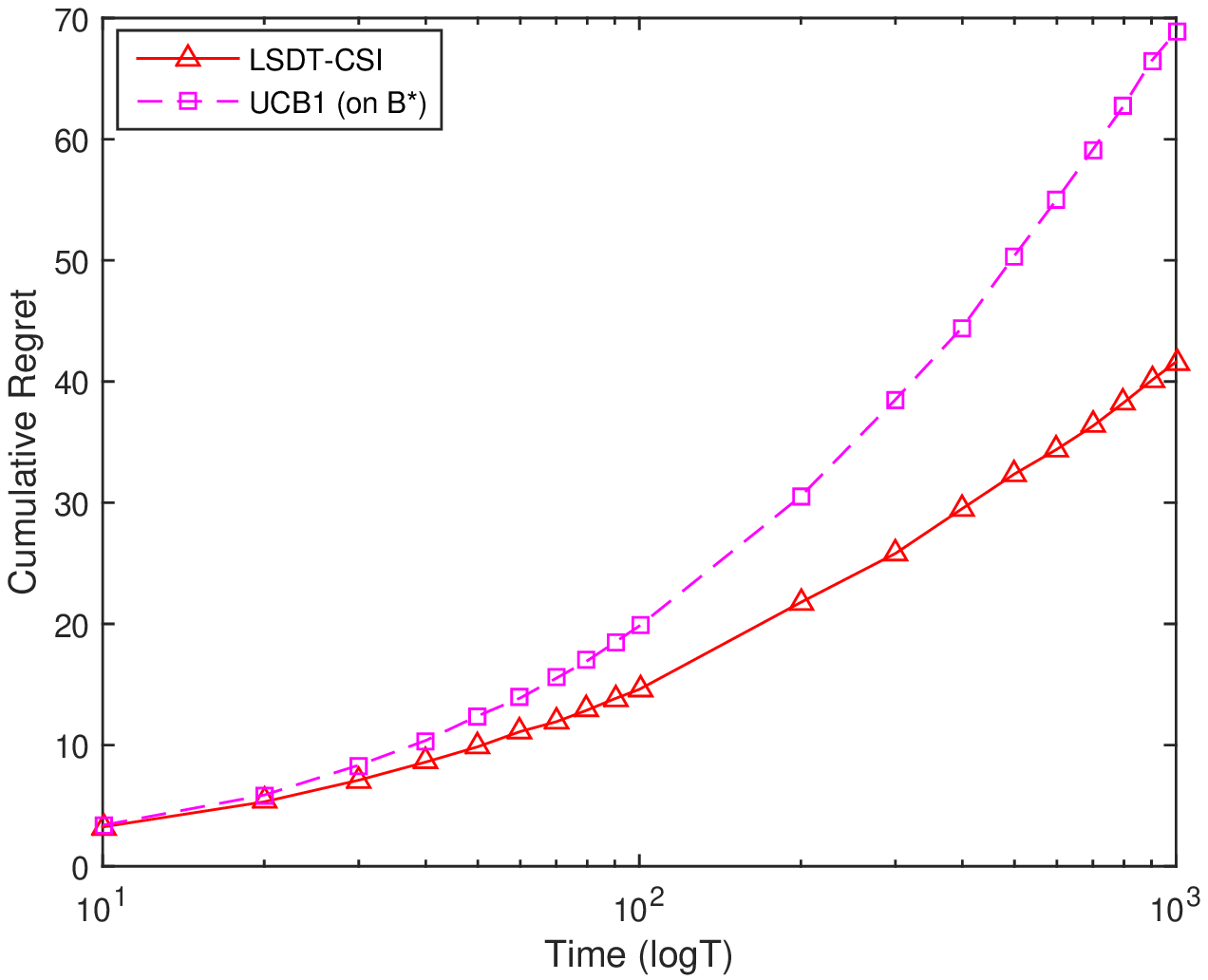}
		\caption{}
		\label{simulation3b}
	\end{center}
	\end{subfigure}
	\vspace{-.2cm}
	\caption{\small{Regret on randomly generated arms with complete side information: $K=100,\epsilon=0.1$. (a) Comparison with existing algorithms. (b) Comparison with an intuitive algorithm.}}
	\end{center}
	\vspace{-.6cm}
\end{figure}

We also evaluate the time complexity of the learning policies. Due to the page limit, we summarize the running times of LSDT-CSI and the other baseline algorithms in Table \ref{time_complete} in Appendix \ref{time}. It is shown that LSDT-CSI has a relatively low computation cost in contrast to algorithms with comparable performance, i.e., CKL-UCB and OSSB. 

\vspace{-.2cm}
\subsubsection{Partial Side Information}

We compare LSDT-PSI with existing algorithms. We obtain the UIG $\mathcal{G}^*_{\epsilon}$ on $K=100$ arms with means uniformly chosen from $[0.1,0.9]$ and $\epsilon=0.1$. We let $p_S=p_D=p=0.5$ and get the partially observed UIG $\mathcal{G}_{\epsilon}$ based on Assumption \ref{observation}. The random rewards for every arm $i$ are independently generated from a Bernoulli distribution with mean $\mu_i$. We consider $T\in[100,1000]$.

Given that finding the candidate set is NP-complete, the OSSB policy is not applicable since the LP is unspecified. Besides, OSUB is also inapplicable since $\mathcal{G}_{\epsilon}$ is not unimodal in general. Therefore, we only compare LSDT-PSI with three baseline algorithms: UCB1, TS and CKL-UCB. In LSDT-PSI, we choose the input parameter $\lambda=1/8$. Note that the choice of $\lambda$ does not affect the theoretical upper bound on regret. However, in practice, it is better to use a smaller $\lambda$ to avoid excessive plays of suboptimal arms. The simulation results shown in Fig \ref{simulation4a} indicates that LSDT-PSI outperforms the other two algorithms. Besides, similar to the case of complete side information, we compare LSDT-PSI with a heuristic algorithm applying UCB1 on $\mathcal{B}_0$ and a similar performance gain is observed in Fig. \ref{simulation4b}. Moreover, the computational efficiency of LSDT-PSI is also verified in Table \ref{time_partial} in Appendix \ref{time}.

\begin{figure}[t]
	\vspace{-.3cm}
	\begin{center}
	\begin{subfigure}[b]{0.25\textwidth}
	\begin{center}
		\includegraphics[width=\textwidth]{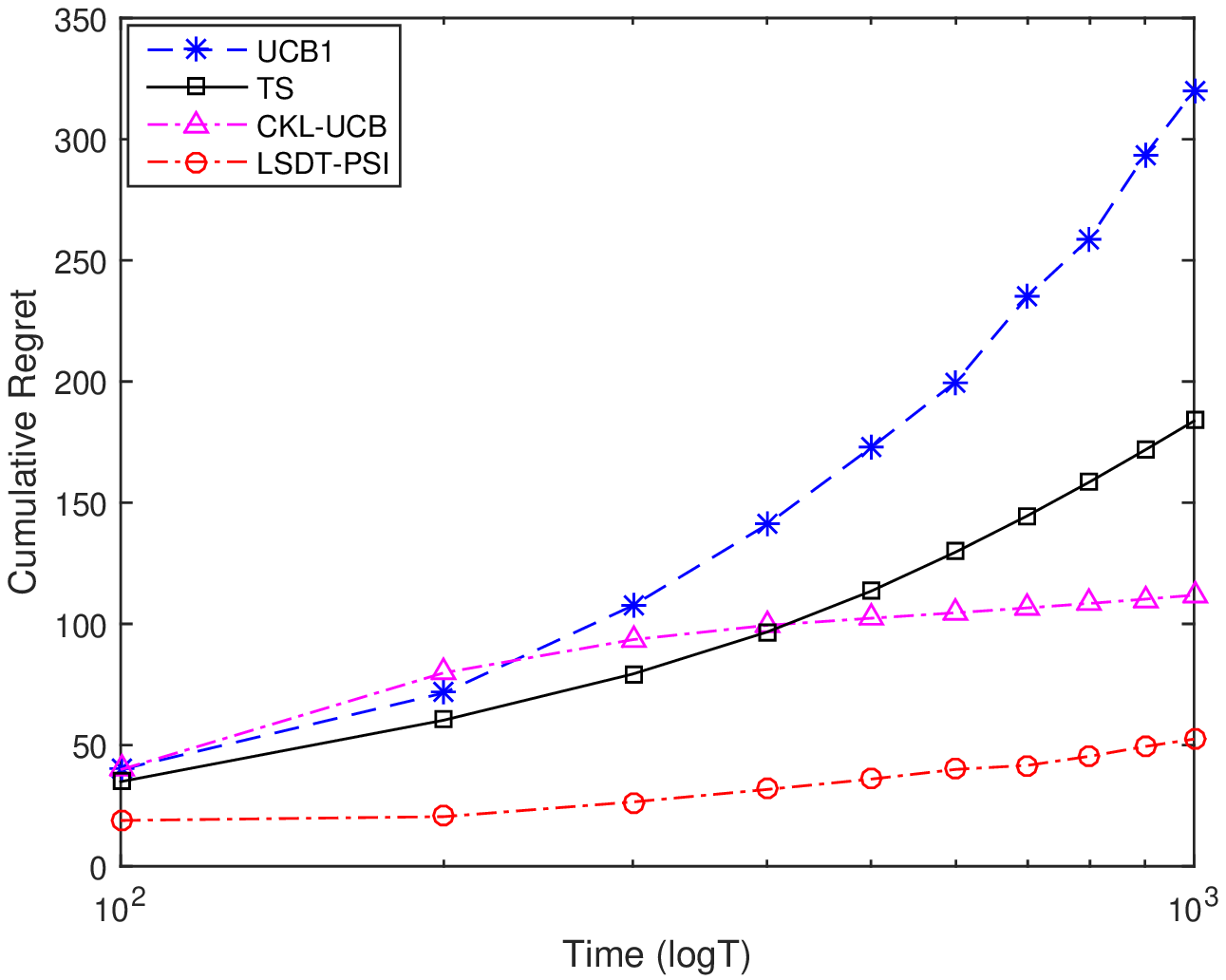}
		\caption{}
		\label{simulation4a}
	\end{center}
	\end{subfigure}
	\hspace{-.5cm}
	\begin{subfigure}[b]{0.25\textwidth}
	\begin{center}	
		\includegraphics[width=\textwidth]{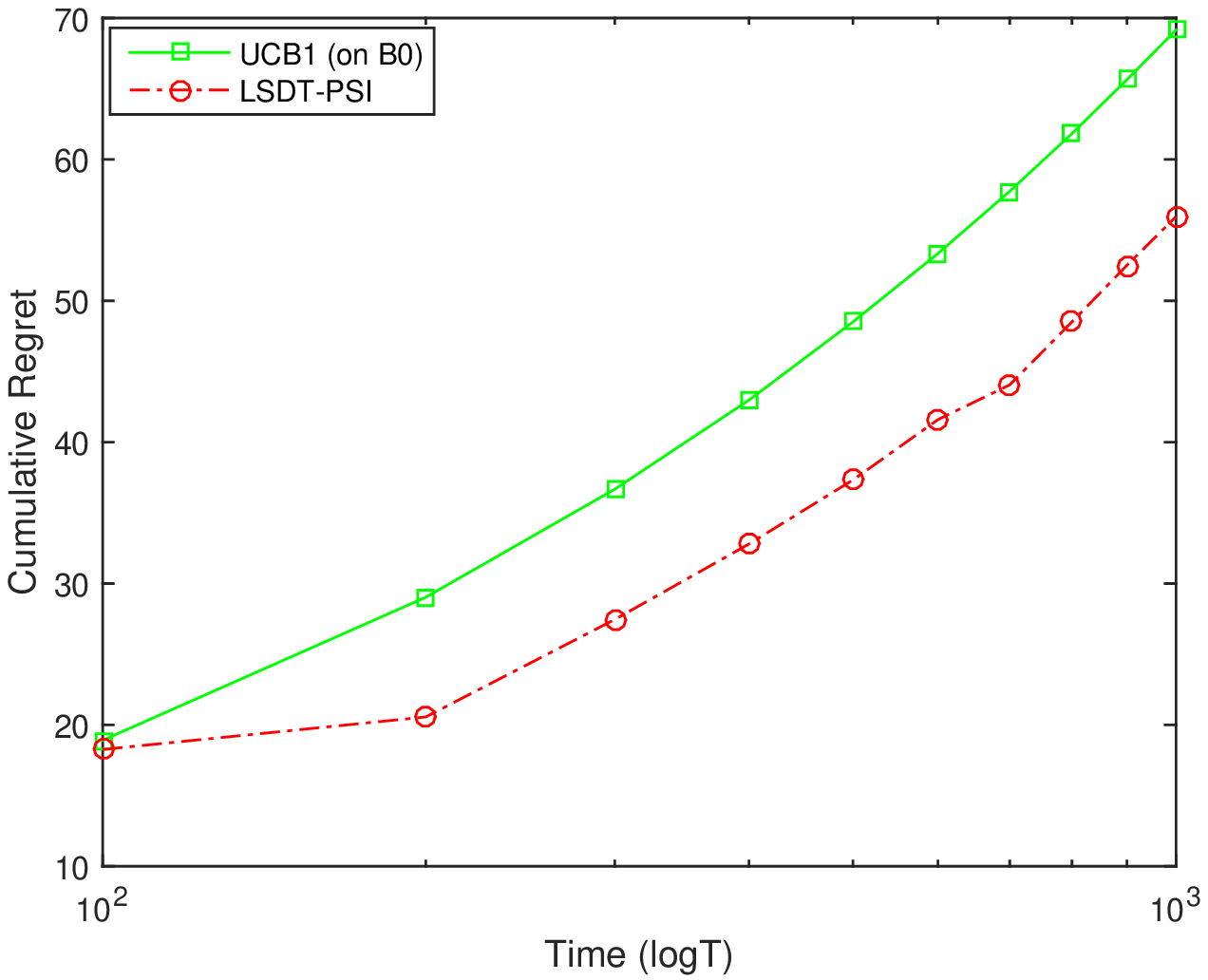}
		\caption{}
		\label{simulation4b}
	\end{center}
	\end{subfigure}
	\vspace{-.2cm}
	\caption{\small{Regret on randomly generated arms with partial side information: $K=100,\epsilon=0.1, p=0.5$. (a) Comparison with existing algorithms. (b) Comparison with a heuristic algorithm.}}
	\end{center}
	\vspace{-.6cm}
\end{figure}
\vspace{-.35cm}
\subsection{Online Recommendation Systems}

In this subsection, we apply LSDT-PSI to a problem in online recommendation systems. We test our policy on a dataset from Jester, an online joke recommendation and rating system \cite{goldberg2001eigentaste}, consisting of 100 jokes and 25K users and every joke has been rated by at least 34\% of the entire population.\footnote{Available on http://eigentaste.berkeley.edu/dataset/.}. Ratings are real values between $-10.00$ and $10.00$. In the experiment, we recommend a joke (modeled as an arm) to a new user at each time and observe the rating, which corresponds to playing an arm and receiving the reward. Note that although different users have different preference towards items, every item exhibits certain internal quality that is represented by the mean reward, i.e., the average rating from all users. The variations of ratings from different users correspond to the randomness of rewards. Notice that the algorithms we propose work for any reward distribution as long as it is sub-Gaussian, Jester is a suitable dataset for the purpose of evaluating the performance of our algorithms since any distribution with bounded support is sub-Gaussian. In accordance with the assumptions of the policy, all ratings are normalized to $[0,1]$.

To test our policy using side information, we partition the dataset into a training set ($5\%$ or $10\%$ of the users) and a test set (20K users). We obtain the partially revealed UIG from the training set as follows: we estimate the distance between two jokes $i,j$ by calculating the difference between their average ratings from users in the training set who have rated both jokes. We define a confidence parameter $\alpha>0$. If the distance between $(i,j)$ is larger than $(1+\alpha)\epsilon$, we add $(i,j)$ to $\mathcal{E}_{\epsilon}^D$. Otherwise if the distance is smaller than $(1-\alpha)\epsilon$, we add $(i,j)$ to $\mathcal{E}_{\epsilon}^S$. It is clear that there exist certain pairs of arms whose relations are unknown. We let $\alpha=0.2$ if the size of the training set is $2\%$ of the entire dataset or $\alpha = 0.1$ if the size of the training set is $5\%$. Note that as the size of the training set increases, the estimation of distances between jokes becomes more accurate and thus, the confidence parameter can be smaller. As a consequence, the number of joke pairs whose relations are known increases. For the hyper-parameter $\epsilon$, we use an iterative approach to find the best $\epsilon$ that minimize the size of $\mathcal{B}_0$, i.e., the set of arms that need to be explored. Intuitively, as $\epsilon$ increases, $|\mathcal{B}_0|$ first decreases since the side information graph becomes more connected and more similarity relations can be observed. Therefore, the probability of eliminating sub-optimal arms by the offine step becomes higher. When $\epsilon$ is large, the graph approaches a complete graph and less dissimilarity relations are observed. As a consequence, the probability of eliminating sub-optimal arms decreases and thus $|\mathcal{B}_0|$ increases. A similar tendency of variation can be observed on the overall regret performance of the learning policy. Based on this, the iterative approach starts from a small $\epsilon(0)$ (i.e., $0.01$) at time $t=0$ and find $\mathcal{B}_0(0)$. It keeps doubling the value of $\epsilon$ at each step until time $t$ when $|\mathcal{B}_0(t)|>|\mathcal{B}_0(t-1)|$. Then a binary search method is applied to find the best $\epsilon^*$ (with resolution $0.01$, i.e., the minimum increment of $\epsilon$) between $\epsilon(t-1)$ and $\epsilon(t)$ that achieves the minimum $|\mathcal{B}_0|$.
\begin{figure}[t]
	\vspace{-.3cm}
	\begin{center}
	\includegraphics[width=0.42\textwidth]{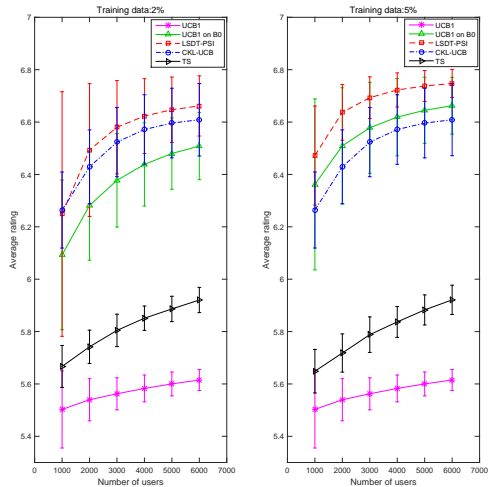}
	\vspace{-.5cm}
	\caption{\small{Joke recommendation on Jester.}}
	\vspace{-.8cm}
	\label{simulation5}
	\end{center}
\end{figure}

We use an unbiased offline evaluation method introduced in \cite{li2010contextual} and \cite{li2011unbiased} to evaluate algorithms including LSDT-PSI, UCB1, TS, CKL-UCB and UCB1 on $\mathcal{B}_0$, on the test set. Fig. \ref{simulation5} shows the average rating per user with confidence intervals (scaled back to $[0,10.00]$) of every policy. Note that CKL-UCB needs to estimate the KL-divergence between two distributions. Since the distribution type in the real dataset is unknown, we can only use $\Delta^2$ to approximate the KL-divergence where $\Delta$ is the distance between the average ratings. For LSDT-PSI, we choose the input parameter $\lambda=1/32$. Simulation results in Fig. \ref{simulation5} show that LSDT-PSI has the best performance with relatively small variations. Besides, the effectiveness of the adaptive approach on selecting the hyper-parameter $\epsilon$ is verified. Moreover, it can be observed that as the size of the training data increases, the performance of LSDT-PSI and UCB1 on $\mathcal{B}_0$ get improved since more side information is available.
\vspace{-.3cm}
\section{Conclusion}\label{sec:conclusion}

We studied a stochastic multi-armed bandit problem with side information on the similarity and dissimilarity across arms. The similarity-dissimilarity structure is represented by a UIG where every node represents an arm and the presence (absence) of an edge between two nodes represents similarity (dissimilarity) of their mean rewards. We considered two settings of complete and partial side information based on whether the UIG is fully revealed and proposed a general two-step learning structure: LSDT consisting of an offline reduction of the action space to the candidate set and online aggregation of observations from similar arms. In the case of complete side information, we showed that the candidate set can be identified by a BFS-based algorithm in polynomial time and the proposed learning policy LSDT-CSI achieves order optimal regret in terms of both the size of the action space and the time horizon. In the case of partial side information, we showed that finding the candidate set is NP-complete and proposed an approximation algorithm to reduce the action space with polynomial time complexity. We proved that under certain probabilistic assumptions on the side information, the approximation algorithm achieves the same performance as that in the case of complete side information and the proposed learning policy LSDT-PSI is order optimal.

For future directions, it will be interesting to consider different probabilistic models of the side information. It is reasonable to assume that if the difference between the mean rewards of two arms is smaller (larger), their similarity (dissimilarity) relation is more likely to be revealed. In addition, it is worth investigating a case with spurious relations across arms (e.g., the side information indicates that two arms are close in their mean rewards but actually are not). Besides, while we assumed a single hyper-parameter $\epsilon$ to quantize the similarity and dissimilarity relations across arms, a more general setting is to consider distinct parameters $\epsilon_{i,j}$ to characterize different similarity and dissimilarity levels between different arm pairs.
\vspace{-.3cm}
\bibliographystyle{IEEEtran}
\bibliography{MABonPartialUIG}


%

\clearpage
\onecolumn
\appendices
\section{Additional Numerical Results}
\subsection{LSDT with Thompson Sampling Techniques}\label{TSsimulation}
As discussed in Sec. 5.3, we use numerical examples to show the performance of applying TS techniques in the two-step learning structure: LSDT. We adopt the same experiment setup with that in the simulation of regret analysis on randomly generated graphs with complete side information (Sec. 6.2.1) and compare LSDT-TS (CSI) (applying TS in LSDT learning structure in the case of complete side information, which is introduced in Sec. 5.3) with classic TS that ignores side information. The results in Fig. \ref{simulationts1} verify the advantage of our two-step learning structure, which fully exploits the topological structure of the side information graph. Besides, we also compare LSDT-TS (CSI) with another heuristic algorithm, which simply applies classic TS on the reduced action space $\mathcal{B}^*$ without aggregation observations from similar arms in the second step of online learning. The results in Fig. \ref{simulationts2} further indicates that the online aggregations step in the two-step learning structure improves the performance.

\begin{figure}[h!]
	\begin{center}
	\begin{subfigure}[b]{0.5\textwidth}
	\begin{center}
		\includegraphics[width=0.6\textwidth]{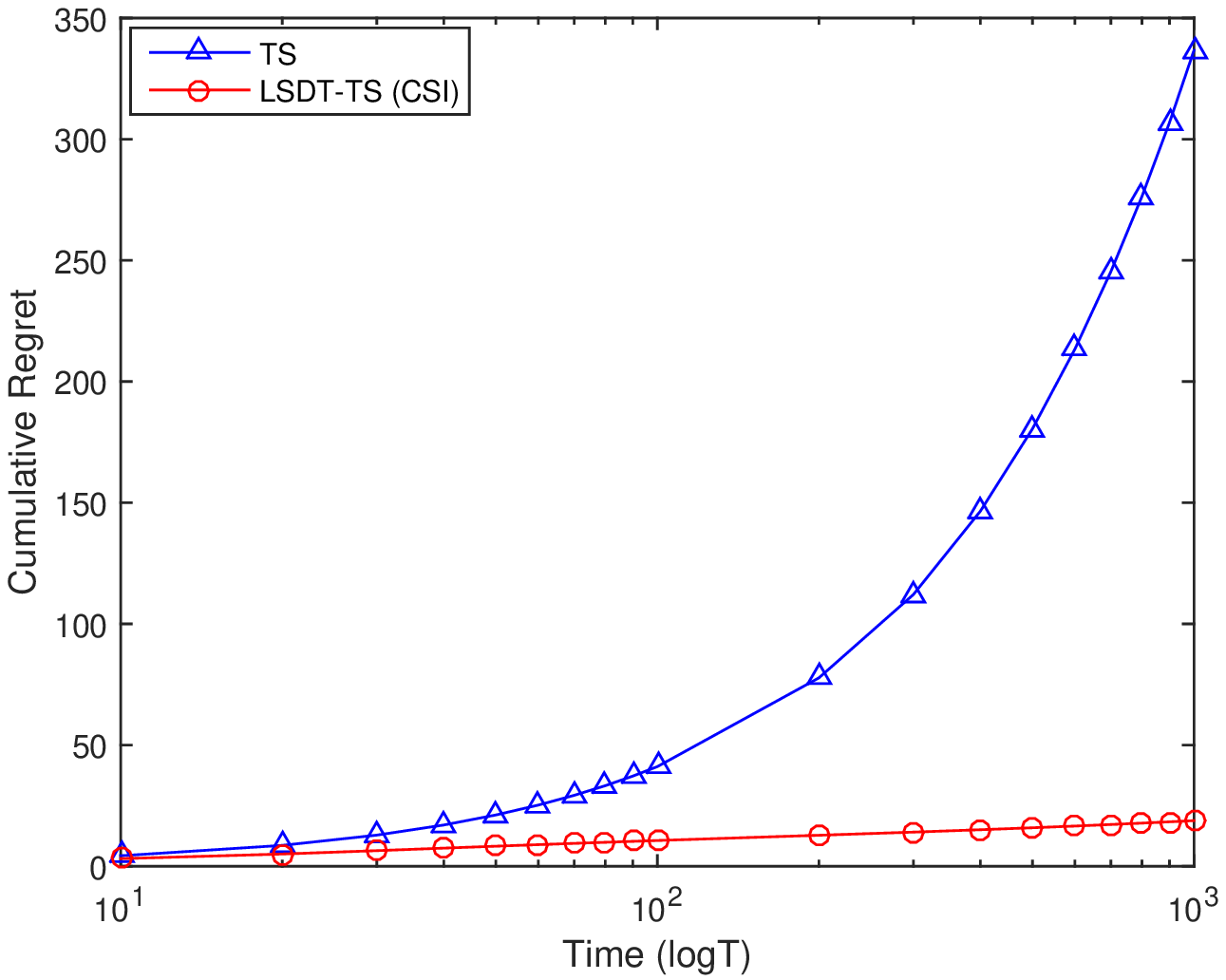}
		\caption{\small{Comparison with classic TS.}}
		\label{simulationts1}
	\end{center}
	\end{subfigure}
	\hspace{-1cm}
	\begin{subfigure}[b]{0.5\textwidth}
	\begin{center}	
		\includegraphics[width=0.6\textwidth]{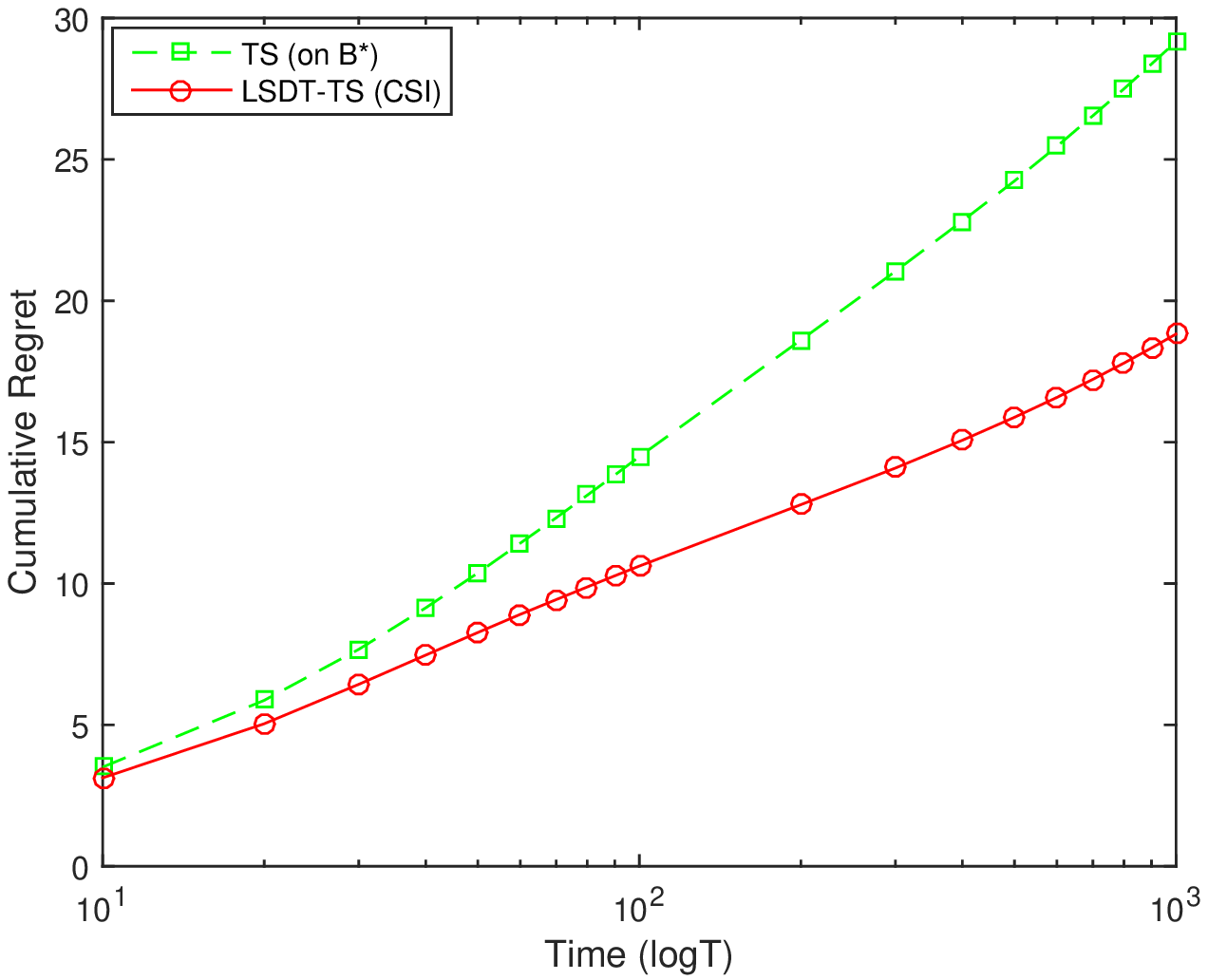}
		\caption{\small{Comparison with a heuristic algorithm.}}
		\label{simulationts2}
	\end{center}
	\end{subfigure}
	
	\caption{\small{Regret on randomly generated arms with complete side information: $K=100,\epsilon=0.1$.}}
	\end{center}
	\vspace{-.4cm}
\end{figure}
In the case of partial side information, we conduct an experiment similar with that in Sec. 6.2.2 to evaluate the performance of LSDT-TS (PSI), which applies TS in LSDT learning structure in the case of partial side information as discussed in Sec. 5.3. We compare LSDT-TS (PSI) with classic TS ignoring side information and another heuristic algorithm applying TS on the reduced action space without online aggregation. The results are shown in Fig. \ref{simulationtspartial} and the performance gain through both offline and online steps of LSDT is verified.

\begin{figure}[h!]
	\begin{center}
	\begin{subfigure}[b]{0.5\textwidth}
	\begin{center}
		\includegraphics[width=0.6\textwidth]{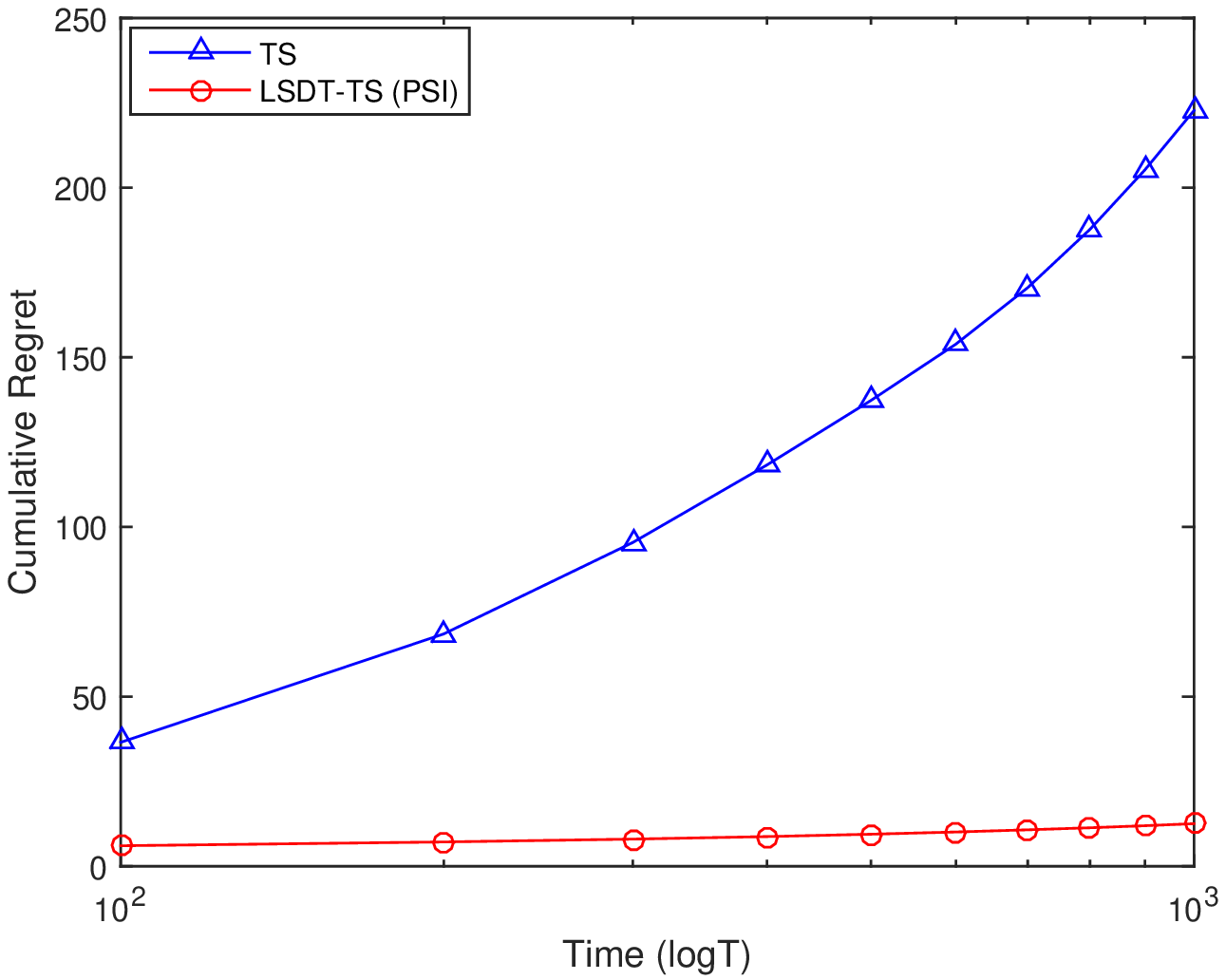}
		\caption{\small{Comparison with classic TS.}}

	\end{center}
	\end{subfigure}
	\hspace{-1cm}
	\begin{subfigure}[b]{0.5\textwidth}
	\begin{center}	
		\includegraphics[width=0.6\textwidth]{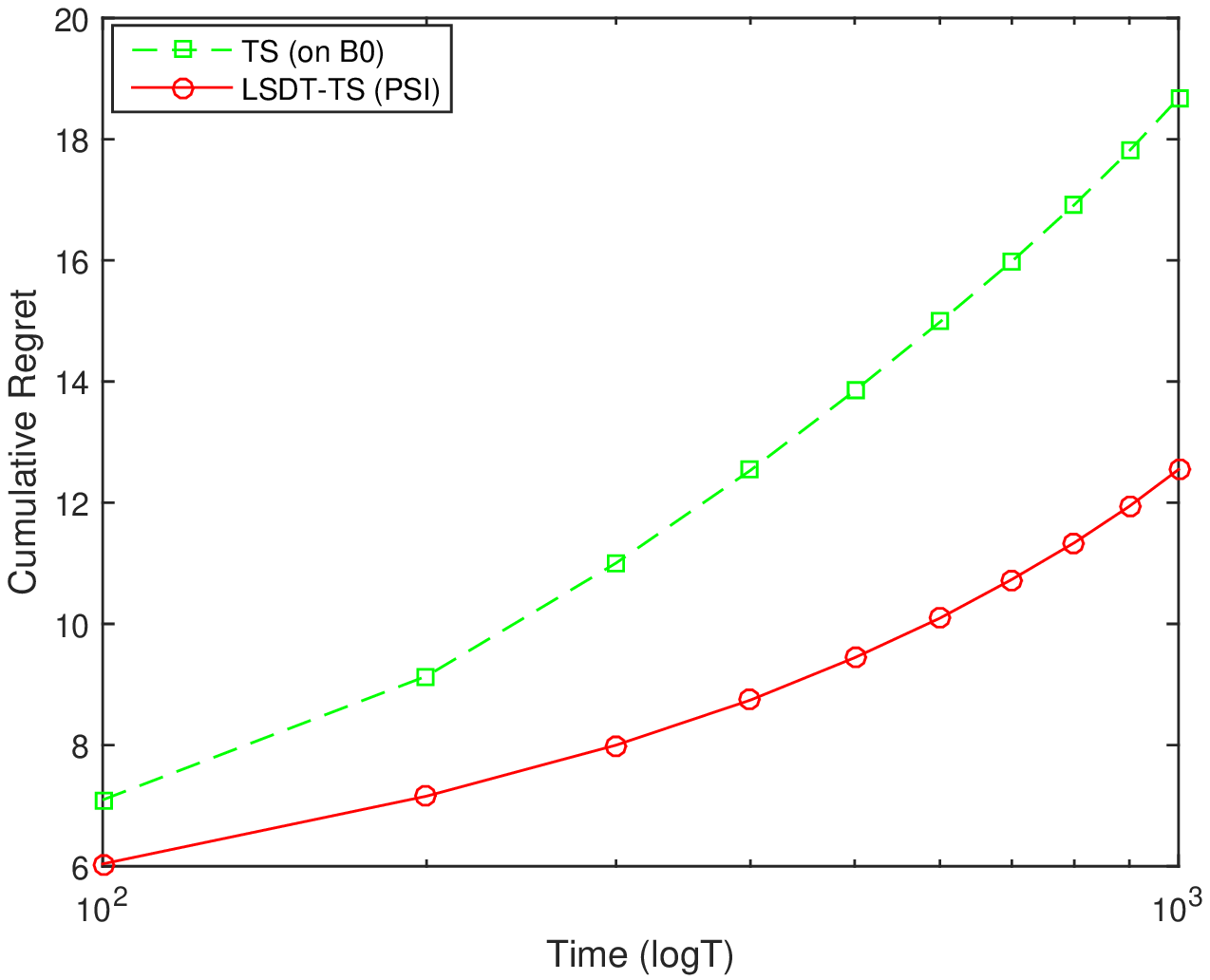}
		\caption{\small{Comparison with a heuristic algorithm.}}

	\end{center}
	\end{subfigure}
	
	\caption{\small{Regret on randomly generated arms with partial side information: $K=200,\epsilon=0.1,p=0.5$.}}\label{simulationtspartial}
	\end{center}
	
	\vspace{-.4cm}
\end{figure}
\subsection{Comparison of Running Times}\label{time}
We compare the running time of LSDT-CSI as well as the baseline algorithms in Table \ref{time_complete} for the case of complete side information. It is not difficult to see that LSDT-CSI has a relatively low computation complexity in contrast to algorithms with comparable performance, i.e., CKL-UCB and OSSB. Note that CKL-UCB and OSSB are time consuming since they have to solve an optimization problem at each time step. Besides, it can be seen that the time complexity of the offline reduction step is not too high to be applied.
\begin{table*}[h!]
\begin{center}
\begin{tabular}{|c|c|c|c|c|c|c|c|c|}
\hline
Algorithm  & UCB1 & TS & CKL-UCB & OSUB  & OSSB    & LSDT-CSI (offline) & LSDT-CSI (online) & UCB1 on $\mathcal{B}^*$ \\
\hline
Running Time (ms) & 9.7 & 37.6 & 849.5   & 880.3 & $3.3\times 10^5$ & 14.1               & 18.6              & 9.1\\      
\hline
\end{tabular}
\end{center}

\caption{Running times in the case of complete side information.}\label{time_complete}
\end{table*}

For the case of partial side information, we summarize the running times of LSDT-PSI and the other baseline algorithms in Table \ref{time_partial}. Note that the running times of UCB1 and TS are smaller than LSDT-PSI since they ignore the similarity-dissimilarity relations across arms and have worse performance. When compared with CKL-UCB, which achieves a comparable performance by exploiting the similarity relations, LSDT-PSI has a smaller computation complexity.
\begin{table*}[h!]
\begin{center}
\begin{tabular}{|c|c|c|c|c|c|c|}
\hline
Algorithm  & UCB1 & TS & CKL-UCB &  LSDT-PSI (offline) & LSDT-CSI (online) & UCB1 on $\mathcal{B}_0$ \\
\hline
Running Time (ms) & 10.3  & 38.3   & 354.2 & 12.6 & 161.4               & 10.1            \\      
\hline
\end{tabular}
\end{center}

\caption{Running times in the case of partial side information.}\label{time_partial}
\vspace{-.3cm}
\end{table*}

\section{Proof of Theorem 1}\label{pfcandidateset}

	We first show that $\mathcal{B}^*_{i_{\max}}\cup\mathcal{B}_{i_{\min}}^*\subseteq \mathcal{B}^*$. Clearly $i_{\max}\in\mathcal{B}^*$. For each $j\in\mathcal{B}^*_{i_{\max}}$, $\mathcal{N}[j]=\mathcal{N}[i_{\max}]$. Thus if we construct a new set of mean rewards $(\mu_1',...,\mu_K')$ where the mean values of $j$ and $i_{\max}$ get switched and the others remain the same, the UIG $\mathcal{G}_{\epsilon}^*$ remains unchanged. Thus, $j\in\mathcal{B}^*$. Similar~result~holds~for $\mathcal{B}^*_{i_{\min}}$.~Therefore $\mathcal{B}^*_{i_{\max}}\cup\mathcal{B}^*_{i_{\min}}\subseteq \mathcal{B}^*$.
	
	Next, we show that $\mathcal{B}^*\subseteq\mathcal{B}^*_{i_{\max}}\cup\mathcal{B}^*_{i_{\min}}$. For each $j\not\in\mathcal{B}^*_{i_{\max}}\cup\mathcal{B}^*_{i_{\min}}$, consider two cases:
	\begin{enumerate}
		\item $j\in\mathcal{N}[i_{\max}]\cup\mathcal{N}[i_{\min}]$: without loss of generality, assume that $j\in\mathcal{N}[i_{\max}]$. Since $j\not\in\mathcal{B}^*_{i_{\max}}$, there exists an arm $k$ such that $k\in\mathcal{N}[j]$ but $k\not\in\mathcal{N}[i_{\max}]$. Now suppose there exists an assignment of mean rewards $(\mu_1',...,\mu_K')$ conforming to $\mathcal{G}_{\epsilon}^*$ such that arm $j$ is optimal, then $\mu_{k}',\mu_{i_{\max}}'\in (\mu_j'-\epsilon,\mu_j']$ and thus, arm $k$ and $i_{\max}$ are neighbors. This contradicts the assumption that $k\not\in\mathcal{N}[i_{\max}]$. Hence, there doesn't exists a set of mean rewards conforming to $\mathcal{G}_{\epsilon}^*$ where $j$ is optimal. Thus $j\not\in\mathcal{B}^*$. Similar result holds for the case when $j\in\mathcal{N}[i_{\min}]$.
		\item $j\not\in\mathcal{N}[i_{\max}]\cup\mathcal{N}[i_{\min}]$: define 
		\begin{align}k_1=\argmin_{k\not\in\mathcal{N}[j],\mu_k>\mu_j}\mu_k,\\
		k_2=\argmax_{k\not\in\mathcal{N}[j],\mu_k<\mu_j}\mu_k.
		\end{align} 
		Notice that $k_1,k_2$ are not neighbors. However, since the component is connected, $k_1,k_2$ must connect with arms in $\mathcal{N}[j]$. Now suppose there exists an assignment of mean rewards $(\mu_1',...,\mu_K')$ conforming to $\mathcal{G}_{\epsilon}^*$ such that $j$ is optimal, then $\mu_{k_1}',\mu_{k_2}'\in(\mu_j'-2\epsilon,\mu_j'-\epsilon]$. This contradicts the assumption that $k_1,k_2$ are not neighbors. Thus, $j\not\in\mathcal{B}^*$.
	\end{enumerate}
	Therefore, we have that if $j\not\in\mathcal{B^*}_{i_{\max}}\cup\mathcal{B}^*_{i_{\min}}$, then $j\not\in\mathcal{B}^*$. This implies that $\mathcal{B}^*\subseteq\mathcal{B}^*_{i_{\max}}\cup\mathcal{B}^*_{i_{\min}}$. In summary, 
	\begin{align}\mathcal{B}^*=\mathcal{B}^*_{i_{\max}}\cup\mathcal{B}^*_{i_{\min}}.
	\end{align}

\section{Proof of Theorem 2}\label{pfthmupperbound}

When $\mathcal{G}^*_{\epsilon}$ is connected, $\mathcal{B}^*=\mathcal{B}^*_{i_{\min}}\cup\mathcal{B}^*_{i_{\max}}$ where $\mathcal{B}^*_{i_{\min}}$ and $\mathcal{B}^*_{i_{\max}}$ are disjoint if $\mathcal{G}^*_{\epsilon}$ is not complete. We upper bound the number of times that arms in $\mathcal{B}^*_{i_{\min}}$ have been played up to time $T$. Let $\tau_{\mathcal{B}^*_{i_{\min}}}(T)=\sum_{j\in\mathcal{B}^*_{i_{\min}}}\tau_j(T)$, $\tau_{\mathcal{B}^*_{i_{\max}}}(T)=\sum_{j\in\mathcal{B}^*_{i_{\max}}}\tau_j(T)$. Let $c_{t,s}=\sqrt{(8\log t)/s}$. Let $\pi_t$ be the arm selected at time $t$ and $\mathbb{I}\{\cdot\}$ be the indicator function. Let $\ell>|\mathcal{B}^*_{i_{\min}}|$ be an arbitrary integer, then with $H_i(t)$ defined in (\ref{Hindex}),

\begin{equation}\label{pfthm2eq1}
	\begin{aligned}	
	&\mathbb{E}[\tau_{\mathcal{B}^*_{i_{\min}}}(T)]= \mathbb{E}\left[|\mathcal{B}^*_{i_{\min}}|+\sum_{t=|\mathcal{B}^*|+1}^{T}\mathbb{I}\{\pi_{t}\in\mathcal{B}^*_{i_{\min}}\}\right]\\
	\le& \ell+\sum_{t=|\mathcal{B}^*|+1}^{T}\mathbb{P}\left(\pi_t\in\mathcal{B}^*_{i_{\min}},\tau_{\mathcal{B}^*_{i_{\min}}}(t-1)\ge \ell\right)\\
	\le&\ell +\sum_{t=|\mathcal{B}^*|+1}^{T}\mathbb{P}\Big(H_{i_{\min}}(t-1)\ge H_{i_{\max}}(t-1),\tau_{\mathcal{B}^*_{i_{\min}}}(t-1)\ge \ell\Big)\\	
	\le& \ell+\sum_{t=|\mathcal{B}^*|}^{T-1}\sum_{s=\ell}^{t}\sum_{r=1}^{t}\mathbb{P}\big(H_{i_{\min}}(t)\ge H_{i_{\max}}(t),\tau_{\mathcal{B}^*_{i_{\min}}}(t)=s,\tau_{\mathcal{B}^*_{i_{\max}}}(t)=r\big).\\
	\end{aligned}
\end{equation}
To upper bound each term on the RHS of the last inequality in (\ref{pfthm2eq1}), we consider
\begin{equation}\label{obj}
	\begin{aligned}
	&\mathbb{P}\Bigg(\frac{\sum_{j\in\mathcal{B}^*_{i_{\min}}}\tau_j(t)\bar{x}_{j}(t)}{s}+c_{t,s}\ge \frac{\sum_{j\in\mathcal{B}^*_{i_{\max}}}\tau_j(t)\bar{x}_{j}(t)}{r}+c_{t,r}\Bigg)\\
	\le&\mathbb{P}\Bigg(\frac{\sum_{j\in\mathcal{B}^*_{i_{\min}}}\tau_j(t)\bar{x}_{j}(t)}{s}\ge\frac{\sum_{j\in\mathcal{B}^*_{i_{\min}}}\tau_j(t)\mu_j}{s}+c_{t,s}\Bigg)\\
	&+\mathbb{P}\Bigg(\frac{\sum_{j\in\mathcal{B}^*_{i_{\max}}}\tau_j(t)\bar{x}_{j}(t)}{r}\le \frac{\sum_{j\in\mathcal{B}^*_{i_{\max}}}\tau_j(t)\mu_j}{r}-c_{t,r}\Bigg)\\
	&+\mathbb{P}\Bigg(\frac{\sum_{j\in\mathcal{B}^*_{i_{\max}}}\tau_j(t)\mu_j}{r}<\frac{\sum_{j\in\mathcal{B}^*_{i_{\min}}}\tau_j(t)\mu_j}{s}+2c_{t,s}\Bigg),
	\end{aligned}
\end{equation}
where $\tau_{\mathcal{B}^*_{i_{\min}}}(t)=s,\tau_{\mathcal{B}^*_{i_{\max}}}(t)=r$. The inequality holds because the event on the LHS indicates that at least one of the three events on the RHS happens. To upper bound the first term, let $Z_t=\sum_{j\in\mathcal{B}^*_{i_{\min}}}\mathbb{I}\{\pi_t=j\}X_j(t)$, where $X_j(t)$ is the random reward from arm $j$ at time $t$. Let $\nu_t=\sum_{j\in\mathcal{B}^*_{i_{\min}}}\mathbb{I}\{\pi_t=j\}\mu_j$. Note that if $\pi_t\not\in\mathcal{B}^*_{i_{\min}}$, $Z_t=\nu_t=0$. Consider the first term on the RHS of (\ref{obj}):
	\begin{equation}\label{markov}
	\begin{aligned}
		&\mathbb{P}\left(\frac{\sum_{\tau=1}^{t}(Z_{\tau}-\nu_{\tau})}{s}\ge\sqrt{\frac{8\log t}{s}},\tau_{\mathcal{B}^*_{i_{\min}}}(t)=s\right)		\le &\mathbb{P}\left(\mathbb{I}\{\tau_{\mathcal{B}^*_{i_{\min}}}(t)=s\}\cdot e^{\lambda\sum_{\tau=1}^{t}(Z_{\tau}-\nu_{\tau})}\ge e^{\lambda\sqrt{8s\log t}}\right),
	\end{aligned}
	\end{equation}
	Using the Markov inequality, we have
	\begin{equation}\label{markov2}	
	\begin{aligned}
		&\mathbb{P}\left(\mathbb{I}\{\tau_{\mathcal{B}^*_{i_{\min}}}(t)=s\}\cdot e^{\lambda\sum_{\tau=1}^{t}(Z_{\tau}-\nu_{\tau})}\ge e^{\lambda\sqrt{8s\log t}}\right)\le &e^{-\lambda\sqrt{8s\log t}}\cdot \mathbb{E}\left[\mathbb{I}\{\tau_{\mathcal{B}^*_{i_{\min}}}(t)=s\}e^{\lambda\sum_{\tau=1}^{t}(Z_{\tau}-\nu_{\tau})}\right].
	\end{aligned}
	\end{equation}
	Let $\mathcal{F}_{t}=\sigma(Z_1,...,Z_{t})$ be a filtration on the observation history, $Y_t=\mathbb{I}\{\pi_t\in\mathcal{B}^*_{i_{\min}}\}$; clearly $Y_t\in\mathcal{F}_{t-1}$. Let $S_t=\sum_{\tau=1}^{t}Y_{\tau}$, $G_t=e^{\lambda\sum_{\tau=1}^{t}(Z_{\tau}-\nu_{\tau})}$ (note that $G_0=1$ and $S_0=0$). We show that $\left\{G_t/e^{\frac{1}{2}\lambda^2S_t}\right\}_t$ is a submartingale.
	Consider
	\begin{align}
		\mathbb{E}\left[\frac{G_t}{e^{\frac{1}{2}\lambda^2S_t}}\Bigg|\mathcal{F}_{t-1},Y_t=1\right]=&\frac{G_{t-1}\mathbb{E}\left[e^{\lambda(Z_t-\nu_t)}\Big|\mathcal{F}_{t-1},Y_t=1\right]}{e^{\frac{1}{2}\lambda^2(S_{t-1}+1)}}\le\frac{G_{t-1}}{e^{\frac{1}{2}\lambda^2(S_{t-1}+1)}}e^{\frac{1}{2}\lambda^2}=\frac{G_{t-1}}{e^{\frac{1}{2}\lambda^2S_{t-1}}},\label{submartingale1}
	\end{align}
	and
	\begin{align}
		\mathbb{E}\left[\frac{G_t}{e^{\frac{1}{2}\lambda^2S_t}}\Bigg|\mathcal{F}_{t-1},Y_t=0\right]=&\frac{G_{t-1}\mathbb{E}\left[e^{\lambda(Z_t-\nu_t)}\Big|\mathcal{F}_{t-1},Y_t=0\right]}{e^{\frac{1}{2}\lambda^2S_{t-1}}}=\frac{G_{t-1}}{e^{\frac{1}{2}\lambda^2S_{t-1}}}.\label{submartingale2}
	\end{align}
	Note that the inequality in (\ref{submartingale1}) holds because given $\mathcal{F}_{t-1}$, $\pi_t$ is fixed and thus $Z_t=X_{\pi_t}(t)$ which is a sub-Gaussian random variable. Equation (\ref{submartingale2}) holds because given $Y_t=0$, $Z_t=\nu_t=0$. Therefore, $\left\{G_t/e^{\frac{1}{2}\lambda^2S_t}\right\}_t$ is a submartingale and
	\begin{align}
		\mathbb{E}\left[\frac{G_t}{e^{\frac{1}{2}\lambda^2S_t}}\right]\le \mathbb{E}\left[\frac{G_0}{e^{\frac{1}{2}\lambda^2S_0}}\right]=1.
	\end{align}
	Moreover, we have
	\begin{align}
		\mathbb{E}\left[\mathbb{I}\{S_t=s\}\frac{G_t}{e^{\frac{1}{2}\lambda^2S_t}}\right]\le\mathbb{E}\left[\frac{G_t}{e^{\frac{1}{2}\lambda^2S_t}}\right]\le 1,
	\end{align}
	and thus
	\begin{align}
		\mathbb{E}\left[\mathbb{I}\{S_t=s\}G_t\right]\le e^{\frac{1}{2}\lambda^2s}.
	\end{align}
	Applying this to (\ref{markov2}) and choosing $\lambda=\frac{\sqrt{8s\log t}}{s}$, we have
	\begin{align}
		&\mathbb{P}\left(\mathbb{I}\{\tau_{\mathcal{B}^*_{i_{\min}}}(t)=s\}\cdot e^{\lambda\sum_{\tau=1}^{t}(Z_{\tau}-\nu_{\tau})}\ge e^{\lambda\sqrt{8s\log t}}\right)\le e^{\frac{1}{2}\lambda^2s-\lambda\sqrt{8s\log t}}=e^{-4\log t}=t^{-4}.
	\end{align}
	Similarly, the second term can also be upper bounded by $t^{-4}$. For the third term, let 
\begin{align}
\ell\ge\frac{32\log T}{(\min_{j\in\mathcal{B}^*_{i_{\max}}}\mu_j-\max_{j\in\mathcal{B}^*_{i_{\min}}}\mu_j)^2}.
\end{align}
Then, since $s\ge\ell$, $t\le T$, we have
\begin{equation}
\begin{aligned}
	&\frac{\sum_{j\in\mathcal{B}^*_{i_{\max}}}n_j\mu_j}{r}-\frac{\sum_{j\in\mathcal{B}^*_{i_{\min}}}n_j\mu_j}{s}-2c_{t,s}\ge\min_{j\in\mathcal{B}^*_{i_{\max}}}\mu_j-\max_{j\in\mathcal{B}^*_{i_{\min}}}\mu_j-\sqrt{\frac{32\log t}{s}}\ge 0.
\end{aligned}
\end{equation}
Therefore, if we choose $\ell=\lceil\frac{32\log T}{(\min_{j\in\mathcal{B}^*_{i_{\max}}}\mu_j-\max_{j\in\mathcal{B}^*_{i_{\min}}}\mu_j)^2}\rceil$, we get
\begin{equation}
	\begin{aligned}
		&\mathbb{E}[\tau_{\mathcal{B}^*_{i_{\min}}}(T)]\le \ell+\sum_{t=|\mathcal{B}^*|}^{T-1}\sum_{s=1}^{t}\sum_{r=1}^{t}2t^{-4}\\
		\le& \frac{32\log T}{(\min_{j\in\mathcal{B}^*_{i_{\max}}}\mu_j-\max_{j\in\mathcal{B}^*_{i_{\min}}}\mu_j)^2}+O(1)\\
		=& \frac{32\log T}{(\min_{j\in\mathcal{B}^*_{i_{\min}}}\Delta_j-\max_{j\in\mathcal{B}^*_{i_{\max}}}\Delta_j)^2}+O(1).
	\end{aligned}
\end{equation}
Now we upper bound the number of times that arms in $\mathcal{B}^*_{i_{\max}}$ have been played up to time $T$. For each $i\in\mathcal{B}^*_{i_{\max}}\setminus \mathcal{A}$,
\begin{equation}
	\begin{aligned}
		&\mathbb{E}[\tau_i(T)]=\mathbb{E}\left[1+\sum_{t=|\mathcal{B}^*|+1}^{T}\mathbb{I}\{\pi_t=i\}\right]\\
		\le &\ell +\sum_{t=|\mathcal{B}^*|+1}^{T}\mathbb{P}\left(\pi_t=i,\tau_i(t-1)\ge\ell\right)\\
		\le &\ell +\sum_{t=|\mathcal{B}^*|+1}^{T}\mathbb{P}\left(L_i(t-1)\ge L_{i_{\max}}(t-1),\tau_i(t-1)\ge\ell\right).
	\end{aligned}
\end{equation}
Using an argument similar to that for $\tau_{\mathcal{B}^*_{i_{\min}}}$, we get
\begin{align}
	\mathbb{E}[\tau_i(T)]\le \frac{32\log T}{\Delta_i^2}+O(1).
\end{align}
Therefore, we get the upper bound on regret of LSDT-CSI in (\ref{upperbound}) if $\mathcal{G}^*_{\epsilon}$ is connected but not complete.

\section{Proof of Theorem 3}\label{pflowerbound}
The basic structure of the proof follows that in \cite{lai1985asymptotically} and \cite{buccapatnam2014stochastic}. For every suboptimal arm $i~(\mu_i<\mu_{i_{\max}})$, we construct a new set of reward distributions with parameters
$\bm{\theta}^{(i)}=(\theta_1^{(i)},\theta_2^{(i)},...,\theta_K^{(i)})$ and means $\bm{\mu}^{(i)}=(\mu_1^{(i)},\mu_2^{(i)},...,\mu_K^{(i)})$ such that $\mu_{i}^{(i)}=\max_{j\in\mathcal{V}}\mu_j^{(i)}$. Then we can generate a new graph $\mathcal{G}_{\epsilon}^{(i)}=(\mathcal{V}^{(i)},\mathcal{E}^{(i)})$ where $\mathcal{V}^{(i)}$  is the set of new arms, and $(u,v)\in \mathcal{E}^{(i)}$ if and only if $|\mu_u^{(i)}-\mu_v^{(i)}|<\epsilon$. 

To establish the relationship between the new problem and the original one, we need to retain the same graph connectivity. Since $\mathcal{B}^*$ is the set of arms that could potentially be optimal given $\mathcal{G}_{\epsilon}^*$, we could only construct for each $i\in\mathcal{B}^*\setminus\mathcal{A}$ a set of new reward distributions with parameters $\theta^{(i)}$ such that arm $i$ is optimal. Thus, for each $i\in\mathcal{B}^*\setminus\mathcal{A}$, consider $\theta^{(i)}$ with mean rewards $\mu^{(i)}$ satisfying:
\begin{enumerate}
	\item If $i\in \mathcal{B}^*_{i_{\max}}\setminus\mathcal{A}$: $\mu_i^{(i)}=\mu_{i_{\max}}+\eta$, $\mu_j^{(i)}=\mu_j,\forall j\neq i$.
	\item If $i\in \mathcal{B}^*_{i_{\min}}$: $\mu_i^{(i)}=\mu(\theta_i')+\eta$, $\mu_j^{(i)}=\mu(\theta_j'),\forall j\neq i,$ where $\mu(\theta_i'),\mu(\theta_j')$ are defined as
	\begin{equation}\label{thetaj}
		\mu(\theta_j')=\left\{
		\begin{aligned}
			&\mu_j,&\forall j\in \mathcal{B}_{i_{\max}}^*,\\
			&\mu_{i_{\max}}+\min_{k\in \mathcal{B}^*_{i_{\max}}}\mu_k-\mu_{i_{\min}},&\forall j\in \mathcal{B}^*_{i_{\min}},\\
			&\mu_{i_{\max}}+\min_{k\in \mathcal{B}^*_{i_{\max}}}\mu_k-\mu_{j},&\forall j\not\in \mathcal{B}^*.
		\end{aligned}
		\right.
	\end{equation}
\end{enumerate}
One can check that in both cases, $\mathcal{G}^{(i)}_{\epsilon}$ and $\mathcal{G}^*_{\epsilon}$ have the same connectivity if 
\begin{align}
\eta<\epsilon-\max\big\{\mu_{i_{\max}}-\min_{j\in\mathcal{N}[i_{\max}]}\mu_j,\max_{j\in\mathcal{N}[i_{\min}]}\mu_j-\mu_{i_{\min}}\big\}.
\end{align}

Then we define the log-likelihood ratio between the observations from two sets of arms with distribution parameters $\theta=(\theta_1,...,\theta_K)$ and $\theta^{(i)}=(\theta_1^{(i)},...,\theta_K^{(i)})$ up to time $T$ under any uniformly good policy $\pi$ as
\begin{align}
	\mathcal{L}^{(i)}(T)=\sum_{j\in\mathcal{V}}\sum_{s=1}^{\tau_j(T)}\log\left(\frac{f(X_{j,s};\theta_j)}{f(X_{j,s};\theta_j^{(i)})}\right),
\end{align}
where $\tau_j(T)$ is the number of times arm $j$ has been played by policy $\pi$ up to time $T$ and $X_{j,s}$ is the reward obtained when arm $j$ is played for the $s$-th time. We show that it is unlikely to have
\begin{align}
	\sum_{j\in\mathcal{V}}\tau_j(T)I(\theta_j||\theta_j^{(i)})\le(1-\gamma)\log T,
\end{align}
under two separate cases: $\mathcal{L}^{(i)}(T)\le(1-\delta)\log T$ and $\mathcal{L}^{(i)}(T)>(1-\delta)\log T$ where $\delta,\gamma>0$ are determined later.
\begin{enumerate}
	\item If $\mathcal{L}^{(i)}(T)\le(1-\delta)\log T$: by the uniform goodness of policy $\pi$, we have
	\begin{equation}
		\begin{aligned}
		&{\mathbb{P}}_{\bm{\theta}^{(i)}}\left\{\sum_{j\in\mathcal{V}}\tau_j(T)I(\theta_j||\theta_j^{(i)})\le(1-\gamma)\log T\right\}\\
		\le&{\mathbb{P}}_{\bm{\theta}^{(i)}}\left\{\tau_i(T)I(\theta_i||\theta_i^{(i)})\le (1-\gamma)\log T\right\}\\
		=&{\mathbb{P}}_{\bm{\theta}^{(i)}}\left\{T-\tau_i(T)\ge T-\frac{(1-\gamma)\log T}{I(\theta_i||\theta_i^{(i)})}\right\}\\
		\le&\frac{\mathbb{E}_{\bm{\theta}^{(i)}}[T-\tau_i(T)]}{ T-\frac{(1-\gamma)\log T}{I(\theta_i||\theta_i^{(i)})}}=o(T^{\alpha-1}),
		\end{aligned}
	\end{equation}
	for all $\alpha>0$ as $T\to\infty$. 
	
	We let 
	\begin{align}	
	H=\Bigg\{\sum_{j\in\mathcal{V}}\tau_j(T)I(\theta_j||\theta_j^{(i)})\le(1-\gamma)\log T,\mathcal{L}^{(i)}(T)\le(1-\delta)\log T\Bigg\}.
	\end{align}
	By a change of measure from $\mathbb{P}_{\bm{\theta}^{(i)}}$ to $\mathbb{P}_{\bm{\theta}}$, we have
	\begin{equation}
		\begin{aligned}
			{\mathbb{P}}_{\bm\theta}\{{H}\}&\le\int_{H}\textrm{d}P_{\bm{\theta}}=\int_{H}\exp\left(\mathcal{L}^{(i)}(T)\right)\textrm{d}P_{\bm{\theta}^{(i)}}\le T^{1-\delta}o(T^{\alpha-1})=o(1),
		\end{aligned}
	\end{equation}
	for all $\delta>0$ as $T\to\infty$ if we choose $\alpha<\delta$.
	\item If $\mathcal{L}^{(i)}(T)>(1-\delta)\log T$: by the strong law of large numbers, as $t\to\infty$, we have
	\begin{align}
		\frac{1}{t}\sum_{s=1}^{t}\log\left(\frac{f(X_{j,s};\theta_j)}{f(X_{j,s};\theta_j^{(i)})}\right)\rightarrow I(\theta_j||\theta_j^{(i)})~ \textrm{almost surely}.
	\end{align}
	Rewrite $\mathcal{L}^{(i)}(T)$ as
	\begin{align}
		\mathcal{L}^{(i)}(T)=\sum_{j\in\mathcal{V}}\tau_j(T)\frac{1}{\tau_j(T)}\sum_{s=1}^{\tau_j(T)}\log\left(\frac{f(X_{j,s};\theta_j)}{f(X_{j,s};\theta_j^{(i)})}\right)
	\end{align} 
	and then
	\begin{equation}
		\begin{aligned}
			&{\mathbb{P}}_{\bm\theta}\Bigg\{\sum_{j\in\mathcal{V}}\tau_j(T)I(\theta_j||\theta_j^{(i)})\le(1-\gamma)\log T,L^{(i)}(T)>(1-\delta)\log T\Bigg\}\\
			=&{\mathbb{P}}_{\bm\theta}\Bigg\{\sum_{j\in\mathcal{K}}\tau_j(T)I(\theta_j||\theta_j^{(i)})\le(1-\gamma)\log T,\sum_{j\in\mathcal{K}}\tau_j(T)\frac{1}{\tau_j(T)}\sum_{s=1}^{\tau_j(T)}\log\left(\frac{f(X_{j,s};\theta_j)}{f(X_{j,s};\theta_j^{(i)})}\right)>(1-\delta)\log T\Bigg\}\\
			=&o(1),
		\end{aligned}
	\end{equation}
as $T\to\infty$ if we choose $\gamma>\delta$.
\end{enumerate}

Now we have proved that for all $i\in \mathcal{B}^*\setminus \mathcal{A}$, we have
\begin{align}
	\sum_{j\in\mathcal{V}}\frac{\mathbb{E}[\tau_j(T)]}{\log T}I(\theta_j||\theta_j^{(i)})\ge 1.
\end{align}
To be specific, 
\begin{enumerate}
	\item If $i\in \mathcal{B}^*_{i_{\max}}\setminus \mathcal{A}$, let $\eta\to 0$, we have
	\begin{align}
		\mathbb{E}[\tau_i(T)]\ge \frac{\log T}{I(\theta_i||\theta_{i_{\max}})},
	\end{align}
	\item If $i\in\mathcal{B}^*_{i_{\min}}$, let $\eta\to 0$, we have
	\begin{align}
		\sum_{j\not\in\mathcal{B}^*_{i_{\max}}}\mathbb{E}[\tau_j(T)]I(\theta_j||\theta_j')\ge \log T.
	\end{align}
\end{enumerate}
Therefore, the optimal constant in front of $\log T$ is the solution to the linear program $\mathcal{P}_1$:
\begin{equation}\label{lowerboundLP}
		\begin{aligned}
			\mathcal{P}_1:C_1&=\min_{\{\tau_i\}_{i\in\mathcal{V}}}\sum_{i\in\mathcal{V}}\Delta_i\tau_i,\\
			s.t.&\sum_{j\not\in \mathcal{B}^*_{i_{\max}}}\tau_jI(\theta_j||\theta_j')\ge 1,\\
			&~~~\tau_i\ge\frac{1}{I(\theta_i||\theta_{i_{\max}})},~~\forall i\in \mathcal{B}^*_{i_{\max}}\setminus\mathcal{A},\\
			&~~~\tau_i\ge 0,~~~~~~~~~~~~~~~\forall i\in\mathcal{V}.
		\end{aligned}
	\end{equation}
	where $\theta_j'$ is the parameter of the density function $f(x_j;\theta_j')$ whose mean value $\mu(\theta_j')$ is defined in (\ref{thetaj}).

In light of the LP $\mathcal{P}_1$, each sub-optimal arm in $\mathcal{B}^*_{i_{\max}}$ has to be played $\Omega(\log T)$ times to be distinguished from the optimal one. Moreover, the total number of times that arms in $\mathcal{V}\setminus\mathcal{B}^*_{i_{\max}}$ are played should be at least $\Omega(\log T)$. Thus if we consider the regret order in terms of the number of arms and the time length, we can conclude that for fixed $\Delta_i$, $I(\theta_i||\theta_i')$ and $I(\theta_i||\theta_{i_{\max}})$, the regret for any uniformly good policy is of order $${\Omega}\Big((1+|\mathcal{B}^*_{i_{\max}}\setminus\mathcal{A}|)\log T\Big),$$
	as $T\to\infty$, which matches the upper bound on regret of LSDT-CSI. Therefore, LSDT-CSI is order optimal.
\section{CONSISTENT-NAE-3SAT and Proof of NP-Completeness}\label{pflm1}
We first give the definition of CONSISTENT-NAE-3SAT and then show that it is NP-complete.
\begin{itemize}
\item[]\emph{{CONSISTENT-NAE-3SAT}}
\item[]\emph{{[INPUT]}:  A not-all-equal satisfiable 3-SAT instance: there exists a truth assignment such that every clause contains one or two true literals .}
\item[]\emph{{[QUESTION]}: Does there exist a consistent truth assignment, i.e., every clause contains exactly one true literal OR every clause contains exactly two true literals? }
\end{itemize}

The problem is clearly in NP since a given truth assignment can be verified in polynomial time. To show the NP-completeness, we first show that 1-CONSISTENT-NAE-3SAT is NP-complete. Note that 1-CONSISTENT-NAE-3SAT asks if there exists a truth assignment such that every clause has exactly 1 true literal given true instance of NAE-3SAT.

It is clear that 1-CONSISTENT-NAE-3SAT is in NP. We give a reduction from 1-IN-3SAT, a known NP-complete problem \cite{schaefer1978complexity}, as follows: given an instance of 1-IN-3SAT, for every clause $C_i=(x_{i,1},x_{i,2},x_{i,3})$, we construct three clauses in the corresponding 1-CONSISTENT-NAE-3SAT instance with two additional variables $a_i,b_i$:
$$
C_{i,1}=(x_{i,1},x_{i,2},a_i),~C_{i,2}=(x_{i,2},x_{i,3},b_i),~C_{i,3}=(a_i,b_i,x_{i,2}).
$$ 
This is clearly a polynomial time reduction. 

We first show that the 1-CONSISTENT-NAE-3SAT instance we constructed is not-all-equal (NAE) satisfiable, i.e., there exists a truth assignment such that every clause is satisfied and contains at most 2 true literals. For any arbitrary truth assignment of $(x_1,...,x_n)$, we can choose $(a_1,b_1,...,a_m,b_m)$ according to Table \ref{truthtable}. One can check that every clause is satisfied with at most 2 true literals. Therefore, the 1-CONSISTENT-NAE-3SAT instance is NAE satisfiable.
\begin{table}[h!]
\begin{center}
\begin{tabular}{ c  c  c | c  c }
$x_{i,1}$ & $x_{i,2}$ & $x_{i,3}$ & $a_i$ & $b_i$\\
\hline
0 & 0 & 0 & 1 & 1\\ 
0 & 0 & 1 & 1 & 0\\ 
0 & 1 & 0 & 0 & 0\\ 
1 & 0 & 0 & 0 & 1\\ 
0 & 1 & 1 & 0 & 0\\ 
1 & 0 & 1 & 1 & 1\\ 
1 & 1 & 0 & 0 & 0\\ 
1 & 1 & 1 & 0 & 0\\ 
\end{tabular}
\end{center}
\caption{Truth table for NAE-3SAT.}\label{truthtable}
\vspace{-0.5cm}
\end{table}

Now we assume that the original 1-IN-3SAT instance is satisfied by an assignment of $(x_1,...,x_n)$ with three cases:
\begin{enumerate}[(i)]
	\item only $x_{i,1}$ is true: let $a_i=0,b_i=1$;
	\item only $x_{i,2}$ is true: let $a_i=0,b_i=0$;
	\item only $x_{i,3}$ is true: let $a_i=1,b_i=0$.
\end{enumerate} 
It is clear that the 1-CONSISTENT-NAE-3SAT is satisfied by the assignment of $(x_1,...,x_n,a_1,b_1,...,a_m,b_m)$.

On the other hand, assume that the 1-CONSISTENT-NAE-3SAT instance is satisfied by an assignment of $(x_1,...,x_n,$ $a_1,b_1,...,a_m,b_m)$. Consider clause $C_{i,1}=(x_{i,1},x_{i,2},a_i)$:
\begin{enumerate}[(i)]
	\item only $x_{i,1}$ is true: $x_{i,2}=a_i=0$. It is clear that $b_i=1$ since $C_{i,3}$ is satisfied. Thus, $x_{i,3}=0$ since $C_{i,2}$ is satisfied with only one true literal ($b_i$). Therefore, we have $x_{i,1}=1,x_{i,2}=0,x_{i,3}=0$;
	\item only $x_{i,2}$ is true: since $C_{i,1},C_{i,2},C_{i,3}$ are all satisfied with only one true literal in each clause, we have $x_{i,1}=x_{i,3}=a_i=b_i=0$;
	\item only $a_i$ is true: $x_{i,1}=x_{i,2}=0$. since $C_{i,2},C_{i,3}$ are satisfied with only one true literal in each clause, we have $b_i=x_{i,2}=0$ and $x_{i,3}=1$.
\end{enumerate}
Therefore, every clause $C_i=(x_{i,1},x_{i,2},x_{i,3})$ in the original 1-IN-3SAT instance is satisfied with only one true literal. 

In summary, we have shown that the 1-IN-3SAT instance is satisfiable if and only if the corresponding 1-CONSISTENT-NAE-3SAT instance is satisfiable, which indicates the NP-completeness of 1-CONSISTENT-NAE-3SAT.

Finally, we show that CONSISTENT-NAE-3SAT (clearly in NP) is NP-complete via a reduction from 1-CONSISTENT-NAE-3SAT. Given an instance of 1-CONSISTENT-NAE-3SAT with $n$ variables $(x_1,...,x_n)$ and $m$ clauses $C_{1},...,C_m$, we add a new clause $C_0=(x_1,\bar{x}_1,0)$ and get an instance of CONSISTENT-NAE-3SAT with $n$ variables and $m+1$ clauses. This is clearly a polynomial reduction and there must exist a NAE satisfiable assignment for the new instance. Now we assume that the original 1-CONSISTENT-NAE-3SAT instance has a satisfiable assignment $(x_1,...,x_n)$, it follows immediately that the CONSISTENT-NAE-3SAT is also satisfied by the same assignment. On the other hand, we assume that CONSISTENT-NAE-3SAT is satisfied by a truth assignment $(x_1,...,x_n)$. Since $C_0$ is satisfied with exactly 1 true literal, so are the other clauses. Thus $(x_1,...,x_n)$ is a satisfiable assignment for the 1-CONSISTENT-NAE-3SAT instance. Therefore, we have shown that the 1-CONSISTENT-NAE-3SAT instance is satisfiable if and only if the corresponding CONSISTENT-NAE-3SAT instance is satisfiable.

In conclusion, CONSISTENT-NAE-3SAT is NP-complete.

\section{Proof of Theorem 4}\label{pfNPcomplete}
It is clear that LEFTANCHOR is in NP since given a graph, one can verify if it is  a UIG and if a specific node is a left anchor in polynomial time. Now we show the NP-completeness of LEFTANCHOR through a reduction from CONSISTENT-NAE-3SAT. The reduction is similar to the one used in proving the NP-completeness of the UIG Sandwich Problem in \cite{kaplan1994complexity}. 

Given an instance of CONSISTENT-NAE-3SAT, let $x_1,...,x_n$ be $n$ variables and $C_1,...,C_m$ be $m$ clauses where $C_i=(x_{i,1},x_{i,2},x_{i,3})$ and $x_{i,j}\in\{x_1,...,x_n,\bar{x}_1,...,\bar{x}_n\}$. For every variable $x_i$, we construct a variable gadget with 5 vertices $(x_i,x_i',p,\bar{x}_i',\bar{x}_i)$ in the LEFTANCHOR instance: add 4 type-S edges $(x_i,x_i'),(x_i',p),(p,\bar{x}_i'),(\bar{x}_{i}',\bar{x}_i)$ to $\mathcal{E}_1$ and 6 type-2 edges $(x_i,p)$, $(x_i,\bar{x}_i')$, $(x_i,\bar{x}_i)$, $(x_i',\bar{x}_i')$, $(x_i',\bar{x}_i)$, $(p,\bar{x}_{i})$ to $\mathcal{E}_2$ (see Figure \ref{variable}).

Moreover, for every clause $C_i=(x_{i,1},x_{i,2},x_{i,3})$, we construct a clause gadget with 6 vertices $(x_{i,1},x_{i,2},x_{i,3},v_{i,1},$ $v_{i,2},v_{i,3})$ in the LEFTANCHOR instance: add 3 type-S edges $(x_{i,j},v_{i,j}), j=1,2,3$ to $\mathcal{E}_1$ and 9 type-D edges $(v_{i,j},x_{i,k}),j\neq k$ and $(v_{i,j},v_{i,k}),j\neq k$ to $\mathcal{E}_2$ (see Figure \ref{clause}). Note that every vertex $x_{i,j}$ in the clause gadget belongs to one of the variable gadgets, we don't~create~additional~vertices.

\begin{figure}[t]
	\begin{center}
	\begin{subfigure}{0.5\textwidth}
		\begin{center}
		\includegraphics[width = 0.9\textwidth]{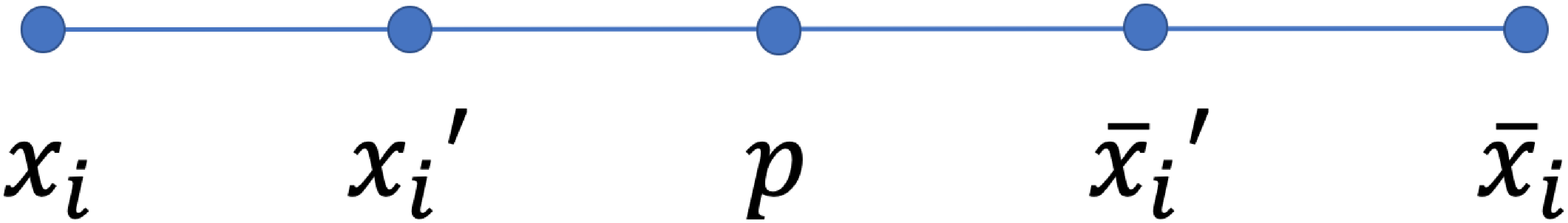}
		\vspace{-2.5cm}
		\caption{\small{Variable gadget}\label{variable}}
		\end{center}
	\end{subfigure}

	\begin{subfigure}{0.5\textwidth}
		\begin{center}
		\includegraphics[width=0.9\textwidth]{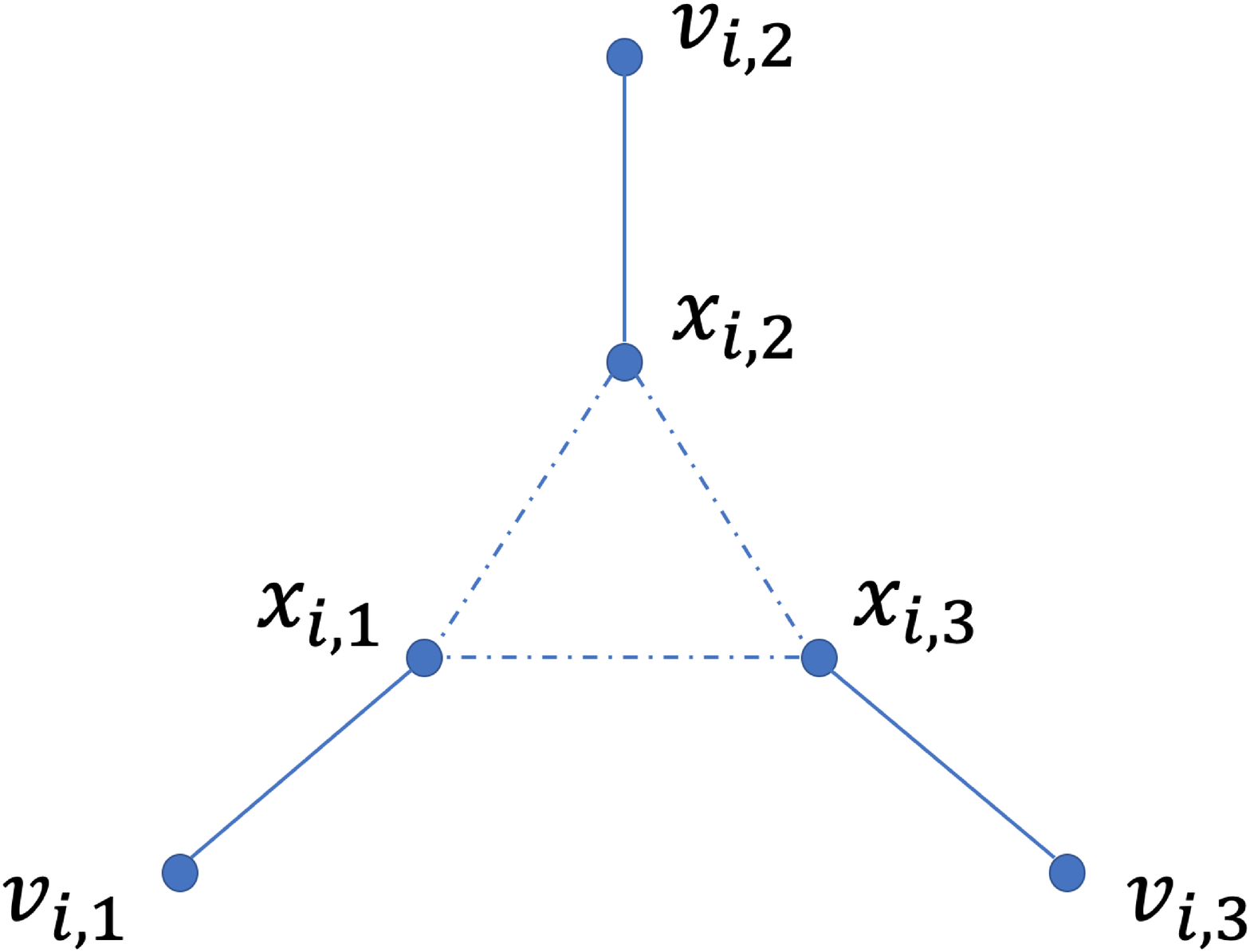}
		\vspace{-.5cm}
		\caption{\small{Clause gadget}\label{clause}}
		\end{center}
	\end{subfigure}
	\end{center}
	\caption{\small{Variable and clause gadgets: in each gadget, solid line edges represent type-S edges in $\mathcal{E}_1$, missing edges represent type-D edges in $\mathcal{E}_2$, dash line edges represent unknown edges in $(\mathcal{E}_1\cup\mathcal{E}_2)^C$.}}
	\vspace{-.5cm}
\end{figure}
In summary, there are $4n+3m+1$ vertices in the LEFTANCHOR instance, i.e.,
$$
\begin{aligned}
\mathcal{V}=&\{p\}\cup\{x_i,x_i',\bar{x}_i',\bar{x}_i|i=1,...,n\}\cup\{v_{i,1},v_{i,2},v_{i,3}|i=1,...,m\},
\end{aligned}
$$
$4n+3m$ type-S edges, i.e.,
$$
\begin{aligned}
\mathcal{E}_1=&\Big\{(x_i,x_i'),(x_i',p),(p,\bar{x}_i'),(\bar{x}_{i}',\bar{x}_i)|i=1,...,n\Big\}\cup\Big\{(x_{i,j},v_{i,j})|i=1,...,m,j=1,2,3\Big\},
\end{aligned}
$$
and $6n+9m$ type-D edges, i.e.,
$$
\begin{aligned}
&\mathcal{E}_2=\Big\{(x_i,p),(x_i,\bar{x}_i'), (x_i,\bar{x}_i), (x_i',\bar{x}_i'), (x_i',\bar{x}_i), (p,\bar{x}_{i})\Big|i=1,...,n\Big\}\\
&~~~~~~~~\cup\Big\{(v_{i,j},x_{i,k})\Big|i=1,...,m,j\neq k\Big\}\cup\Big\{(v_{i,j},v_{i,k})\Big|i=1,...,m,j\neq k\Big\}.
\end{aligned}
$$

Clearly the construction is done in polynomial time. Moreover, it is shown in \cite{kaplan1994complexity} that if there exists a truth assignment of $(x_1,...,x_n)$ such that every clause $C_i$ is satisfied with at most two true literals, there exists a UIG $\mathcal{G}'=(\mathcal{V},\mathcal{E}_3)$ such that $\mathcal{E}_{1}\subseteq\mathcal{E}_3$ and $\mathcal{E}_3\cap\mathcal{E}_2=\emptyset$. Now, let $x_1$ and $\bar{x}_1$ be two nodes that we want to decide if they can be left anchors. Then we get two corresponding instances of LEFTANCHOR for any given instance of CONSISTENT-NAE-3SAT. We need to show that the instance of CONSISTENT-NAE-3SAT is satisfiable if and only if at least one of the two corresponding instances of LEFTANCHOR is satisfiable, i.e., at least one of the two nodes $x_1$ and $\bar{x}_1$ can be a left anchor of a UIG $\mathcal{G}''=(\mathcal{V},\mathcal{E}_4)$ where $\mathcal{E}_1\subseteq\mathcal{E}_4$ and $\mathcal{E}_4\cap\mathcal{E}_2=\emptyset$.

We first assume that the CONSISTENT-NAE-3SAT instance is satisfied by a truth assignment of $(x_1,...,x_n)$. Suppose every clause has only 1 true literal and with out loss of generality, we assume $x_1=1$. We show that $x_1$ can be a left anchor of a UIG satisfying the constraints. We assign a unit length interval for every vertex in $\mathcal{V}$ as follows (see Figure \ref{intervalassing}): we let $I(p)=P$. For $i=1,...,n$, if $x_i=1$, we let $I(x_i)=A_i$, $I(x_i')=L$, $I(\bar{x}_i')=R$ and $I(\bar{x}_i)=B_i$; if $x_i=0$, we let $I(x_i)=B_i$, $I(x_i')=R$, $I(\bar{x}_i')=L$ and $I(\bar{x}_i)=A_i$. In other words, we put all the true (or false) literals to the left (or right) ``staircases'' and assign $x_i'$ and $\bar{x}_i'$ accordingly. For every clause $C_i,i=1,...,m$, let $x_{i,j}$ be the true literal in $C_i$,  then we let $I(v_{i,j})=I(x_{i,j})$. For the other two false literals $x_{i,k_1}$, $x_{i,k_2}$, the two corresponding intervals both have non-overlapping tails. Therefore, we can assign $I(v_{i,k_1})$ and $I(v_{i,k_2})$ extending from the respective tails. For example in Figure \ref{intervalassing}: consider a clause $(x_1,\bar{x}_2,\bar{x}_3)$, $A_1,B_2,B_3$ are assigned to the three literals and $V_1,V_2,V_3$ are assigned to the associated vertices $v_{i,1},v_{i,2},v_{i,3}$. One can easily check that the induced UIG from the interval assignment of vertices in $\mathcal{V}$ satisfies all the edge constraints and $x_1$ is a left anchor.
\begin{figure}
	\begin{center}
	\includegraphics[width = 0.6\textwidth]{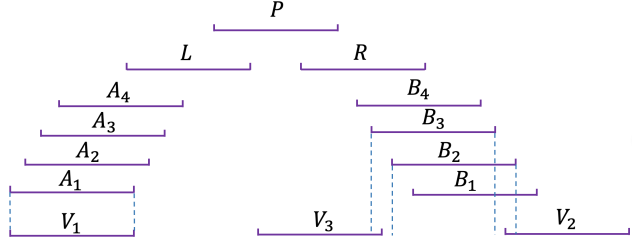}
	\vspace{-2.5cm}
	\caption{\small{Unit interval realization of a satisfiable instance of CONSISTENT-NAE-3SAT.}}\label{intervalassing}

	\end{center}
	
\end{figure}
Similarly, we can show that if $x_1=0$, then $\bar{x}_1$ can be a left anchor of a UIG in the LEFTANCHOR instance. Now suppose every clause has 2 true literals, we can use similar proof structure to show that if $x_1=0$, then $x_1$ can be a left anchor of a UIG satisfying the edges constraints; otherwise $\bar{x}_1$ can be a left anchor.

On the other hand, we assume that at least one of the two nodes $x_1$ and $\bar{x}_1$ can be a left anchor of a UIG $\mathcal{G}''=(\mathcal{V},\mathcal{E}_4)$ satisfying the edge constraints and we need to show that the original instance of CONSISTENT-NAE-3SAT is satisfiable. Without loss of generality, we assume that $x_1$ can be a left anchor. Let $I(v)$ be the unit interval assigned to vertex $v$ in the UIM of the UIG $\mathcal{G}''$. By changing scale and shifting, we can assume that every interval has length $1$ and $I(p)=[0,1]$. Consider for every variable gadget $i=1,...,n$, It is not difficult to see that $I(x_i)$ contains either $-1$ or $2$. We assign truth values to the variables as follows: let $x_i=1$ if $I(x_i)$ contains $-1$ and $x_i=0$ if $I(x_i)$ contains $2$. Now we show that every clause is satisfied with only 1 true literal OR every clause is satisfied with only 2 true literals by the truth assignment. 

Consider every clause gadget: we show that there is exactly one edge among $(x_{i,j},x_{i,k}),j\neq k$ that belongs to $\mathcal{E}_4$: 
\begin{enumerate}[(i)]
	\item if all three edges belong to $\mathcal{E}_4$, then $v_{i,1},v_{i,2},v_{i,3}$ form an asteroidal triple\footnote{An asteroidal triple in a graph is a triple of mutually non-adjacent nodes $i,j,k$ such that between any two of them, there exists a path avoiding the neighborhood of the third.}, which is forbidden in a UIG \cite{lekkeikerker1962representation};
	\item if exactly two edges belong to $\mathcal{E}_4$, e.g., $(x_{i,1},x_{i,2})$ and $(x_{i,1},x_{i,3})$, then $x_{i,1},x_{i,2},x_{i,3}$ and $v_{i,1}$ form a claw ($K_{1,3}$) which is also forbidden in a UIG \cite{lekkeikerker1962representation};
	\item $I(x_{i,j})$ contains either $-1$ or $2$. Hence, there always exist two intervals containing the same point, thus intersecting. Therefore, at least one edge belongs to $\mathcal{E}_4$.
\end{enumerate} 
Since there is exactly one edge among $(x_{i,j},x_{i,k}),j\neq k$ that belongs to $\mathcal{E}_4$, it follows that every clause has only $1$ or $2$ true literals. Furthermore, since $x_1$ is a left anchor of $\mathcal{G}''$, within every clause gadget containing $x_1$, the truth assignment of $x_{1}$ should be different from the other two variables: consider a clause gadget containing $x_1$, assume that $x_{i,k}$ has the same truth assignment as $x_1$ and $x_1$ is connected to $v_1$, then we have $(x_1,x_{i,k}),(x_1,v_1)\in\mathcal{E}_4$, but $(v_1,x_{i,k})\not\in\mathcal{E}_4$. This contradicts the assumption that $x_1$ is a left anchor since $(v_1,x_{i,k})$ should also belong to $\mathcal{E}_4$ if $x_1$ is a left anchor and $(x_1,x_{i,k}),(x_1,v_1)\in\mathcal{E}_4$. 

We first consider if $x_1=1$, then we claim that every clause has exactly one true literal. We prove by contradiction: assume that there exists a clause $C_i=(x_{i,1},x_{i,2},x_{i,3})$ with two true literals, e.g., $x_{i,1},x_{i,2}$. By the assignment of truth values, it is clear that $I(x_{i,1})=[l_1,r_1]$ and $I(x_{i,2})=[l_2,r_2]$ contains $-1$. Without loss of generality, we assume that $l_1<l_2$. Then we consider $I(v_{i,1})=[l_v,r_v]$: since $(v_{i,1},x_{i,1})\in\mathcal{E}_4$ and $(v_{i,1},x_{i,2})\not\in\mathcal{E}_4$, we have $l_1\le r_v<l_2$. Then it is not difficult to see that $I(v_{i,1})$ doesn't contain $-1$ and $l_v$ is smaller than the left coordinate of $I(x_1)$, i.e., $x_1$ is not a left anchor. Contradiction! Therefore, we have shown that every clause has exactly one true literal. On the other hand, if $x_1=0$, it can be shown similarly that every clause has exactly two true literals. In summary, we have shown that the original instance of CONSISTENT-NAE-3SAT is satisfiable if at least one of the two nodes $x_1$ and $\bar{x}_1$ can be a left anchor of the corresponding UIG. This completes the entire reduction and we conclude that LEFTANCHOR is NP-complete.

\section{Proof of Theorem 5}\label{pfsizeB0}
We first show that with probability at least $1-\frac{1}{K^2}$, every arm $i\not\in\mathcal{B}^*$ is eliminated by the offline elimination step of LSDT-PSI. Consider any $i\not\in\mathcal{B}^*$. Note that under Assumption \ref{diversity1}, there exists $j,k\in[m]$ s.t. $\forall u\in\mathcal{B}_j^{*},v\in\mathcal{B}_k^{*}$,
\begin{align}
&(u,i)\in\mathcal{E}_{\epsilon}^*,~(v,i)\in\mathcal{E}_{\epsilon}^*,~(u,v)\in\stcomp{\mathcal{E}_{\epsilon}^*}.
\end{align}
Let $N=\min\{|\mathcal{B}_j^*|,|\mathcal{B}_k^*|\}$. According to Assumption \ref{diversity2}, we have $N\ge\kappa\log K$. We select $\{u_1,u_2,...,u_N\}$ from $\mathcal{B}_j^*$ and $\{v_1,v_2,...,v_N\}$ from $\mathcal{B}_k^*$, then for $n=1,...,N$, define 
\begin{align}
&E_{n,1}=\Big\{(u_n,i)\in\mathcal{E}_{\epsilon}^S\Big\},\\
&E_{n,2}=\Big\{(v_n,i)\in\mathcal{E}_{\epsilon}^S\Big\},\\
&E_{n,3}=\Big\{(u_n,v_n)\in\mathcal{E}_{\epsilon}^D\Big\}.
\end{align} 
According to Assumption \ref{observation}, $\{E_{n,\ell}\}_{n=1,...,N,\ell=1,2,3}$ are independent and $\mathbb{P}(E_{n,1})=\mathbb{P}(E_{n,2})=p_S$, $\mathbb{P}(E_{n,3})=p_D$. Therefore, according to the offline elimination step of LSDT-PSI, the probability that arm $i$ is not eliminated is upper bounded as follows:
\begin{align}
\mathbb{P}(i\textrm{~is~not~eliminated from }\mathcal{B}_0)&\le\prod_{n=1}^{N}\left(1-\prod_{\ell=1}^{3}\mathbb{P}(E_{n,\ell})\right)=(1-p_S^2p_D)^N.
\end{align}
Since $N\ge\kappa\log K$ and according to Assumption \ref{observation}, $p_S^2p_D\ge 1-e^{-2/\kappa}$, we have
\begin{align}
(1-p_S^2p_D)^N\le (e^{-2/\kappa})^{\kappa\log K}\le \frac{1}{K^2}.
\end{align}
Moreover, we can show that
\begin{equation}
\begin{aligned}
\mathbb{E}_{\mathcal{E}_{\epsilon}^S,\mathcal{E}_{\epsilon}^D}\Big[\big|\mathcal{B}_0\big|\Big]&=\sum_{i=1}^{K}\mathbb{P}(i\textrm{~is~not~eliminated from }\mathcal{B}_0) \\
&= |\mathcal{B}^*| + \sum_{i\not\in\mathcal{B}^*}\frac{1}{K^2}\le |\mathcal{B}^*|+o(1),
\end{aligned}
\end{equation}
as $K\to\infty$.
\section{Proof of Theorem 6}\label{pfupperboundpsi}
The basic structure of the proof follows that in \cite{auer2010ucb} and \cite{buccapatnam2014stochastic}. Define $\mathcal{Q}=\{i\in \mathcal{V}':\Delta_i>4\epsilon\}$ (note that $\mathcal{V}'=\mathcal{B}_0$). For each $i\in\mathcal{Q}$, let
\begin{align}
	m_i=\min\left\{m\ge 0:2^{-m}<\frac{\sqrt{2\lambda}(\Delta_i-3\epsilon)}{4}\right\}.
\end{align}
One can easily verify that
\begin{align}
	\min\left\{\frac{1}{2},\frac{\sqrt{2\lambda}(\Delta_i-3\epsilon)}{8}\right\}\le 2^{-m_i}<\frac{\sqrt{2\lambda}(\Delta_i-3\epsilon)}{4},
\end{align}
and
\begin{align}
	\max_{i\in\mathcal{Q}}m_i\le \max\left\{1,\left\lceil\log_2\left(\frac{8}{\sqrt{2\lambda}\epsilon}\right)\right\rceil\right\}.
\end{align}

We first consider suboptimal arms in $\mathcal{Q}$ and analyze regret in the following cases:

\noindent (a) Some suboptimal arm $i\in\mathcal{Q}$ is not eliminated in round $m_i$ (or before) with an optimal arm $i_{\max}\in \mathcal{B}_{m_i}$.
	
	Consider $i\in\mathcal{Q}$, note that if
	\begin{equation}\label{pfthm5ieq1}
		\begin{aligned}
		\frac{\sum_{j\in\mathcal{N}'[i]}\bar{x}_j(m)\tau_j(m)}{\sum_{j\in\mathcal{N}'[i]}\tau_j(m)}\le&\frac{\sum_{j\in\mathcal{N}'[i]}\mu_j\tau_j(m)}{\sum_{j\in\mathcal{N}'[i]}\tau_j(m)}+\sqrt{\frac{\log(T\tilde{\Delta}_m^2)}{2\sum_{j\in\mathcal{N}'[i]}\tau_j(m)}},
		\end{aligned}
	\end{equation}
	and
	\begin{equation}\label{pfthm5ieq2}
		\begin{aligned}
		\frac{\sum_{j\in\mathcal{N}'[i_{\max}]}\bar{x}_j(m)\tau_j(m)}{\sum_{j\in\mathcal{N}'[i_{\max}]}\tau_j(m)}\ge&\frac{\sum_{j\in\mathcal{N}'[i_{\max}]}\mu_j\tau_j(m)}{\sum_{j\in\mathcal{N}'[i_{\max}]}\tau_j(m)}-\sqrt{\frac{\log(T\tilde{\Delta}_m^2)}{2\sum_{j\in\mathcal{N}'[i_{\max}]}\tau_j(m)}},
		\end{aligned}
	\end{equation}
	hold for $m=m_i$, then under the assumption that $i_{\max},i\in \mathcal{B}_{m_i}$, we have
	\begin{equation}
		\begin{aligned}
			\sqrt{\frac{\log(T\tilde{\Delta}_{m_i}^2)}{2\sum_{j\in\mathcal{N}'[i]}\tau_j(m_i)}}\le&\sqrt{\frac{\log(T\tilde{\Delta}_{m_i}^2)}{2\lambda\sum_{j\in\mathcal{N}'[i]}z_j\log(T\tilde{\Delta}_{m_i}^2)/\tilde{\Delta}_{m_i}^2}}
			\le \frac{\tilde{\Delta}_{m_i}}{\sqrt{2\lambda}}<\frac{\Delta_i-3\epsilon}{4},
		\end{aligned}
	\end{equation}	 
	\begin{equation}
		\begin{aligned}
			\sqrt{\frac{\log(T\tilde{\Delta}_{m_i}^2)}{2\sum_{j\in\mathcal{N}'[i_{\max}]}\tau_j(m_i)}}\le&\sqrt{\frac{\log(T\tilde{\Delta}_{m_i}^2)}{2\lambda\sum_{j\in\mathcal{N}'[i_{\max}]}z_j\log(T\tilde{\Delta}_{m_i}^2)/\tilde{\Delta}_{m_i}^2}}
			\le\frac{\tilde{\Delta}_{m_i}}{\sqrt{2\lambda}}<\frac{\Delta_i-3\epsilon}{4}.
		\end{aligned}
	\end{equation}	
	Thus,
	\begin{equation}
		\begin{aligned}
			&\frac{\sum_{j\in\mathcal{N}'[i]}\bar{x}_j(m_i)\tau_j(m_i)}{\sum_{j\in\mathcal{N}'[i]}\tau_j(m_i)}+\sqrt{\frac{\log(T\tilde{\Delta}_{m_i}^2)}{2\sum_{j\in\mathcal{N}'[i]}\tau_j(m_i)}}+\epsilon\\
			\le&\frac{\sum_{j\in\mathcal{N}'[i]}\mu_j\tau_j(m_i)}{\sum_{j\in\mathcal{N}'[i]}\tau_j(m_i)}+\frac{\Delta_i-3\epsilon}{2}+\epsilon\\
			\le&\mu_i+2\epsilon+\frac{\Delta_i-3\epsilon}{2}=\mu_{i_{\max}}-\epsilon-\frac{\Delta_i-3\epsilon}{2}\\
			\le&\frac{\sum_{j\in\mathcal{N}'[i_{\max}]}\mu_j\tau_j(m_i)}{\sum_{j\in\mathcal{N}'[i_{\max}]}\tau_j(m_i)}-2\sqrt{\frac{\log(T\tilde{\Delta}_{m_i}^2)}{2\sum_{j\in\mathcal{N}'[i_{\max}]}\tau_j(m_i)}}\\
			\le &\frac{\sum_{j\in\mathcal{N}'[i_{\max}]}\bar{x}_j(m_i)\tau_j(m_i)}{\sum_{j\in\mathcal{N}'[i_{\max}]}\tau_j(m_i)}-\sqrt{\frac{\log(T\tilde{\Delta}_{m_i}^2)}{2\sum_{j\in\mathcal{N}'[i_{\max}]}\tau_j(m_i)}}.
		\end{aligned}
	\end{equation}
	
	Therefore, arm $i$ will be eliminated in round $m_i$. Using Hoeffding's inequality, we know that for every $m=0,1,2,...,$
	\begin{align}\label{pfthm5ieq3}
		\mathbb{P}\{(\ref{pfthm5ieq1}) \textrm{ doesn't hold}\}\le\frac{1}{T\tilde{\Delta}_{m}^2},
	\end{align}
	\begin{align}\label{pfthm5ieq4}
		\mathbb{P}\{(\ref{pfthm5ieq2}) \textrm{ doesn't hold}\}\le\frac{1}{T\tilde{\Delta}_m^2}.
	\end{align}
	As a consequence, the probability that a suboptimal arm $i$ is not eliminated in round $m_i$ (or before) by an optimal arm is bounded by $2/(T\tilde{\Delta}_{m_i}^2)$ and thus, the regret contributed by case (a) is upper bounded by
	\begin{align}
		R_a(T)\le\sum_{i\in\mathcal{Q}}\frac{2\Delta_i}{\tilde{\Delta}_{m_i}^2}=O(|\mathcal{V}'|).
	\end{align}
	
\noindent (b) The last remaining optimal arm $i_{\max}$ is eliminated by some suboptimal arm $i$ in some round $m^*<m_f$.
	
	Note that if (\ref{pfthm5ieq1}) and (\ref{pfthm5ieq2}) hold at $m=m^*$, then
	\begin{equation}
		\begin{aligned}
			&\frac{\sum_{j\in\mathcal{N}'[i_{\max}]}\bar{x}_j(m^*)\tau_j(m^*)}{\sum_{j\in\mathcal{N}'[i_{\max}]}\tau_j(m^*)}+\sqrt{\frac{\log(T\tilde{\Delta}_{m^*}^2)}{2\sum_{j\in\mathcal{N}'[i_{\max}]}\tau_j(m^*)}}+\epsilon\\
			\ge&\frac{\sum_{j\in\mathcal{N}'[i_{\max}]}\mu_j\tau_j(m^*)}{\sum_{j\in\mathcal{N}'[i_{\max}]}\tau_j(m^*)}+\epsilon\ge\frac{\sum_{j\in\mathcal{N}'[i]}\mu_j\tau_j(m^*)}{\sum_{j\in\mathcal{N}'[i]}\tau_j(m^*)}\\
			\ge&\frac{\sum_{j\in\mathcal{N}'[i]}\bar{x}_j(m^*)\tau_j(m^*)}{\sum_{j\in\mathcal{N}'[i]}\tau_j(m^*)}-\sqrt{\frac{\log(T\tilde{\Delta}_{m^*}^2)}{2\sum_{j\in\mathcal{N}'[i]}\tau_j(m^*)}}.
		\end{aligned}
	\end{equation}
	Therefore, the optimal arm $i_{\max}$ will not be eliminated in round $m^*$. Consequently, by (\ref{pfthm5ieq3}) and (\ref{pfthm5ieq4}) the probability that $i_{\max}$ is eliminated by a suboptimal arm $i$ in round $m^*$ is upper bounded by $2/(T\tilde{\Delta}_{m^*}^2)$. Thus the regret contributed by case (b) is upper bounded by
	\begin{equation}
		\begin{aligned}
			R_{b}(T)&\le\sum_{m^*=0}^{m_f}\sum_{i\in\mathcal{V}'\setminus\mathcal{A}}\frac{2}{T\tilde{\Delta}_{m^*}^2}\max_{j\in\mathcal{V}'\setminus\mathcal{A}}\Delta_jT\\
			&\le \sum_{i\in\mathcal{V}'\setminus\mathcal{A}}\sum_{m^*=0}^{m_f}\frac{2}{2^{-2m^*}}\\
			&= \sum_{i\in\mathcal{V}'\setminus\mathcal{A}}\frac{2(2^{2m_f+2}-1)}{3}\\
			&\le \sum_{i\in\mathcal{V}'\setminus\mathcal{A}}\frac{2(16\cdot (\frac{8}{\sqrt{2\lambda}\epsilon})^2-1)}{3}=O(|\mathcal{V}'|).
		\end{aligned}
	\end{equation}
	
\noindent(c) Each arm $i\in\mathcal{Q}$ is eliminated in (or before) round $m_i$. Note that arm $i$ will be played until the last arm in $\mathcal{N}'[i]$ is eliminated or the last round $m_f\le \lceil\log_2(8/\sqrt{2\lambda}\epsilon)\rceil$. Thus,
	\begin{align}
		R_{c}(T)\le\sum_{i\in\mathcal{Q}}\Delta_iz_i\frac{\lambda\log(T\tilde{\Delta}_{m_i'}^2)}{\tilde{\Delta}_{m_i'}^2},
	\end{align}
	where
	\begin{align}
		m_i'\le \min\left\{\max_{j\in\mathcal{N}'[i]}m_j,\left\lceil\log_2\left(\frac{8}{\sqrt{2\lambda}\epsilon}\right)\right\rceil\right\}.
	\end{align}

Therefore, the regret contributed by arms in $\mathcal{Q}$ is upper bounded by
\begin{align}
	R_{\mathcal{Q}}(T)\le\sum_{i\in\mathcal{Q}}\Delta_iz_i\frac{32\log(T\hat{\Delta}_i^2)}{\hat{\Delta}_i^2}+O(|\mathcal{V}'|),
\end{align}
where
\begin{align}
	\hat{\Delta}_i=\max\{\min_{j\in\mathcal{N}'[i]}\Delta_j-3\epsilon,\epsilon\}.
\end{align}

Moreover, for each arm $j\in \mathcal{V}'\setminus(\mathcal{Q}\cup\mathcal{A})$, if $j$ is eliminated before $m_f$, then the number of times that arm $j$ has been played up to time $T$ is upper bounded by
\begin{align}
	\mathbb{E}[\tau_j(T)]\le\frac{32z_j\log(T\epsilon^2)}{\epsilon^2}.
\end{align}
Otherwise, $j$ will only be played when $L_j(t)>L_{i_{\max}}(t)$ if $i_{\max}$ is not eliminated. Since we have already shown in case (b) that the regret caused by the fact that $i_{\max}$ is eliminated before $m_f$ is upper bounded by $O(|\mathcal{V}'|)$, we assume that $i_{\max}$ is not eliminated after $m_f$ rounds. Using an argument similar to that in the proof of Theorem \ref{thmupperbound}, we have
\begin{align}
	\mathbb{E}[\tau_j(T)]\le\frac{8\log T}{\Delta_i^2}.
\end{align}
Note that the constant before $\log T$ becomes $8/\Delta_i^2$ instead of $32/\Delta_i^2$ because the reward distributions are assumed to be $1/2$ sub-Gaussian. Thus, the total regret of LSDT-PSI is upper bounded by
\begin{equation}
		\begin{aligned}
			R(T)&\le\sum_{j\in\mathcal{V}'\setminus(\mathcal{Q\cup A})}\Delta_j\max\left\{\frac{8\log T}{\Delta_j^2},\frac{32z_j\log(T\epsilon^2)}{\epsilon^2}\right\}+\sum_{i\in \mathcal{Q}}\Delta_iz_i\frac{32\log(T\hat{\Delta}_i^2)}{\hat{\Delta}_i^2}+O(|\mathcal{V}'|).
		\end{aligned}
\end{equation}

\section{Proof of Corollary 1}\label{pfoptpsi}

According to Theorem \ref{upperboundpsi}, we have that for every realization of the partially revealed UIG $\mathcal{G}_{\epsilon}=(\mathcal{V},\mathcal{E}_{\epsilon}^{S},\mathcal{E}_{\epsilon}^D)$, the expected regret of LSDT-PSI is upper bounded by 
\begin{align}
	O\Big((|\mathcal{V}\setminus(\mathcal{Q}\cup\mathcal{A})|+\sum_{i\in\mathcal{Q}}z_i)\log T\Big),
\end{align}
where $\mathcal{Q}=\{i\in\mathcal{V}':\Delta_i>4\epsilon\}$. Let $C_{\textrm{PSI}}=|\mathcal{V}\setminus(\mathcal{Q}\cup\mathcal{A})|+\sum_{i\in\mathcal{Q}}z_i$, we need to show that 
\begin{align}\label{expectation_observation}
	\mathbb{E}_{\mathcal{E}_{\epsilon}^S,\mathcal{E}_{\epsilon}^{D}}[C_{\textrm{PSI}}]\le \alpha(1+|\mathcal{B}_{i_{\max}}^*\setminus\mathcal{A}|),
\end{align}
where $\alpha$ is a constant independent of $T$ and the size of the action space. We simplify the notation of expectation in (\ref{expectation_observation}) to $\mathbb{E}[C_{\textrm{PSI}}]$. Note that 
\begin{align}
	\mathbb{E}[C_{\textrm{PSI}}]=\mathbb{E}[C_{\textrm{PSI}}\big| F]\mathbb{P}(F)+\mathbb{E}[C_{\textrm{PSI}}\big|\bar{F}]\mathbb{P}(\bar{F})
\end{align}
where $F=\{\textrm{every~}i\not\in\mathcal{B}^*\textrm{~is eliminated from }\mathcal{B}_0\}$.

From Theorem 5,  the probability that every arm $i\not\in\mathcal{B}^*$ is not eliminated is upper bounded by $1/K^2$, therefore, we have
\begin{align}
\mathbb{P}(\bar{F})\le\sum_{i\not\in\mathcal{B}^{*}}\frac{1}{K^2}\le\frac{1}{K}.
\end{align}
It is clear that $\mathbb{E}[C_{\textrm{PSI}}\big|\bar{F}]\le K$ and $\mathbb{P}(F)\le 1$, therefore it suffices to show that 
\begin{align}\label{pfthm6eq2}
\mathbb{E}[C_{\textrm{PSI}}\big| F]\le \alpha(1+|\mathcal{B}_{i_{\max}}^*\setminus\mathcal{A}|)-1.
\end{align}

Notice that given $F$, every arm out of $\mathcal{B}^*$ is eliminated. Besides, we assumed that $\mathcal{B}_{i_{\min}}^*\subseteq\mathcal{Q}$. Thus, 
\begin{align}\label{pfthm6eq3}
\mathbb{E}[C_{\textrm{PSI}}\big| F]&=\mathbb{E}\Big[\sum_{i\in\mathcal{B}_{i_{\min}}^*}z_i\Big|F\Big]+|\mathcal{B}_{i_{\max}}^*\setminus\mathcal{A}|.
\end{align}
Moreover, it is clear that no matter what realization of the revealed UIG is, every arm $i\in\mathcal{B}_{i_{\min}}^*$ will not be eliminated. The fact that every arm $i\not\in\mathcal{B}^*$ is eliminated only affects the probabilistic assumptions on edges with at least one end point not in $\mathcal{B}^*$. Therefore, we can claim that conditioned on $F$, every type-S edge between arms in $\mathcal{B}_{i_{\min}}^*$ is still observed independently with probability $p_S$ and hence $\mathbb{E}[\sum_{i\in\mathcal{B}_{i_{\min}}^*}z_i|F]$ is equal to the expectation of the optimal value $C_L$ of the following linear program:
\begin{equation}
\begin{aligned}
C_L=&\min_{z_1,...,z_L}\sum_{i=1}^{L}z_i,\\
s.t.& ~~~z_i+\sum_{j\neq i}z_j\mathbb{I}\{(i,j)\in\mathcal{E}\}\ge 1,\forall i,\\
&~~~z_i\ge 0,\forall i.
\end{aligned}
\end{equation}
where $L=|\mathcal{B}_{i_{\min}}^*|$ and $\forall i,j\in[L],~(i,j)\in\mathcal{E}$ happens independently with probability $p=p_S$.
We show that $\mathbb{E}[C_L]\le c_p$ where $c_p$ is a constant only related to $p_S$. We consider a solution $z_i^*=\frac{2}{pL},\forall i\in[L]$. We first show that $\{z_i^*\}$ is in the feasible region with probability at least $1-1/L$. Define $A=\{\{z_i^*\}\textrm{~is feasible}\}$, then 
\begin{align}
\mathbb{P}(\bar{A})\le&\sum_{i=1}^{L}\mathbb{P}\Big(z_i^*+\sum_{j\neq i}z_j^*\mathbb{I}\{(i,j)\in\mathcal{E}\}< 1\Big)\\
=&\sum_{i=1}^{L}\mathbb{P}\Bigg(\frac{1}{L-1}\sum_{j\neq i}\mathbb{I}\{(i,j)\in\mathcal{E}\}<\frac{Lp-2}{2(L-1)}\Bigg)\\
\le&\sum_{i=1}^{L}\mathbb{P}\Bigg(\frac{1}{L-1}\sum_{j\neq i}\mathbb{I}\{(i,j)\in\mathcal{E}\}<\frac{p}{2}\Bigg)\\
\le&\sum_{i=1}^{L}\mathbb{P}\Bigg(\Big(\frac{1}{L-1}\sum_{j\neq i}\mathbb{I}\{(i,j)\in\mathcal{E}\}\Big)-p<-\frac{p}{2}\Bigg)\\
\le&\sum_{i=1}^{L}e^{-2(L-1)\frac{p^2}{4}}\label{pfthm6eq1}
\end{align}
Note that the last inequality is derived through the Hoeffding inequality. If $p>\sqrt{\frac{4\log L}{L-1}}$,\footnote{Without loss of generality, we assume that $\sqrt{\frac{4\log L}{L-1}}<1$. Otherwise, $\mathbb{E}[C_L]$ is trivially upper bounded by a constant independent of $L$.} the RHS of (\ref{pfthm6eq1}) is upper bounded by $1/L$. Since it is obvious that $C_L\le L$, we have
\begin{align}
	\mathbb{E}[C_L]&=\mathbb{E}[C_L|A]\mathbb{P}(A)+\mathbb{E}[C_L|\bar{A}]\mathbb{P}(\bar{A})\\
	&\le\sum_{i=1}^{L}z_i^* + 1=\frac{2}{p}+1=\beta_{p,1}.
\end{align}
On the other hand, if $p\le \sqrt{\frac{4\log L}{L-1}}$ (this is equivalent to that $L$ is smaller than a constant that only depends on $p$, we denote the constant as $\beta_{p,2}$), we have $\mathbb{E}[C_L]\le L\le \beta_{p,2}$. In summary, if we let $c_p=\max(\beta_{p,1},\beta_{p,2})$, we have that $\mathbb{E}[C_L]\le c_p$. Finally, we let $\alpha=c_p+1$ and combining with (\ref{pfthm6eq3}), we get the desired result in (\ref{pfthm6eq2}).


\ifCLASSOPTIONcaptionsoff
  \newpage
\fi

\end{document}